\newcommand{\andwith}{\quad\quad}
\patchcmd{\abstract}{\small}{}{}{}
\newcommand{\stam}[1]{}
\newcommand{\secref}[1]{Sec.~\ref{#1}}
\newcommand{\figref}[1]{Fig.~\ref{#1}}
\renewcommand{\eqref}[1]{Eq.~(\ref{#1})}
\newcommand{\lemref}[1]{Lemma~\ref{#1}}
\newcommand{\thmref}[1]{Thm.~\ref{#1}}
\DeclareMathOperator{\rank}{rank}
\DeclareMathOperator{\cl}{cl}
\DeclareMathOperator{\spn}{span}
\newtheorem{theorem}{Theorem}
\newtheorem{assumption}{Assumption}
\newtheorem{definition}{Definition}
\newtheorem{lemma}{Lemma}
\newcommand{\onefunc}{\mathds{1}}
\newcommand{\din}{{d_{\text{in}}}}
\newcommand{\dhid}{{d_{\text{hidden}}}}
\newcommand{\dout}{{d_{\text{out}}}}
\newcommand{\norm}[2][]{{\left\|{#2}\right\|_{#1}}}
\newcommand{\snorm}[1]{\|#1\|} 
\newcommand{\set}[1]{\left\{{#1}\right\}}
\newcommand{\ba}{\mathbf{a}}
\newcommand{\bx}{\mathbf{x}}
\newcommand{\bw}{\mathbf{w}}
\newcommand{\bb}{\mathbf{b}}
\newcommand{\bu}{\mathbf{u}}
\newcommand{\bv}{\mathbf{v}}
\newcommand{\bd}{\mathbf{d}}
\newcommand{\bh}{\mathbf{h}}
\newcommand{\by}{\mathbf{y}}
\newcommand{\bs}{\mathbf{s}}
\newcommand{\btheta}{{\boldsymbol{\theta}}}
\newcommand{\ca}{{\cal A}}
\newcommand{\cd}{{\cal D}}
\newcommand{\cw}{{\cal W}}
\newcommand{\cf}{{\cal F}}
\newcommand{\cs}{{\cal S}}
\newcommand{\calr}{{\cal R}}
\newcommand{\reals}{{\mathbb R}}
\newcommand{\zero}{{\mathbf{0}}}
\title{Implicit Regularization Towards Rank Minimization\\in ReLU Networks}
\author{
    Nadav Timor \andwith Gal Vardi \andwith Ohad Shamir\\
    Weizmann Institute of Science\\
    \texttt{\{nadav.timor,gal.vardi,ohad.shamir\}@weizmann.ac.il}
}
\date{\printdayoff\today}
\begin{document}

\maketitle

\begin{abstract}
    \normalsize
    We study the conjectured relationship between the implicit~regularization in neural~networks, trained with gradient-based methods, and rank~minimization of their weight~matrices. Previously, it was proved that for linear~networks (of depth~$2$ and vector-valued~outputs), gradient~flow~(GF) w.r.t. the square~loss acts as a rank~minimization~heuristic. However, understanding to what extent this generalizes to nonlinear~networks is an open~problem. In this paper, we focus on nonlinear~ReLU~networks, providing several new positive and negative results. On the negative side, we prove (and demonstrate empirically) that, unlike the linear~case, GF on ReLU~networks may no~longer tend to minimize~ranks, in a rather~strong sense (even approximately, for ``most'' datasets of size~$2$). On the positive side, we reveal that ReLU~networks of sufficient~depth are provably biased towards low-rank~solutions in several reasonable settings.
\end{abstract}

\section{Introduction}\label{Introduction}

A central puzzle in the theory~of~deep~learning is how neural~networks generalize even when trained without any explicit~regularization, and when there are far more learnable~parameters than training~examples. In such an underdetermined optimization~problem, there are many global~minima with zero~training~loss, and gradient~descent seems to prefer solutions that generalize well (see~\cite{zhang2017understanding}). Hence, it is believed that gradient~descent induces an {\em implicit~regularization}~(or~\emph{implicit~bias})~\citep{neyshabur2015search,neyshabur2017exploring}, and characterizing this regularization/bias has been a subject of extensive research.

Several works in recent years studied the relationship between the implicit~regularization in \emph{linear}~neural~networks and rank~minimization.
A main focus is on the matrix~factorization~problem, which corresponds to training a depth-2~linear~neural~network with multiple outputs w.r.t. the square~loss, and is considered a well-studied test-bed for studying implicit~regularization in deep~learning. 
\cite{gunasekar2018implicit} initially conjectured that the implicit~regularization in matrix~factorization can be characterized by the nuclear~norm of the corresponding linear~predictor. 
This conjecture was further studied in a string of works (e.g., \cite{belabbas2020implicit,arora2019implicit,razin2020implicit}) and was formally refuted by \cite{li2020towards}.
\cite{razin2020implicit} conjectured that the implicit~regularization in matrix~factorization can be explained by rank~minimization, and also hypothesized that some notion of rank~minimization may be key to explaining generalization in deep~learning. \cite{li2020towards} established evidence that the implicit~regularization in matrix~factorization is a heuristic for rank~minimization.
\cite{razin2021implicit} studied implicit~regularization in tensor~factorization (a generalization of matrix~factorization). They demonstrated, both theoretically and empirically, implicit~bias towards low-rank~tensors. Going beyond factorization~problems,  \cite{ji2018gradient, ji2020directional} showed that in linear~networks of output~dimension~$1$, gradient~flow~(GF) w.r.t. exponentially-tailed~classification~losses converges to networks where the weight~matrix of every layer is of rank~$1$.

However, once we move to nonlinear~neural~networks (which are by far the more common in~practice), things are less~clear. 
Empirically, a series of works studying neural~network~compression (cf. \cite{denton2014exploiting,yu2017compressing,alvarez2017compression,arora2018stronger,tukan2020compressed}) showed that replacing the weight~matrices by low-rank~approximations results in only a small drop in accuracy. This suggests that the weight~matrices in~practice are not too far from being low-rank. However, whether they provably behave this way remains unclear.

In this work we consider fully-connected~nonlinear~networks employing the popular ReLU~activation~function, and study whether GF is biased towards networks where the weight~matrices have low~ranks. On the negative side, we show that already for small~(depth~and~width~$2$)~ReLU~networks, there is~no rank-minimization~bias in a rather strong sense. On the positive side, for deeper~and~possibly~wider~overparameterized~networks, we identify reasonable settings where GF is biased towards low-rank~solutions. In more details, our contributions are as follows:
\begin{itemize}[itemsep=3pt,parsep=3pt]
	\item We begin by considering depth-$2$~width-$2$~ReLU~networks with multiple outputs, trained with the square~loss.  \cite{li2020towards} gave evidence that in \emph{linear} networks with the same architecture, the implicit~bias of GF can be characterized as a heuristic for rank~minimization. In contrast, we show that in ReLU~networks, the situation is quite different: Specifically, we show that GF does~not converge to a low-rank~solution, already for the simple case of datasets of size~$2$, $\{(\bx_1,\by_1),(\bx_2,\by_2)\} \subseteq \mathbb{S}^1 \times \mathbb{S}^1$, whenever the angle between $\bx_1$ and $\bx_2$ is in $(\pi/2,\pi)$ and $\by_1,\by_2$ are linearly~independent. Thus, rank~minimization does~not occur even if we just consider ``most'' datasets of this size. Moreover, we show that with at least constant probability, the solutions that GF converges to are not even close to have low~rank, under any reasonable approximate rank metric. We also demonstrate these results empirically.
	
	\item Next, for ReLU~networks that are overparameterized in~terms of depth and have width~$\ge 2$, we identify interesting settings in which GF is biased towards low~ranks:
	\begin{itemize}[leftmargin=*,itemsep=3pt,parsep=3pt, topsep=3pt, partopsep=3pt]
    	\item First, we consider ReLU~networks trained w.r.t. the square~loss. We show that for sufficiently deep networks, if GF converges to a network that attains zero~loss and minimizes the $\ell_2$~norm of the parameters, then the average~ratio between the spectral and the Frobenius~norms of the weight~matrices is close~to~$1$. Since the squared inverse of this ratio is the \emph{stable~rank} (which is a continuous approximation of the rank, and equals~$1$ if and only if the matrix has rank~$1$), the result implies a bias towards low~ranks.
    	While GF in ReLU~networks w.r.t. the square~loss is not~known to be biased towards solutions that minimize the $\ell_2$~norm, in practice it is common to use explicit $\ell_2$~regularization, which encourages norm~minimization. Thus, our result suggests that GF in deep networks trained with the square~loss and explicit $\ell_2$~regularization encourages rank~minimization.
    	
    	\item Shifting our attention to binary~classification~problems, we consider ReLU~networks trained with exponentially-tailed~classification~losses. By \cite{lyu2019gradient}, GF in such networks is biased towards networks that maximize the $\ell_2$ margin. We show that for sufficiently deep networks, maximizing the margin implies rank~minimization, where the rank is measured by the ratio between the norms as in the former case.
	\end{itemize}
\end{itemize}

\subsection*{Additional Related Work}

The implicit~regularization in matrix~factorization and linear~neural~networks with the square~loss was extensively studied, as a first step toward understanding implicit~regularization in more complex models (see, e.g., \cite{gunasekar2018implicit,razin2020implicit,arora2019implicit,belabbas2020implicit,eftekhari2020implicit,li2018algorithmic,ma2018implicit,woodworth2020kernel,gidel2019implicit,li2020towards,yun2020unifying,azulay2021implicit,razin2021implicit}). As we already discussed, some of these works showed bias toward low~ranks.

The implicit~regularization in nonlinear~neural~networks with the square~loss was studies in several works.
\cite{oymak2019overparameterized} showed that under some assumptions, gradient~descent in certain nonlinear models is guaranteed to converge to a zero-loss~solution with a bounded $\ell_2$~norm.
\cite{williams2019gradient} and \cite{jin2020implicit} studied the dynamics and implicit~bias of gradient~descent in wide~depth-$2$~ReLU~networks with input~dimension~$1$. 
\cite{vardi2021implicit} and \cite{azulay2021implicit} studied the implicit~regularization in single-neuron~networks.
In particular, \cite{vardi2021implicit} showed that in single-neuron~networks and single-hidden-neuron~networks with the ReLU~activation, the implicit~regularization cannot be expressed by any non-trivial~regularization~function. Namely, there is~no non-trivial~regularization~function~$\calr(\btheta)$, where $\btheta$ are the parameters of the model, such that if GF with the square~loss converges to a global~minimum, then it is a global~minimum that minimizes~$\calr$. However, this negative result does~not imply that GF is~not implicitly~biased towards low-rank~solutions, for two reasons. First, bias toward low~ranks would not have implications in the case of networks of width~$1$ that these authors studied, and hence it would not contradict their negative result. Second, their result rules~out the existence of a non-trivial~regularization~function which expresses the implicit~bias for all possible datasets and initializations, but it does~not rule~out the possibility that GF acts as a heuristic for rank~minimization, in the sense that it minimizes the ranks for ``most'' datasets and initializations.

The implicit~bias of neural~networks in classification tasks was also widely studied in recent years.
\cite{soudry2018implicit} showed that gradient~descent on linearly-separable~binary~classification~problems with exponentially-tailed~losses, converges to the maximum~$\ell_2$-margin~direction. This analysis was extended to other loss~functions, tighter convergence~rates, non-separable~data, and variants of gradient-based~optimization~algorithms \citep{nacson2019convergence,ji2018risk,ji2020gradient,gunasekar2018characterizing,shamir2021gradient,ji2021characterizing}.
\cite{lyu2019gradient} and \cite{ji2020directional} showed that GF on homogeneous~neural~networks, with exponentially-tailed~losses, converges~in~direction to a KKT~point of the maximum-margin~problem in the parameter~space. Similar results under stronger assumptions were previously obtained in \cite{nacson2019lexicographic,gunasekar2018bimplicit}. 
\cite{vardi2021margin} studied in which settings this KKT~point is guaranteed to be a global/local~optimum of the maximum-margin~problem.
The implicit~bias in fully-connected~linear~networks was studied by \cite{ji2020directional,ji2018gradient,gunasekar2018bimplicit}. As already mentioned, these results imply that GF minimizes the ranks of the weight~matrices in linear~fully-connected~networks.
The implicit~bias in diagonal and convolutional~linear~networks was studied in \cite{gunasekar2018bimplicit,moroshko2020implicit,yun2020unifying}. 
The implicit~bias in infinitely-wide~two-layer~homogeneous~neural~networks was studied in \cite{chizat2020implicit}. 

{\bf Organization.} In~\secref{Preliminaries} we provide necessary notations and definitions. In~\secref{sec:negative results} we state our negative results for depth-$2$~networks. In~\secref{sec:positive results ell2} and~\ref{sec:positive results exp} we state our positive results for deep~ReLU~networks. In~\secref{sec:ideas} we describe the ideas for the proofs of the main theorems, with all formal proofs deferred to the appendix.

\section{Preliminaries}\label{Preliminaries}

\paragraph{Notations.}

We use boldface letters to denote vectors.
For~$\bx \in \mathbb{R}^d$ we denote by~$\|\bx\|$ the Euclidean~norm. For a matrix~$X$ we denote by~$\norm{X}_F$ the Frobenius~norm and by~$\norm{X}_\sigma$ the spectral~norm.
We denote~$\reals_+ := \{x \in \reals: x \geq 0\}$.
For an integer~$d \geq 1$ we denote~$[d]:=\{1,\ldots,d\}$. 
The angle between a pair of vectors~\(\bu_1, \bu_2 \in \mathbb R^d\) is~\hbox{$\measuredangle(\bu_1, \bu_2) := \arccos \left( \frac {\bu_1^\top \bu_2} {\|\bu_1\| \cdot \|\bu_2\|} \right) \in [ 0, \pi]$}. 
The unit~\(d\)-sphere is~\(\mathbb{S}^d := \{\bu \in \mathbb{R}^{d+1} \mid \|\bu\| = 1\} \).
An open~\(d\)-ball that is centered at the origin is denoted by~\(B_d(\epsilon) := \{ \bu \in \mathbb{R}^d \mid \|\bu\| < \epsilon \}\) for some~\(\epsilon \in \mathbb{R}\). The closure of a set~\(A \in \mathbb{R}^d\), denoted as~\(\cl A\), is the intersection of all closed~sets containing~\(A\). The boundary of~\(A\) is~\(\partial A := \left( \cl A \right) \cap \left( \cl \left( \mathbb{R}^d \setminus A \right) \right)\).

\paragraph{Neural networks.}

A {\em fully-connected~neural~network} $N_\btheta$ of depth~$k \geq 2$ is parameterized by a collection~$\btheta := [W^{(l)}]_{l=1}^k$ of weight~matrices, such that for every layer~$l \in [k]$ we have \hbox{$W^{(l)} \in \reals^{d_l \times d_{l-1}}$}. Thus, $d_l$ denotes the number of neurons in the $l$-th~layer, i.e., the \emph{width~of~the~layer}. We denote by~$\din:=d_0$, $\dout:=d_k$ the input and output~dimensions. 
The neurons in layers~$[k-1]$ are called \emph{hidden~neurons}.
A fully-connected~network computes a function~$N_\btheta: \reals^\din \to \reals^\dout$ defined recursively as follows. For an input~$\bx \in \reals^\din$ we set $\bh'_0 := \bx$, and define for every~$j \in [k-1]$ the input to the $j$-th~layer as~$\bh_j := W^{(j)} \bh'_{j-1}$, and the output of the $j$-th~layer as $\bh'_j := \sigma(\bh_j)$, where $\sigma:\reals \to \reals$ is an activation~function that acts coordinate-wise on vectors. 
In this work we focus on the ReLU~activation~$\sigma(z) = \max\{0,z\}$.
Finally, we define~$N_\btheta(\bx) := W^{(k)} \bh'_{k-1}$. Thus, there is~no activation in the output~neurons. 
The \emph{width~of~the~network}~$N_\btheta$ is the maximal~width of its layers, i.e.,~$\max_{l \in [k]}d_l$.
We sometimes apply the activation~function $\sigma$ also on matrices, in which case it acts entry-wise.
The parameters~$\btheta$ of the neural~network are given by a collection of matrices, but we often view~$\btheta$ as the vector obtained by concatenating the matrices in the collection. Thus, $\norm{\btheta}$~denotes the $\ell_2$~norm of the vector~$\btheta$.

We often consider depth-$2$~networks. For matrices~$W \in \reals^{d_1 \times d_0}$ and~$V \in \reals^{d_2 \times d_1}$ we denote by~$N_{W,V}$ the depth-$2$~ReLU~network where~$W^{(1)}=W$ and~$W^{(2)}=V$. We denote the $\bw_1^\top,\ldots,\bw_{d_1}^\top$ the rows of~$W$, namely, the incoming~weight~vectors to the neurons in the hidden~layer, and by~$\bv_1,\ldots,\bv_{d_1}$ the columns of~$V$, namely, the outgoing~weight~vectors from the neurons in the hidden~layer.

Let $\bx_1,\ldots,\bx_n$ be $n$ inputs, and let $X \in \reals^{\din \times n}$ be a matrix whose columns are $\bx_1,\ldots,\bx_n$. We denote by $N_\btheta(X) \in \reals^{\dout \times n}$ the matrix whose $i$-th column is $N_\btheta(\bx_i)$.

\paragraph{Optimization problem and gradient flow (GF).}

Let $S = \{(\bx_i,\by_i)\}_{i=1}^n \subseteq \reals^\din \times \reals^\dout$ be a training dataset. 
We often represent the dataset by matrices $(X,Y) \in \reals^{\din \times n} \times \reals^{\dout \times n}$.
For a neural~network~$N_\btheta$ we consider empirical-loss~minimization w.r.t. the square~loss. 
Thus, the objective is given by:
\begin{align} \label{eq:objective}
	 L_{X,Y}(\btheta) 
	 := \frac{1}{2} \sum_{i=1}^n \norm{N_\btheta(\bx_i) - \by_i}^2
	 = \frac{1}{2} \norm{N_\btheta(X) - Y}_F^2~.
\end{align}
We assume that the data is realizable, that is, $\min_\btheta L(\btheta)=0$.
Moreover, we focus on settings where the network is {\em overparameterized}, in the sense that $L$ has multiple (or even infinitely many) global minima.

We consider \emph{gradient~flow~(GF)} on the objective given in \eqref{eq:objective}. This setting captures the behavior of gradient~descent with an infinitesimally small step~size. Let~$\btheta(t)$ be the trajectory of GF. Starting from an initial point~$\btheta(0)$, the dynamics of~$\btheta(t)$ is given by the differential equation~$\frac{d \btheta(t)}{dt} = -\nabla L_{X,Y}(\btheta(t))$.
Note that the ReLU~function is~not differentiable at~$0$. Practical implementations of gradient~methods define the derivative~$\sigma'(0)$ to be some constant in~$[0,1]$. 
In this work we assume for convenience that~$\sigma'(0)=0$.
We say that GF~\emph{converges} if $\lim_{t \to \infty}\btheta(t)$~exists. In this case, we denote~$\btheta(\infty) := \lim_{t \to \infty}\btheta(t)$.

\section{Gradient~flow does~not even approximately minimize~ranks} \label{sec:negative results}

In this section we consider rank~minimization in depth-$2$~networks~$N_{W,V}$ trained with the square~loss. We show that even for the simple case of size-$2$~datasets, under mild assumptions, GF does~not converge to a minimum-rank~solution even approximately. 

In what follows, we consider ReLU~networks with vector-valued~outputs, since for linear~networks with the same architecture it was shown that GF can be viewed as a heuristic for rank~minimization (cf. \cite{li2020towards,razin2020implicit}). Specifically, let $(X, Y) \in \mathbb{R}^{2 \times 2} \times \mathbb{R}^{2 \times 2}$ be a training dataset, and let $W,V\in \reals^{2 \times 2}$ be weight~matrices such that $N_{W,V}(X)=V\sigma(WX)$ is a zero-loss~solution. Note that if $\rank(Y)=2$ then we must have $\rank(V)=2$: Indeed, by definition of $N_{W,V}$, we necessarily have $\rank(Y)=\rank(N_{W,V}(X)) \leq \rank(V)$. Therefore, to understand rank~minimization in this simple setting, we consider the rank of $W$ in a zero-loss~solution. Trivially, $\rank(W)\leq 2$, so $W$ can be considered low-rank only if $\rank(W)\leq 1$. 

To make the setting non-trivial, we need to show that such low-rank~zero-loss~solutions exist at all. The following theorem shows that this is true for almost all size-$2$ datasets:
\begin{theorem}\label{T1}
    Given any labeled dataset $(X, Y) \in \mathbb{R}^{2 \times 2} \times \mathbb{R}^{2 \times 2}$ of two inputs $\bx_1, \bx_2 \in \mathbb{R}^2$ with a strictly positive angle between them, i.e., $\measuredangle(\bx_1, \bx_2) > 0$, there exists a zero-loss~solution $N_{W,V}$ with $W,V \in \reals^{2 \times 2}$, such that $\rank(W)=1$.
\end{theorem}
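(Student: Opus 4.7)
The plan is to construct an explicit rank-$1$ weight matrix $W$ whose post-activation image $\sigma(WX)$ is an invertible diagonal matrix, so that the outer layer $V$ can be chosen to fit any target matrix $Y$ exactly.

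First I would pick a vector $\bw \in \mathbb{R}^2$ such that $\bw^\top \bx_1 > 0$ and $\bw^\top \bx_2 < 0$. This is where the assumption $\measuredangle(\bx_1,\bx_2)>0$ is used, and it splits into two sub-cases. If $\measuredangle(\bx_1,\bx_2) \in (0,\pi)$, then $\bx_1,\bx_2$ are linearly independent, so the linear map $\bw \mapsto (\bw^\top \bx_1,\bw^\top \bx_2)$ is a bijection on $\mathbb{R}^2$ and I can simply hit the target pair $(1,-1)$. If $\measuredangle(\bx_1,\bx_2)=\pi$, then $\bx_2 = -\alpha \bx_1$ for some $\alpha>0$, and any $\bw$ with $\bw^\top \bx_1 > 0$ automatically satisfies $\bw^\top \bx_2 = -\alpha\, \bw^\top \bx_1 < 0$. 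So a suitable $\bw$ exists in either case.

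Next I would set $W := \begin{pmatrix} \bw^\top \\ -\bw^\top \end{pmatrix}$. Its two rows are nonzero scalar multiples of each other, hence $\rank(W)=1$. A direct computation gives $WX = \begin{pmatrix} \bw^\top \bx_1 & \bw^\top \bx_2 \\ -\bw^\top \bx_1 & -\bw^\top \bx_2\end{pmatrix}$, and by the sign choice above, in each column exactly one entry is strictly positive and the other is strictly negative. Applying the ReLU coordinate-wise therefore yields $\sigma(WX) = \diag(\bw^\top \bx_1,\, -\bw^\top \bx_2)$, a diagonal matrix with two strictly positive entries. In particular, $\sigma(WX)$ is invertible.

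Finally, setting $V := Y\, \sigma(WX)^{-1}$ makes $N_{W,V}(X) = V\sigma(WX) = Y$, so $L_{X,Y}(W,V)=0$, and $W$ has rank $1$ by construction. There is essentially no obstacle here — the only subtlety is the case split on whether the angle equals $\pi$ (where $\bx_1,\bx_2$ are antiparallel) or lies strictly inside $(0,\pi)$ (where they are linearly independent), and both are handled uniformly by prescribing the signs of $\bw^\top \bx_1$ and $\bw^\top \bx_2$.
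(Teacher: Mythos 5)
Your proposal is correct and follows essentially the same route as the paper: a rank-$1$ matrix $W$ with rows $\bw$ and $-\bw$ chosen so that $\bw^\top \bx_1 > 0$ and $\bw^\top \bx_2 < 0$, making each hidden neuron active on exactly one input, after which $V$ is solved for exactly (the paper via a general pseudoinverse lemma, you via inverting the diagonal matrix $\sigma(WX)$). The only cosmetic difference is that the paper takes the explicit choice $\bw = \bx_1/\norm{\bx_1} - \bx_2/\norm{\bx_2}$, which handles every angle in $(0,\pi]$ uniformly and avoids your case split.
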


The theorem follows by constructing a network where the weight~vectors of the neurons in the first~layer have opposite directions (and hence the weight~matrix is of rank $1$), such that each neuron is active for exactly one input. Then, it is possible to show that for an appropriate choice of the weights in the second layer the network achieves zero~loss.
See Appendix~\ref{app:T1} for the formal proof.

\thmref{T1} implies that zero-loss~solutions of rank~$1$ exist. However, we now show that GF does~not converge to such solutions. 
We prove this result under the following assumptions:

\begin{assumption}\label{assumption:ys_linearly_independent}
    The two target~vectors~$\by_1, \by_2$ are on the unit~sphere~$\mathbb{S}^1$ and are linearly~independent.
\end{assumption}

\begin{assumption}\label{assumption:xs_bounded_angle}
    The two inputs~$\bx_1, \bx_2$ are on the unit~sphere~$\mathbb{S}^1$, and satisfy $\frac{\pi}{2} < \measuredangle(\bx_1, \bx_2) < \pi$.
\end{assumption}

The assumptions that $\bx_i,\by_i$ are of unit norm are mostly for technical convenience, and we believe that they are not essential.

Then, we have:

\begin{theorem}\label{T2}
    Let $(X,Y) \in \mathbb{R}^{2 \times 2} \times \mathbb{R}^{2 \times 2}$ be a labeled dataset that satisfies Assumptions~\ref{assumption:ys_linearly_independent} and~\ref{assumption:xs_bounded_angle}. 
    Consider GF w.r.t. the loss~function~$L_{X,Y}(W,V)$ from~\eqref{eq:objective}. Suppose that $W,V \in \reals^{2 \times 2}$ are initialized such that
    \[
        \norm{\bw_i(0)} < \min \left\{ \frac{1}{2}, \frac{\sqrt{3}}{2} \cos{\frac{\measuredangle(\bx_1, \bx_2)}{2}} \right\}
    \]
    and $\norm{\bv_i(0)} < \frac{1}{2}$ for all $i \in \{1,2\}$. If GF converges to a zero-loss~solution~$N_{W(\infty),V(\infty)}$, then $\rank(W(\infty))=2$.
\end{theorem}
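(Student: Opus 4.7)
The plan is to argue by contradiction, assuming $\rank(W(\infty))=1$. If the rows $\bw_1(\infty),\bw_2(\infty)$ of $W(\infty)$ are linearly dependent and not both zero, they are either collinear with a nonnegative ratio (the ``parallel'' case, including $\bw_i(\infty)=\zero$) or collinear with a negative ratio (the ``antiparallel'' case). In the parallel case, $\sigma(\bw_1(\infty)^\top\bx)$ and $\sigma(\bw_2(\infty)^\top\bx)$ are proportional for every $\bx$, so the hidden-activation matrix $\sigma(W(\infty)X)$ has rank at most $1$, forcing $\rank(N_{W(\infty),V(\infty)}(X))\le 1$; this contradicts $\rank(Y)=2$ from Assumption~\ref{assumption:ys_linearly_independent}. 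In the antiparallel case $\bw_1(\infty)=-\alpha\,\bw_2(\infty)$ with $\alpha>0$, a case analysis on the signs of $\bw_2(\infty)^\top\bx_j$ shows that zero loss is feasible only when $\bw_2(\infty)$ lies in a cone strictly separating $\bx_1$ from $\bx_2$ (one of the two ``only-one-positive'' sectors of the unit circle). In this \emph{specialized} configuration, each neuron activates on exactly one input, and zero loss forces $\bv_i(\infty)$ to be collinear with the target of the input on which neuron~$i$ activates.

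The core task is thus to rule out convergence to the specialized antiparallel configuration. Two ingredients drive this step. First, the balance conservation law $\|\bw_i(t)\|^2-\|\bv_i(t)\|^2=\|\bw_i(0)\|^2-\|\bv_i(0)\|^2$ for $i=1,2$, which follows from $2$-homogeneity of the single-neuron subnetworks: differentiating, both $\tfrac{d}{dt}\|\bw_i\|^2=-2\bw_i^\top\nabla_{\bw_i}L$ and $\tfrac{d}{dt}\|\bv_i\|^2=-2\bv_i^\top\nabla_{\bv_i}L$ equal $-2\sum_j(\bv_i^\top\br_j)\,\sigma(\bw_i^\top\bx_j)$, where $\br_j:=N_{W(t),V(t)}(\bx_j)-\by_j$. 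Combined with $\|\bw_i(0)\|,\|\bv_i(0)\|<1/2$, this keeps the two layer magnitudes within $1/4$ of each other for all $t\ge 0$. Second, the geometric bound $\|\bw_i(0)\|<\tfrac{\sqrt3}{2}\cos(\measuredangle(\bx_1,\bx_2)/2)$ ensures $\|N_{W(0),V(0)}(\bx_j)\|$ is strictly smaller than $\|\by_j\|=1$, so the early residuals are dominated by $-\by_j$ and the early GF velocity for each $\bw_i$ is a controlled combination of $\bx_1$ and $\bx_2$.

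The plan for the final step is to exhibit a Lyapunov-type quantity tied to the bisector direction $\bar\bx:=(\bx_1+\bx_2)/\|\bx_1+\bx_2\|$ (so $\bar\bx^\top\bx_j=\cos(\theta/2)$ where $\theta:=\measuredangle(\bx_1,\bx_2)$) that remains bounded away from the values compatible with the antiparallel specialized configuration throughout the trajectory, contradicting convergence to a rank-$1$ limit. The main obstacle is that the GF dynamics is piecewise linear: the vector field changes each time a neuron crosses an activation boundary, so the candidate Lyapunov quantity must be controlled across all possible activation regimes. The very specific constant $\tfrac{\sqrt3}{2}\cos(\theta/2)$ in the initialization hypothesis strongly suggests the proof leverages the tight geometric margin between the width-$(\pi-\theta)$ ``both positive'' cone and the two width-$\theta$ ``only-one-positive'' cones; the obtuse-angle assumption $\theta>\pi/2$ is what makes the bisector-based alignment bound close at this estimate.
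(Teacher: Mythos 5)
Your reduction to the antiparallel case is sound and matches the paper's: the parallel case (and zero/dead rows) is excluded exactly as you say via $\rank(\sigma(W(\infty)X))\le 1$ versus $\rank(Y)=2$, the antiparallel case forces one row into $\cs_1\setminus\partial\cs_1$ and the other into $\cs_2\setminus\partial\cs_2$, and you correctly identify the balance invariant $\norm{\bw_i(t)}^2-\norm{\bv_i(t)}^2=\text{const}$ and the zero-loss identity $\by_i=\bv_i(\infty)\,(\bw_i(\infty)^\top\bx_i)$ as key ingredients. But the decisive step---actually ruling out convergence to the antiparallel configuration---is not carried out. You only announce a plan to ``exhibit a Lyapunov-type quantity tied to the bisector \ldots controlled across all activation regimes,'' without constructing it or showing it survives the regime switches, which, as you yourself note, is precisely the obstacle. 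As it stands, the proposal proves the easy reductions and then stops at the heart of the theorem; this is a genuine gap.

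For comparison, the paper's mechanism is local rather than a global invariant: while $\bw_i(t)\in\cs_i$ the gradient with respect to $\bw_i$ is a scalar multiple of $\bx_i$ (the $i$-th row of $\sigma'(WX)$ is the indicator of coordinate $i$), so after the \emph{last} entry time of $\bw_i$ into $\cs_i$ the trajectory is $\bw_i(t)=\bw_i(t_0)+c(t)\bx_i$, and that entry point lies either in the small initialization ball intersected with $\cs_i$ or on $\partial\cs_i\cap\partial\cs$ (it cannot come from $\cd$, which is absorbing). The balance law together with $1=\norm{\by_i}\le\norm{\bv_i(\infty)}\cdot\norm{\bw_i(\infty)}$ and $\norm{\bw_i(0)},\norm{\bv_i(0)}<\tfrac12$ yields $\norm{\bw_i(\infty)}>\tfrac{\sqrt3}{2}$, so the angle of $\bw_i(\infty)$ to $\bx_i$ on the $\cd$-side is at most $\arcsin\bigl(\norm{\bw_i(0)}/\norm{\bw_i(\infty)}\bigr)<\arcsin\bigl(2\norm{\bw_i(0)}/\sqrt3\bigr)$, giving $\measuredangle(\bw_1(\infty),\bw_2(\infty))<\measuredangle(\bx_1,\bx_2)+2\arcsin\bigl(2\max_i\norm{\bw_i(0)}/\sqrt3\bigr)<\pi$, where the last inequality is exactly where the hypothesis $\norm{\bw_i(0)}<\tfrac{\sqrt3}{2}\cos\bigl(\measuredangle(\bx_1,\bx_2)/2\bigr)$ enters (not, as you suggest, through the size of the initial residuals). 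Since antiparallel rows would force this angle to equal $\pi$, the contradiction follows. Without this (or some other concrete) control of $\measuredangle(\bw_1(\infty),\bw_2(\infty))$ along the trajectory, your argument does not close.
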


By the above theorem, GF does~not minimize the rank even in a very simple setting where the dataset contains two inputs with angle larger than $\pi/2$ (as long as the initialization point is sufficiently close~to~$0$). In particular, if the dataset is drawn from the uniform distribution on the sphere then this condition holds with probability $1/2$. 

While \thmref{T2} shows that GF does~not minimize the rank, it does~not rule~out the possibility that it converges to a solution which is close to a low-rank~solution. There are many ways to define such closeness, such as the ratio of the Frobenius and spectral~norms, the Frobenius~distance from a low-rank~solution, or the exponential of the entropy of the singular~values (cf. \cite{rudelson2007sampling,sanyal2019stable,razin2020implicit,roy2007effective}). However, for $2\times 2$~matrices they all boil down to either having the two rows of the matrix being nearly aligned, or having at~least one of them very small (at~least compared to the other). In the following theorem, we show that under the assumptions stated above, for any fixed dataset, with at~least constant probability, GF converges to a zero-loss~solution, where the two row vectors are bounded away from~$0$, the ratio of their norms are bounded, and the angle between them is bounded~away from~$0$ and from~$\pi$ (all by explicit constants that depend just on the dataset and are large in general). Thus, with at least constant probability, GF does~not minimize any reasonable approximate notion of rank.

\begin{theorem} \label{T3}
    Let $(X,Y) \in \mathbb{R}^{2 \times 2} \times \mathbb{R}^{2 \times 2}$ be a labeled dataset that satisfies Assumptions~\ref{assumption:ys_linearly_independent} and~\ref{assumption:xs_bounded_angle}.
    Consider GF w.r.t. the loss~function $L_{X,Y}(W,V)$ from \eqref{eq:objective}. Suppose that $W,V \in \reals^{2 \times 2}$ are initialized such that for all $i \in \{1,2\}$ we have $\bv_i(0) = \zero$, and $\bw_i(0)$ is drawn from a spherically~symmetric distribution with 
    \begin{align*}
        \norm{\bw_i(0)} 
        \leq \frac{\sqrt{3}}{2} \min \left\{ 
        \sin\left(\frac{\pi - \measuredangle(\bx_1,\bx_2)}{4} \right), 
        \sin \left(\measuredangle(\bx_1,\bx_2) - \frac{\pi}{2}\right)
        \right\} ~.
    \end{align*}
    Let $E$ be the event that GF converges to a zero-loss~solution~$N_{W(\infty),V(\infty)}$ such that 
    \begin{itemize}
        \item[(i)] $\measuredangle \left( \bw_1(\infty), \bw_2(\infty) \right) \in \left[ \frac{\pi}{2} - \left(\measuredangle(\bx_1, \bx_2) - \frac{\pi}{2} \right), \frac{3\pi}{4} + \frac{\measuredangle(\bx_1, \bx_2) - \pi/2}{2} \right]$,
        \item[(ii)] $\norm{\bw_i(\infty)} \in \left( \frac{\sqrt{3}}{2}, \sqrt{ \frac{1}{4} + \frac{4}{3 \left(\sin \measuredangle(\bx_1, \bx_2) \right)^2 } } \right)$ for all $i \in \set{1, 2}$.
    \end{itemize}
    Then, $\Pr \left[E \right] \ge 2 \cdot \left(\frac{\measuredangle(\bx_1,\bx_2)}{2\pi} \right)^2$.
\end{theorem}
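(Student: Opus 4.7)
The strategy is to identify an event, determined solely by the angular positions of the initial first-layer weights, on which the ReLU activation pattern stays frozen throughout training. In that regime the GF dynamics decouple into two independent scalar problems whose limit can be computed in closed form via the balancedness invariant for two-layer linear networks.

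\textbf{Step 1 (Good event and its probability).} Write $\alpha := \measuredangle(\bx_1,\bx_2) \in (\pi/2,\pi)$ and define the disjoint open wedges
\[
C_1 := \{\bw \in \reals^2 : \bw \cdot \bx_1 > 0,\ \bw \cdot \bx_2 < 0\}, \qquad
C_2 := \{\bw \in \reals^2 : \bw \cdot \bx_1 < 0,\ \bw \cdot \bx_2 > 0\}.
\]
A direct angular parameterization shows each $C_i$ has aperture exactly $\alpha$, and that $\bx_1 \in C_1$, $\bx_2 \in C_2$. Let $E_0$ be the event that either (Case A) $\bw_1(0) \in C_1,\ \bw_2(0) \in C_2$, or (Case B) $\bw_1(0) \in C_2,\ \bw_2(0) \in C_1$. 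Since $C_1 \cap C_2 = \emptyset$ the two cases are disjoint, and spherical symmetry together with independence of $\bw_1(0),\bw_2(0)$ makes each case have probability $(\alpha/(2\pi))^2$, so $\Pr[E_0] = 2(\alpha/(2\pi))^2$. The remainder of the argument shows that $E_0 \subseteq E$.

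\textbf{Step 2 (Activation pattern is preserved).} Assume Case A (Case B is symmetric by index swap) and set $a_i := \bw_i(0)\cdot\bx_i > 0$. Under the putative pattern ``neuron $i$ active on $\bx_i$, inactive on $\bx_{3-i}$'', direct calculation gives $\nabla_{\bw_i}L = (r_i\|\bv_i\|^2 - \bv_i^\top\by_i)\bx_i$, hence $\dot\bw_i(t) \parallel \bx_i$, so $\bw_i(t) = \bw_i(0) + c_i(t)\bx_i$ with $c_i(0)=0$. A short monotonicity argument, based on the decoupled ODE analyzed in Step~3, shows $c_i(t) \ge 0$ for all $t$. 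Consequently $\bw_i(t)\cdot\bx_i = a_i + c_i(t) > 0$, while
\[
\bw_i(t)\cdot\bx_{3-i} = \bw_i(0)\cdot\bx_{3-i} + c_i(t)\cos\alpha < 0,
\]
since $\bw_i(0)\cdot\bx_{3-i} < 0$ by Case~A and $\cos\alpha < 0$ by Assumption~\ref{assumption:xs_bounded_angle}. A standard open/closed continuity argument then extends the pattern to all $t \ge 0$.

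\textbf{Step 3 (Explicit limit via balancedness).} With the frozen pattern the loss decouples as $L = \sum_{i=1,2}\tfrac12\|r_i\bv_i - \by_i\|^2$, where $r_i := \bw_i^\top\bx_i$, and the conservation law $\tfrac{d}{dt}(\|\bw_i\|^2 - \|\bv_i\|^2) = 0$ gives $\|\bw_i(t)\|^2 - \|\bv_i(t)\|^2 = \|\bw_i(0)\|^2$. Convergence of GF in the decoupled scalar system can be verified directly: $\bv_i(t)$ stays parallel to $\by_i$, and the induced 2D system in $(c_i, s_i)$, where $\bv_i = s_i\by_i$, is monotonically attracted to its unique fixed point. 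At the limit, $r_i(\infty)\bv_i(\infty) = \by_i$ gives $\|\bv_i(\infty)\| = 1/r_i(\infty)$ (using $\|\by_i\|=1$ from Assumption~\ref{assumption:ys_linearly_independent}); combining with $\|\bw_i(\infty)\|^2 = \|\bw_i(0)\|^2 + r_i(\infty)^2 - a_i^2$ (from $\bw_i(\infty) = \bw_i(0)+c_i\bx_i$) yields
\[
r_i(\infty)^2 \;=\; \tfrac12\!\left(a_i^2 + \sqrt{a_i^4+4}\right), \qquad \|\bw_i(\infty)\|^2 \;=\; \|\bw_i(0)\|^2 + \tfrac{1}{r_i(\infty)^2}.
\]

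\textbf{Step 4 (Verifying norms and angle).} Substituting $|a_i| \le \|\bw_i(0)\|$ and the hypothesized initialization bound into the closed form for $\|\bw_i(\infty)\|$ produces the stated interval $\bigl(\tfrac{\sqrt 3}{2},\ \sqrt{\tfrac14 + \tfrac{4}{3\sin^2\alpha}}\,\bigr)$. For the angle bound, expand
\[
\bw_1(\infty)\cdot\bw_2(\infty) = \bw_1(0)\cdot\bw_2(0) + c_1\,\bx_1\cdot\bw_2(0) + c_2\,\bx_2\cdot\bw_1(0) + c_1c_2\cos\alpha,
\]
and combine with the norm formula. Since $\bw_i(0)$ lies in $C_i$ its inner products with $\bx_1,\bx_2$ have controlled signs, and since $\|\bw_i(0)\|$ is small relative to $c_i$, the leading term $c_1c_2\cos\alpha$ gives an angle close to $\alpha$. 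Tracking the extremal configurations over the wedge and the norm ball yields the stated interval $[\pi - \alpha,\ (\pi+\alpha)/2]$.

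\textbf{Main obstacle.} The delicate step is Step~4: the angle interval is asymmetric about $\alpha$, and its two endpoints correspond to distinct worst-case configurations of $\bw_i(0)$ within the wedge $C_i$. Matching those extremal configurations against the two terms of the minimum in the initialization bound, $\tfrac{\sqrt 3}{2}\sin((\pi - \alpha)/4)$ and $\tfrac{\sqrt 3}{2}\sin(\alpha - \pi/2)$, is what forces the precise form of the assumed norm bound; the underlying trigonometric computations are elementary but must be carried out in both extremal directions in order to cover every admissible initialization.
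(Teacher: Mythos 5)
Your proposal is correct in outline and its asserted steps do check out; the overall skeleton is the same as the paper's: condition on the event that each $\bw_i(0)$ lands in the open wedge where the neuron is active on exactly one input (probability $2(\measuredangle(\bx_1,\bx_2)/2\pi)^2$ by spherical symmetry and independence of the rows, an independence assumption the paper also makes implicitly), show the activation pattern is frozen so that $\bw_i(t)=\bw_i(0)+c_i(t)\bx_i$ with $c_i(t)\ge 0$, invoke the balancedness invariant $\norm{\bw_i}^2-\norm{\bv_i}^2=\norm{\bw_i(0)}^2$, and finish with the wedge geometry for the angle. Where you genuinely diverge is the middle: instead of the paper's route (an integral bound on the gradient norm to get loss $\to 0$, then Lemmas on the limit's norm via $\norm{\bw_i}\cdot\norm{\bv_i}\ge 1$ and a geometric case analysis of $\measuredangle(\bw_i(\infty),\bx_i)$), you solve for the limit in closed form from $r_i(\infty)\bv_i(\infty)=\by_i$ and the conservation law, obtaining $r_i(\infty)^2=\tfrac12\bigl(a_i^2+\sqrt{a_i^4+4}\bigr)$ and $\norm{\bw_i(\infty)}^2=\norm{\bw_i(0)}^2+1/r_i(\infty)^2$. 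This buys you a simpler convergence argument (the reduced $(c_i,s_i)$ system is monotone and bounded, since $u:=r_is_i$ satisfies $\dot u=(r_i^2+s_i^2)(1-u)$ and $r_i\ge a_i>0$) and strictly sharper norm bounds: your formula gives $\norm{\bw_i(\infty)}^2\le\norm{\bw_i(0)}^2+1<5/4$, which lies inside the stated interval without ever using the second term of the minimum in the initialization bound. One inaccuracy in your commentary: that second term, $\tfrac{\sqrt{3}}{2}\sin(\measuredangle(\bx_1,\bx_2)-\tfrac{\pi}{2})$, is not matched against an extremal configuration in the angle bound — in the paper it is used only to control $\measuredangle(\bw_i(\infty),\bx_i)$ on the $\mathcal{S}$-adjacent side for the norm upper bound, and in your scheme it is not needed at all; the angle upper bound $(\pi+\measuredangle(\bx_1,\bx_2))/2$ uses only the first term, via the triangle inequality $\measuredangle(\bw_1(\infty),\bw_2(\infty))\le\measuredangle(\bx_1,\bx_2)+\sum_i\arcsin\bigl(2\norm{\bw_i(0)}/\sqrt{3}\bigr)$, exactly as in the paper.
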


We note that in \thmref{T3} the weights in the second layer are initialized to zero, while in \thmref{T2} the assumption on the initialization is weaker. This difference is for technical convenience, and we believe that \thmref{T3} should hold also under weaker assumptions on the initialization, as the next empirical result demonstrates.

\subsection{An empirical result}

Our theorems imply that for standard initialization schemes, GF will not converge close~to low-rank~solutions, with some positive~probability. We now present a simple experiment that corroborates this and suggests that, furthermore, this holds with~high~probability.

Specifically, we trained ReLU~networks in the same setup as in the previous section (w.r.t. two $2\times 2$~weight~matrices~$W^{(1)},W^{(2)}$) on the two data~points~$\{(\bx_i,\by_i)\}_{i=1}^{2}$ where~$\by_1,\by_2$ are the standard~basis vectors in~$\reals^2$, and~$\bx_1,\bx_2$ are~$(1,0.99)$ and~$(-1,0.99)$ normalized to have unit~norm. At initialization, every row of~$W^{(1)}$ and every column of~$W^{(2)}$ is sampled uniformly~at~random from the sphere of radius~$10^{-4}$ around the origin. To simulate GF, we performed~$3\cdot 10^{6}$ epochs of full-batch~gradient~descent of step~size~$10^{-4}$, w.r.t. the square~loss. Of~$288$~repeats of this experiment, $79$~converged to negligible~loss~(defined~as~$<10^{-4}$). In 
\figref{fig:empirical-high-f2s}, we plot a histogram of the \emph{stable~(numerical)~ranks} of the resulting weight~matrices, i.e. the ratio~${\norm{W^{(\ell)}}^2_F} / {\norm{W^{(\ell)}}^2_\sigma}$ of layer~$\ell \in [2]$. The figure clearly suggests that whenever convergence to zero~loss occurs, the solutions are all of rank~$2$, and none are even close~to being low-rank (in~terms of the stable~rank).

\begin{figure}[t]
    \centering
    \frame{\includegraphics[scale=.6]{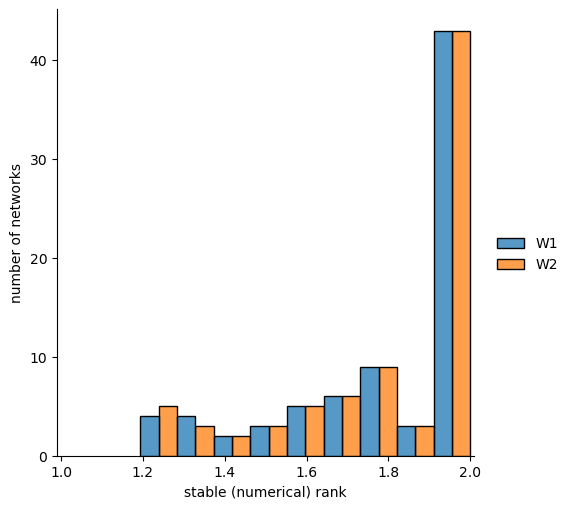}}
  \caption{A histogram of the stable (numerical) ranks at convergence. In all runs, we converge to networks with stable~ranks which seem bounded away from $1$. Namely, gradient~descent does~not even approximately minimize the ranks.}
  \label{fig:empirical-high-f2s}
\end{figure}

\section{Rank~minimization in deep~networks with small~$\ell_2$~norm} \label{sec:positive results ell2}

When training neural~networks with gradient~descent, it is common to use explicit $\ell_2$~regularization on the parameters. In this case, gradient~descent is biased towards solutions that minimize the $\ell_2$~norm of the parameters. 
We now show that in deep~overparameterized~ReLU~networks, if GF converges to a zero-loss~solution that minimizes the $\ell_2$~norm, then the ratios between the Frobenius and the spectral~norms in the weight~matrices tend to be small (we use here the ratio between these norms as a continuous surrogate for the exact rank, as discussed in the previous section). Formally, we have the following:

\begin{theorem}\label{thm:positive-square-loss}
    Let~$\left\{ \left( \bx_i, y_i \right) \right\}_{i=1}^n \subseteq \mathbb{R}^{\din} \times \reals_+$ be a dataset, and assume that there~is~$i \in [n]$ with~$\norm{\bx_i} \leq 1$ and $y_i \geq 1$. Assume that there~is a fully-connected~neural~network~$N$ of width~$m \geq 2$ and depth~$k \geq 2$, such that for~all~$i \in [n]$ we have~$N(\bx_i) = y_i$, and the weight~matrices~$W_1,\ldots,W_k$ of $N$ satisfy $\norm{W_i}_F \leq B$ for some~$B>0$.
    Let~$N_\btheta$ be a fully-connected~neural~network of width~$m' \geq m$ and depth~$k' > k$ parameterized by~$\btheta$. Let~$\btheta^* = \left[W_1^*,\ldots,W_{k'}^*\right]$ be a global~optimum of the following problem:
    \begin{equation} \label{eq:optimization problem square loss}
        \min_\btheta \norm{\btheta} \;\;\;\; \text{s.t. } \;\;\; \forall i \in [n] \;\; N_\btheta(\bx_i)=y_i~.   
    \end{equation}
    Then, 
    \begin{equation}  \label{eq:average ratio square}
        \frac{1}{k'}\sum_{i=1}^{k'} \frac{\norm{W_i^*}_\sigma}{\norm{W_i^*}_F} 
        \geq \left( \frac{1}{B} \right)^{\frac{k}{k'}}~.
    \end{equation}
    Equivalently, we have the following upper~bound on the harmonic~mean of the ratios~$\frac{\norm{W_i^*}_F}{\norm{W_i^*}_\sigma}$:
    \begin{equation}  \label{eq:harmonic ratio square}
        \frac{k'}{\mathlarger{\sum}_{i=1}^{k'} \left( \frac{\norm{W_i^*}_F}{\norm{W_i^*}_\sigma} \right)^{-1} }
        \leq 
        B^{\frac{k}{k'}}~.
    \end{equation}
\end{theorem}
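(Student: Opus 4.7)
The plan is to combine three ingredients: (i) an upper bound on $\norm{\btheta^*}$ obtained by constructing an explicit depth-$k'$ interpolant out of the assumed depth-$k$ network $N$, (ii) a lower bound on $\prod_{j=1}^{k'} \norm[\sigma]{W_j^*}$ coming from Lipschitz continuity, and (iii) an AM--GM step that relates the arithmetic mean in~\eqref{eq:average ratio square} to these two quantities.

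First I build an interpolating network of depth $k'$ and width at most $m \le m'$ by taking $N$ (with weight matrices $W_1,\ldots,W_k$) for the first $k$ layers, and then appending $k'-k$ scalar $1 \times 1$ layers with weight $1$. Because all targets satisfy $y_i \in \reals_+$, the output of layer $k$ is the non-negative scalar $y_i$, and each subsequent ReLU acts as the identity on it, so the extended network also maps $\bx_i$ to $y_i$ for every $i$. The Frobenius norms of this construction satisfy $\norm[F]{\tilde W_j} \le B$ for $j \le k$ and $\norm[F]{\tilde W_j} = 1$ for $j > k$, and in particular $\prod_{j=1}^{k'} \norm[F]{\tilde W_j} \le B^k$. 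Using the fact that $N_\btheta$ is invariant under any rescaling $\tilde W_j \mapsto c_j \tilde W_j$ with $c_j > 0$ and $\prod_j c_j = 1$ (by positive $1$-homogeneity of the ReLU), I rebalance so that all Frobenius norms are equal to $C := (\prod_j \norm[F]{\tilde W_j})^{1/k'} \le B^{k/k'}$. This yields an interpolant of squared $\ell_2$-norm at most $k' C^2 \le k' B^{2k/k'}$, so $\norm{\btheta^*}^2 = \sum_j \norm[F]{W_j^*}^2 \le k' B^{2k/k'}$, and AM--GM applied to $\norm[F]{W_j^*}^2$ gives $\prod_{j=1}^{k'} \norm[F]{W_j^*} \le B^k$.

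Next I lower-bound $\prod_j \norm[\sigma]{W_j^*}$. Since the network has no biases we have $N_{\btheta^*}(\zero) = 0$; each linear layer is $\norm[\sigma]{W_j^*}$-Lipschitz and the ReLU is $1$-Lipschitz, so $N_{\btheta^*}$ is $\prod_j \norm[\sigma]{W_j^*}$-Lipschitz. Applying this at the distinguished index $i$ from the hypothesis,
\[
1 \;\le\; y_i \;=\; \bigl|N_{\btheta^*}(\bx_i) - N_{\btheta^*}(\zero)\bigr| \;\le\; \Bigl(\prod_{j=1}^{k'} \norm[\sigma]{W_j^*}\Bigr) \cdot \norm{\bx_i} \;\le\; \prod_{j=1}^{k'} \norm[\sigma]{W_j^*}.
\]
Finally, by AM--GM on the ratios,
\[
\frac{1}{k'}\sum_{j=1}^{k'} \frac{\norm[\sigma]{W_j^*}}{\norm[F]{W_j^*}} \;\ge\; \Bigl(\prod_{j=1}^{k'} \frac{\norm[\sigma]{W_j^*}}{\norm[F]{W_j^*}}\Bigr)^{1/k'} \;\ge\; \frac{1}{B^{k/k'}} \;=\; \Bigl(\frac{1}{B}\Bigr)^{k/k'},
\]
which is~\eqref{eq:average ratio square}; the equivalent harmonic-mean bound~\eqref{eq:harmonic ratio square} follows by direct rearrangement.

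The main obstacle is the construction in the first step: the non-negativity hypothesis $y_i \ge 0$ is crucial, since it is precisely what allows each appended $1 \times 1$ layer, followed by a ReLU, to act as the identity on the propagated output and hence contribute Frobenius norm only $1$. The remaining ingredients (rebalancing via positive homogeneity, the product-of-spectral-norms Lipschitz bound, and AM--GM) are all standard.
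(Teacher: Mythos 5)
Your proof is correct, and it reaches the paper's bound by a route that differs in one substantive ingredient. The shared skeleton is the same: build a depth-$k'$ interpolant from $N$ by appending width-$1$ layers (valid exactly because $y_i\ge 0$ lets each ReLU act as the identity), deduce $\norm{\btheta^*}^2\le k'B^{2k/k'}$, and lower-bound $\prod_j\norm{W_j^*}_\sigma\ge 1$ via the spectral-norm Lipschitz bound at the distinguished point with $\norm{\bx_i}\le 1$, $y_i\ge 1$ (your rebalancing of the constructed interpolant by positive homogeneity is just a repackaging of the paper's explicit scalings $\alpha=(1/B)^{(k'-k)/k'}$ and $\beta=B^{k/k'}$, which equalize the layer norms to at most $B^{k/k'}$ anyway). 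Where you genuinely diverge is in how these two facts are combined: the paper proves a balancedness lemma for the \emph{global optimum} (a perturbation argument showing $\norm{W_i^*}_F=\norm{W_j^*}_F$ for all $i,j$), which lets it factor $1/B^*$ out of the sum and finish with a single AM--GM on the spectral norms; you avoid any optimality-condition argument by applying AM--GM twice instead --- once to convert $\sum_j\norm{W_j^*}_F^2\le k'B^{2k/k'}$ into $\prod_j\norm{W_j^*}_F\le B^k$, and once on the ratios $\norm{W_j^*}_\sigma/\norm{W_j^*}_F$ to pass from the product bound to the arithmetic mean. Your version is thus slightly more elementary and yields the identical constant; the paper's version buys the extra structural fact that the optimum is exactly layer-balanced (which it reuses in the proof of the classification analogue). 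One small point you leave implicit: the rebalancing constants $c_j=C/\norm{\tilde W_j}_F$ require $\norm{\tilde W_j}_F>0$, and the final ratios require $\norm{W_j^*}_F>0$; both hold because a zero layer would force the network output to vanish, contradicting $N(\bx_i)=y_i\ge 1$ at the distinguished index.
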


By the above theorem if $k'$ is much larger than $k$, then the average~ratio between the spectral and the Frobenius~norms~(\eqref{eq:average ratio square}) is at~least roughly~$1$. Likewise, the harmonic~mean of the ratio between the Frobenius and the spectral~norms~(\eqref{eq:harmonic ratio square}), namely, the square~root of the stable~rank, is at~most roughly~$1$. Noting that both these ratios equal~$1$ if~and~only~if the matrix is of rank~$1$, we see that there is a bias towards low-rank~solutions as the depth~$k'$ of the trained~network increases. 
Note that the result does~not depend on the width of the networks. Thus, even if the width~$m'$ is large, the average~ratio is close~to~$1$.
Also, note that the network~$N$ of depth~$k$ in the theorem might have high~ranks (e.g., rank~$m$ for~each weight~matrix), but once we consider networks of a large~depth~$k'$ then the dataset becomes realizable by a network of small average~rank, and GF converges to such a network.

\section{Rank~minimization in deep~networks with exponentially-tailed losses} \label{sec:positive results exp}

In this section, we turn to consider GF in classification tasks with exponentially-tailed losses, namely, the exponential loss or the logistic~loss. 

Let us first formally define the setting.
We consider neural~networks of output~dimension~$1$,~i.e.,~$\dout=1$. 
Let~$S = \{(\bx_i,y_i)\}_{i=1}^n \subseteq \reals^d \times \{-1,1\}$ be a binary~classification training~dataset. 
Let~$X \in \reals^{\din \times n}$ and $\by \in \reals^n$ be the data~matrix and labels that correspond to~$S$.
Let $N_\btheta$ be a neural~network parameterized by~$\btheta$.
For a loss~function~$\ell:\reals \to \reals$, the empirical~loss of~$N_\btheta$ on the dataset~$S$ is 
\begin{equation}
\label{eq:objective classification}
	L_{X,\by}(\btheta) := \sum_{i=1}^n \ell\left(y_i N_\btheta(\bx_i)\right)~.
\end{equation} 
We focus on the exponential~loss~$\ell(q) = e^{-q}$ and the logistic~loss~$\ell(q) = \log(1+e^{-q})$.
We say that the~dataset~is~\emph{correctly~classified}~by~the~network~$N_\btheta$ if for~all~$i \in [n]$ we have~$y_i N_\btheta(\bx_i) > 0$. 
We consider GF on the objective given in \eqref{eq:objective classification}. We say that a network~$N_\btheta$~is~\emph{homogeneous} if there~exists~$M>0$ such that for~every~$\alpha>0$ and~$\btheta,\bx$ we have~$N_{\alpha \btheta}(\bx) = \alpha^M N_\btheta(\bx)$. Note that fully-connected~ReLU~networks are homogeneous.
We say that a~trajectory~$\btheta(t)$~of~GF {\em converges~in~direction}~to~$\tilde{\btheta}$ if 
\[
    \lim_{t \to \infty}\frac{\btheta(t)}{\norm{\btheta(t)}} = \frac{\tilde{\btheta}}{\snorm{\tilde{\btheta}}}~.
\]
The following well-known result characterizes the implicit~bias in homogeneous~neural~networks trained with the logistic or the exponential~loss:

\begin{lemma}[Paraphrased from \cite{lyu2019gradient} and \cite{ji2020directional}] \label{lemma:directional-convergence}
    Let~$N_{\btheta}$ be a homogeneous~ReLU~neural~network. 
    Consider minimizing the average of either the exponential or the logistic~loss over a binary~classification dataset using~GF.
    Suppose that the average~loss converges to zero as~$t \to \infty$.
    Then, GF converges~in~direction to a first~order stationary~point (KKT~point) of the following maximum~margin~problem in parameter~space:
\begin{equation}
\label{eq:optimization problem}
	\min_\btheta \frac{1}{2} \norm{\btheta}^2 \;\;\;\; \text{s.t. } \;\;\; \forall i \in [n] \;\; y_i N_\btheta(\bx_i) \geq 1~.
\end{equation}
\end{lemma}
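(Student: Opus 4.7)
The plan is to follow the classical strategy of Lyu--Li and Ji--Telgarsky. The starting point is to observe that for exponentially-tailed losses, if $L_{X,\by}(\btheta(t)) \to 0$ then past some time $t_0$ every margin $q_i(\btheta) := y_i N_\btheta(\bx_i)$ is strictly positive, and furthermore $\|\btheta(t)\| \to \infty$ (since $\ell$ is strictly positive on bounded arguments). So the question is only about the direction $\btheta(t)/\|\btheta(t)\|$.

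The core technical step is to introduce a \emph{smoothed margin} surrogate such as $\tilde\gamma(\btheta) := -\log L_{X,\by}(\btheta) / \|\btheta\|^M$ and show it is monotonically non-decreasing along GF. This uses $M$-homogeneity of $N_\btheta$, which via Euler's identity gives $\langle \btheta, \nabla N_\btheta(\bx)\rangle = M\cdot N_\btheta(\bx)$ (interpreted in the Clarke subdifferential sense for ReLU), and therefore $\langle \btheta, \nabla L \rangle$ can be expressed in terms of $\sum_i \ell'(q_i) q_i$. Combining this with a Cauchy--Schwarz-type inequality applied to the GF equation $\dot\btheta = -\nabla L$, one obtains $\frac{d}{dt}\tilde\gamma \ge 0$, with the normalized margin $\gamma(\btheta) := \min_i q_i(\btheta)/\|\btheta\|^M$ asymptotically having the same limit as $\tilde\gamma$.

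Next, I would upgrade monotonicity of $\tilde\gamma$ to directional convergence. Since $\tilde\gamma$ is definable in the o-minimal structure generated by the exponential and semialgebraic sets (ReLU networks are piecewise polynomial, hence definable), a Kurdyka--{\L}ojasiewicz inequality applies to $\tilde\gamma$ restricted to the unit sphere. Combined with the monotonicity derivation, this yields that the arc-length of $\btheta(t)/\|\btheta(t)\|$ on the sphere is finite, so a unique limit direction $\tilde\btheta$ exists. Finally, verifying the KKT conditions at $\tilde\btheta$ amounts to producing nonnegative multipliers $\lambda_i$, supported on the margin-tight constraints $y_i N_{\tilde\btheta}(\bx_i)=1$, with $\tilde\btheta \in \sum_i \lambda_i y_i \,\partial^\circ N_{\tilde\btheta}(\bx_i)$. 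These multipliers are read off from the weights $\ell'(q_i(\btheta(t)))$ appearing in $\nabla L$, after an appropriate normalization that becomes exact in the limit $t \to \infty$; this identifies the limit direction with a first-order stationary point of \eqref{eq:optimization problem}.

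The main obstacle throughout is handling the non-smoothness of ReLU rigorously. Gradients, Euler's identity, and the chain rule used in the monotonicity calculation all need to be interpreted via the Clarke subdifferential, and the Kurdyka--{\L}ojasiewicz step requires verifying that the surrogate $\tilde\gamma$ is definable in a suitable o-minimal structure. These technicalities are precisely what Lyu--Li and Ji--Telgarsky work out in detail, and the lemma as stated is a direct paraphrase of their conclusions, so for our purposes the proof is imported from those references rather than reproduced.
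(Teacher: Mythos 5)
Your treatment matches the paper's: the lemma is stated there as a paraphrase of \cite{lyu2019gradient} and \cite{ji2020directional} and is cited rather than proved, and your sketch (smoothed-margin monotonicity via homogeneity and Euler's identity, directional convergence through definability/Kurdyka--{\L}ojasiewicz arguments, and reading off the KKT multipliers from the loss derivatives, all in the Clarke subdifferential sense) accurately summarizes the argument those references carry out. Nothing further is needed.
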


The above lemma suggests that GF tends to converge~in~direction to a network with margin~$1$ and small~$\ell_2$~norm. 
In the following theorem we show that in deep~overparameterized~ReLU~networks, if GF converges~in~direction to an optimal~solution of Problem~\ref{eq:optimization problem} (from the above lemma) then the ratios between the Frobenius and the spectral~norms in the weight~matrices tend to be small. Formally, we have the following:

\begin{theorem}\label{thm:positive-exp}
    Let~$\left\{ \left( \bx_i, y_i \right) \right\}_{i=1}^n \subseteq \mathbb{R}^{\din} \times \set{-1,1}$ be a binary~classification dataset, and assume that there~is~$i \in [n]$ with $\norm{\bx_i} \leq 1$. Assume that there~is a fully-connected~neural~network~$N$ of width~$m \geq 2$ and depth~$k \geq 2$, such that for~all~$i \in [n]$ we have~$y_i N(\bx_i) \geq 1$, and the weight~matrices~$W_1,\ldots,W_k$ of~$N$ satisfy~$\norm{W_i}_F \leq B$ for some~$B>0$.
    Let~$N_\btheta$ be a fully-connected~neural~network of width~$m' \geq m$ and depth~$k' > k$ parameterized by~$\btheta$. Let~$\btheta^* = \left[W_1^*,\ldots,W_{k'}^*\right]$ be a global~optimum of Problem~\ref{eq:optimization problem}. Namely, $\btheta^*$~parameterizes a minimum-norm~fully-connected~network of width~$m'$ and depth~$k'$ that labels the dataset correctly with margin~$1$.
    Then, we have
    \begin{equation} \label{eq:average ratio exp}
        \frac{1}{k'} \sum_{i=1}^{k'} \frac{\norm{W_i^*}_\sigma}{\norm{W_i^*}_F} \geq \frac{1}{\sqrt{2}} \cdot \left( \frac{\sqrt{2}}{B} \right)^{\frac{k}{k'}} \cdot \sqrt{\frac{k'}{k'+1}}~.
    \end{equation}
    Equivalently, we have the following upper~bound on the harmonic~mean of the ratios~$\frac{\norm{W_i^*}_F}{\norm{W_i^*}_\sigma}$:
    \begin{equation} \label{eq:harmonic ratio exp}
        \frac{k'}{\mathlarger{\sum}_{i=1}^{k'} \left( \frac{\norm{W_i^*}_F}{\norm{W_i^*}_\sigma} \right)^{-1}}
        \leq \sqrt{2} \cdot \left( \frac{B}{\sqrt{2}} \right)^{\frac{k}{k'}} \cdot \sqrt{\frac{k'+1}{k'}}~.
    \end{equation}
\end{theorem}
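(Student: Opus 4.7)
The plan is to derive the bound by combining two estimates: a lower bound $\prod_{j=1}^{k'}\|W_j^*\|_\sigma\ge 1$ coming from the margin requirement plus the Lipschitz structure of ReLU networks, and an upper bound on $\|\btheta^*\|^2=\sum_j\|W_j^*\|_F^2$ coming from optimality of $\btheta^*$ against an explicit depth-$k'$ competitor. For the first, iterating $\|W\bu\|\le\|W\|_\sigma\|\bu\|$ with the fact that $\sigma$ is $1$-Lipschitz and $\sigma(\zero)=\zero$ yields $|N_{\btheta^*}(\bx)|\le\prod_j\|W_j^*\|_\sigma\cdot\|\bx\|$ for every input $\bx$; applied to the hypothesized $\bx_i$ with $\|\bx_i\|\le 1$ together with $y_iN_{\btheta^*}(\bx_i)\ge 1$, this immediately gives the claim.

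For the upper bound on $\|\btheta^*\|$, I would exhibit an explicit depth-$k'$, width-at-most-$m'$ ReLU network $\tilde N$ realizing margin $\ge 1$, so that optimality of $\btheta^*$ forces $\|\btheta^*\|^2\le\|\tilde\btheta\|^2$. The key idea is that the output of $N$'s penultimate ReLU already lies in $\reals_+^m$, which lets us losslessly transport $N$'s \emph{signed} scalar output through additional ReLU layers using a width-$2$ sign-preserving encoding. Concretely, starting from $N$ I would (a)~replace the $1\times m$ output layer $W_k$ by a $2\times m$ layer whose rows are $W_k$ and $-W_k$, so that after the next ReLU the nonnegative pair $(z_+,z_-)$ encodes $z:=N(\bx)=z_+-z_-$; (b)~insert $k'-k-1$ scaled identity layers $\alpha I_2$, which preserve any nonnegative pair up to a factor $\alpha$; (c)~finish with a merge layer $\beta(1,-1)$, which outputs a positive scalar multiple of $N(\bx)$. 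The total width is $\max(m,2)=m\le m'$, and choosing per-layer scalings (whose product must exceed $1$ so that the margin stays $\ge 1$) via AM-GM to equalize the $k'$ squared Frobenius norms produces a bound $\|\tilde\btheta\|^2\lesssim k'\cdot B^{2k/k'}\cdot 2^{(k'-k+1)/k'}$, which matches (up to a small slack) the quantity $(k'+1)\cdot B^{2k/k'}\cdot 2^{(k'-k)/k'}$ implicit in~\eqref{eq:average ratio exp}.

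The two estimates combine through a double application of AM-GM. Writing $r_j:=\|W_j^*\|_\sigma/\|W_j^*\|_F\in(0,1]$, I would chain
\[
    \frac{1}{k'}\sum_{j=1}^{k'} r_j \;\ge\; \Bigl(\prod_{j=1}^{k'} r_j\Bigr)^{1/k'} \;=\; \frac{\bigl(\prod_j\|W_j^*\|_\sigma\bigr)^{1/k'}}{\bigl(\prod_j\|W_j^*\|_F\bigr)^{1/k'}} \;\ge\; \frac{\sqrt{k'}}{\|\btheta^*\|}\,,
\]
where the final step uses $\prod_j\|W_j^*\|_\sigma\ge 1$ together with the second AM-GM bound $(\prod_j\|W_j^*\|_F)^{1/k'}\le\|\btheta^*\|/\sqrt{k'}$. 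Substituting the upper bound on $\|\btheta^*\|$ yields~\eqref{eq:average ratio exp}, and the harmonic-mean form~\eqref{eq:harmonic ratio exp} is its algebraic reciprocal. The main obstacle is step~(b) of the construction: verifying that the inserted identity-style layers are genuinely compatible with the sign pattern of the intermediate activations (which is precisely why a width-$2$ split encoding is needed rather than a width-$1$ identity), staying within the width budget $m'$, and tracking per-layer scalings through the AM-GM optimization to obtain the stated constants. Beyond that, the argument is routine Lipschitz and AM-GM manipulation.
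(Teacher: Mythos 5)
Your proposal is correct, and its skeleton coincides with the paper's: the same width-$2$ competitor (split the signed output into $\pm$ copies via a $2\times m$ layer with rows $W_k$ and $-W_k$, propagate through scaled $I_2$ layers, merge with $\beta(1,-1)$), and the same Lipschitz/margin argument giving $\prod_{j}\norm[\sigma]{W_j^*}\geq 1$. Where you genuinely diverge is the final combination step: the paper first proves a balancedness lemma (any global optimum of Problem~\ref{eq:optimization problem} has $\norm[F]{W_i^*}=\norm[F]{W_j^*}$ for all $i,j$, via a rescaling perturbation), sets $B^*$ to this common value, and only then applies AM--GM to the spectral norms; you instead bypass balancedness entirely with a double AM--GM, namely $\frac{1}{k'}\sum_j r_j\geq(\prod_j r_j)^{1/k'}$ together with $(\prod_j\norm[F]{W_j^*})^{1/k'}\leq\norm{\btheta^*}/\sqrt{k'}$. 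This chain is valid (all $\norm[F]{W_j^*}>0$ by the margin constraint), and plugging in the paper's bound $\norm{\btheta^*}^2\leq 2\,(2/B^2)^{-k/k'}(k'+1)$ reproduces \eqref{eq:average ratio exp} exactly, with \eqref{eq:harmonic ratio exp} as its reciprocal; your "equalized-scaling" bound $k'B^{2k/k'}2^{(k'-k+1)/k'}$ is even marginally tighter, since $2^{1/k'}\leq 1+1/k'$, so the loose "$\lesssim$/small slack" phrasing resolves in your favor rather than hiding a gap. The trade-off: your route is more elementary (no perturbation argument needed and no need to know the optimum is balanced), while the paper's route yields the per-layer information $\norm[F]{W_i^*}=B^*\leq\sqrt{2}\,(B/\sqrt{2})^{k/k'}\sqrt{(k'+1)/k'}$ for every layer as a byproduct, not just a statement about the average. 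To make your write-up fully rigorous you would only need to replace the slack-laden estimate of $\snorm{\tilde\btheta}^2$ by the explicit two-parameter computation (as in the paper) or by the exact AM--GM equalization bound, and note that the width-$m$ competitor is admissible for the width-$m'$ problem by zero-padding.
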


By the above theorem, if~$k'$ is much~larger than~$k$, then the average~ratio between the spectral and the Frobenius~norms~(\eqref{eq:average ratio exp}) is at~least roughly~$1/\sqrt{2}$. 
Likewise, the harmonic~mean of the ratio between the Frobenius and the spectral~norms~(\eqref{eq:harmonic ratio exp}), i.e., the square~root of the stable~rank, is at~most roughly~$\sqrt{2}$.
Note that the result does~not depend on the width of the networks. Thus, it holds even if the width~$m'$ is very large.
Similarly to the case of \thmref{thm:positive-square-loss}, we note that the network~$N$ of depth~$k$ might have high~ranks (e.g., rank~$m$ for~each weight~matrix), but once we consider networks~of~a~large~depth~$k'$, then the dataset becomes realizable by a network of small~average~rank, and GF converges to such a network.

The combination of the above result with \lemref{lemma:directional-convergence} suggests that, in overparameterized~deep~fully-connected~networks, GF tends to converge~in~direction to neural~networks with low~ranks. Note that we consider the exponential and the logistic~losses, and hence if the loss tends to zero as~$t \to \infty$, then we have~$\norm{\btheta(t)} \to \infty$. To conclude, in our case, the parameters tend to have an infinite~norm and to converge~in~direction to a low-rank~solution.
Moreover, note that the ratio between the spectral and the Frobenius~norms is invariant~to~scaling, and hence it suggests that after a sufficiently~long~time, GF tends to reach a network with low~ranks. 

\section{Proof ideas} \label{sec:ideas}

In this section we describe the main ideas for the proofs of Theorems~\ref{T2}, \ref{T3}, \ref{thm:positive-square-loss} and~\ref{thm:positive-exp}. The full proofs are given in the appendix.

\subsection{Theorem~\ref{T2}}

We define the following regions (see  \figref{fig:D_and_Ss_regions}):
 \begin{align*}
        &\mathcal{D} := \{ \bw \in \mathbb{R}^2 \mid \forall i \in \{1,2\}, \sigma(\bw^\top \bx_i) \le 0\}~, \\
        &\mathcal{S} := \{ \bw \in \mathbb{R}^2 \mid \forall i \in \{1,2\}, \sigma(\bw^\top \bx_i) > 0 \}~, \\
        &\mathcal{S}_1 := \{ \bw \in \mathbb{R}^2  \mid \sigma(\bw^\top \bx_1) > 0, \sigma(\bw^\top \bx_2) \leq 0\}~, \\
        &\mathcal{S}_2 := \{ \bw \in \mathbb{R}^2  \mid \sigma(\bw^\top \bx_2) > 0, \sigma(\bw^\top \bx_1) \leq 0\}~. 
    \end{align*}

 \begin{figure}[t]
      \centering
      \frame{\includegraphics[scale=.35]{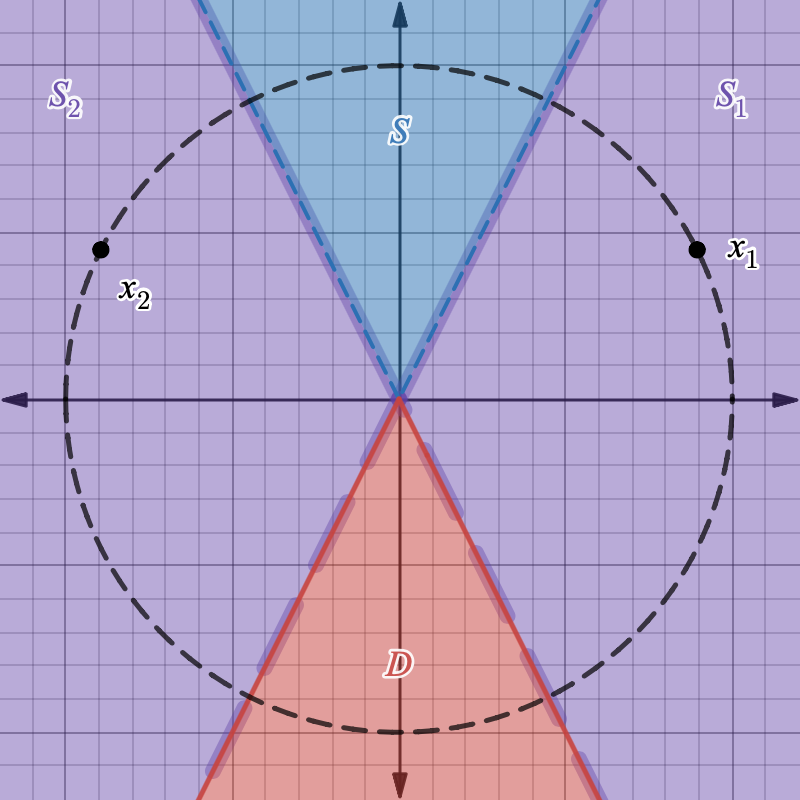}}
      \caption{Regions \(\mathcal{D}\), \(\mathcal{S}_1\), \(\mathcal{S}_2\), \(\mathcal{S}\).}
      \label{fig:D_and_Ss_regions}
\end{figure}
Intuitively,~$\mathcal{D}$~defines the ``dead''~region where the relevant neuron will output~$0$ on both~$\bx_1,\bx_2$; $\mathcal{S}$~is the ``active''~region where the relevant neuron will output a positive~output on both~$\bx_1,\bx_2$; and $\mathcal{S}_1,\mathcal{S}_2$~are the ``partially~active''~regions, where the relevant neuron will output a positive~output on one point, and~$0$ on the other. 

Assume towards~contradiction that GF converges to some zero-loss~network~$N_{W(\infty),V(\infty)}$ with~$\rank(W(\infty))<2$. 
Since $N_{W(\infty),V(\infty)}$ attains zero~loss, then $Y = V(\infty) \sigma \left( W(\infty) X \right)$, and hence 
\begin{align} \label{eq:rank WX}
	2 
	= \rank(Y) 
	= \rank \left( V(\infty) \sigma \left(W(\infty) X \right) \right) 
	\leq \rank \left( \sigma \left( W(\infty) X \right) \right)~.
\end{align} 
Therefore, the weight~vectors~$\bw_1(\infty)$ and~$\bw_2(\infty)$ are~not in the region~$\cd$. Indeed, if $\bw_1(\infty)$ or~$\bw_2(\infty)$ are in~$\cd$, then at~least one of the rows of~$\sigma(W(\infty) X)$ is~zero, in~contradiction to \eqref{eq:rank WX}.
In~particular, it implies that~$\bw_1(\infty)$ and~$\bw_2(\infty)$ are non-zero. Since by our assumption we have~$\rank(W(\infty))<2$, then we conclude that~$\rank(W(\infty))=1$. We denote~$\bw_2(\infty) = \alpha \bw_1(\infty)$ where~$\alpha \neq 0$. Note that if~$\alpha > 0$, then~$\sigma(\bw_2(\infty)^\top \bx_j) = \alpha \sigma(\bw_1(\infty)^\top \bx_j)$ for~all~$j \in \{1,2\}$, in~contradiction to \eqref{eq:rank WX}. Thus,~$\alpha < 0$. Since we also have~$\bw_1(\infty),\bw_2(\infty) \not \in \cd$, then one of these weight~vectors is in~$\cs_1 \setminus \partial \cs_1$ and the other is in~$\cs_2 \setminus \partial \cs_2$ (as can be seen from \figref{fig:D_and_Ss_regions}). Assume~w.l.o.g. that~$\bw_1(\infty) \in \cs_1 \setminus \partial \cs_1$ and $\bw_2(\infty) \in \cs_2 \setminus \partial \cs_2$.

By observing the gradients of~$L_{X,Y}$ w.r.t.~$\bw_i$ for~$i \in \{1,2\}$, the following facts follow. First, if~$\bw_i(t) \in \cd$ at some time~$t$, then~$\frac{d}{dt} \bw_i(t) = \zero$, hence $\bw_i$~remains at~$\cd$ indefinitely, in~contradiction to~$\bw_i(\infty) \in \cs_i \setminus \partial \cs_i$. Thus, the trajectory~$\bw_i(t)$ does~not visit~$\cd$. Second, if~$\bw_i(t) \in \cs_i$ at time~$t$, then~$\frac{d}{dt} \bw_i(t) \in \spn\{\bx_i\}$. Since~$\bw_i(\infty) \in \cs_i \setminus \partial \cs_i$, we can consider the last~time~$t'$ that~$\bw_i$ enters~$\cs_i$, which can be either at the initialization (i.e.,~$t'=0$) or when moving from~$\cs$ (i.e.,~$t'>0$). For~all time~$t \geq t'$ we have~$\frac{d}{dt} \bw_i(t) \in \spn\{\bx_i\}$. It allows us to conclude that~$\bw_i(\infty)$ must be in a region~$\ca_i$ which is illustrated in \figref{fig:F_and_A_regions} (by the union of the orange and green regions).

\begin{figure}[t]
    \centering
    \frame{\includegraphics[scale=.35]{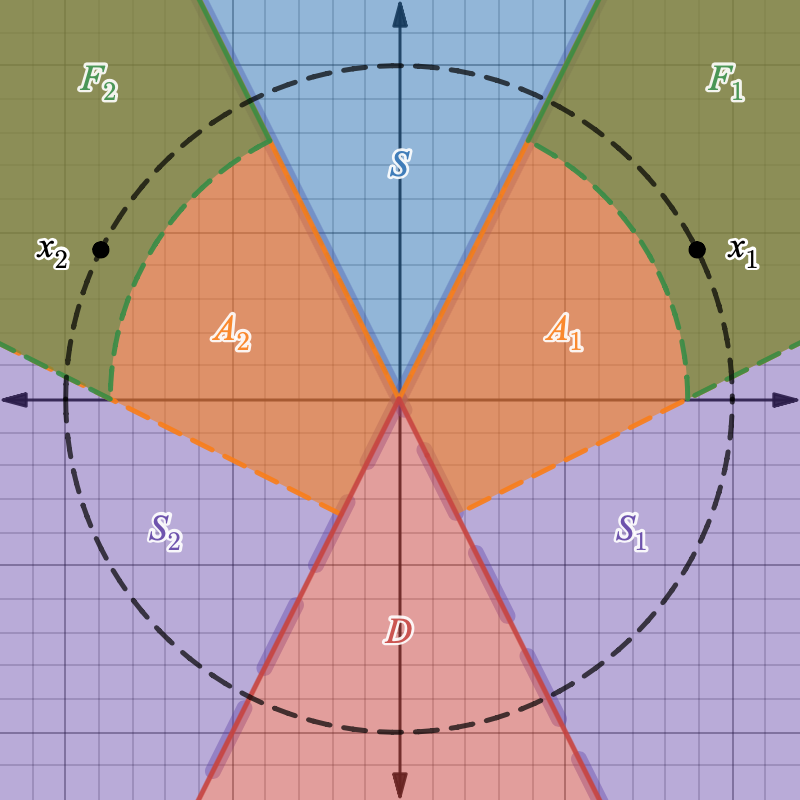}}
    \caption{Regions~$\cf_i$~(in~green) and~$\ca_i$ (the union of the green and orange regions). A dashed line marks an open~boundary.}
    \label{fig:F_and_A_regions}
\end{figure}

Furthermore, we show that~$\norm{\bw_i(\infty)}$ cannot be too~small, namely, obtaining a lower~bound on~$\norm{\bw_i(\infty)}$. First, a theorem from \cite{du2018algorithmic} implies that~$\norm{\bw_i(t)}^2 - \norm{\bv_i(t)}^2$ remains constant throughout the training. Since at the initialization both~$\norm{\bw_i(0)}$ and~$\norm{\bv_i(0)}$ are small, the consequence is that~$\norm{\bv_i(\infty)}$ is small if~$\norm{\bw_i(\infty)}$ is small.
Also, since~$N_{W(\infty),V(\infty)}$ attains zero~loss and~$\bw_i(\infty) \in \cs_i$ for all $i \in \{1,2\}$, then we have~$\by_i = \bv_i(\infty) (\bw_i(\infty)^\top \bx_i)$, namely, only the $i$-th~hidden~neuron contributes to the output of~$N_{W(\infty),V(\infty)}$ for the input~$\bx_i$. Since~$\norm{\by_i}=\norm{\bx_i}=1$, it is impossible that both~$\norm{\bw_i(\infty)}$ and~$\norm{\bv_i(\infty)}$ are small. Hence, we are able to obtain a lower~bound on~$\norm{\bw_i(\infty)}$, which implies that~$\bw_i(\infty)$ is in a region~$\cf_i$ which is illustrated in \figref{fig:F_and_A_regions}.

Finally, we show that since $\bw_1(\infty) \in \cf_1$ and $\bw_2(\infty) \in \cf_2$ then the angle between $\bw_1(\infty)$ and $\bw_2(\infty)$ is smaller than $\pi$, in~contradiction to $\bw_2(\infty) = \alpha \bw_1(\infty)$.

\subsection{Theorem~\ref{T3}}

We show that if the initialization is such that~$\bw_1(0) \in \cs_1 \setminus \partial \cs_1$ and~$\bw_2(0) \in \cs_2 \setminus \partial \cs_2$ (or, equivalently, that~$\bw_1(0) \in \cs_2 \setminus \partial \cs_2$ and~$\bw_2(0) \in \cs_1 \setminus \partial \cs_1$), then GF converges to a zero-loss~network, and~$\norm{\bw_1(\infty)}, \norm{\bw_2(\infty)}$, $\measuredangle(\bw_1(\infty),\bw_2(\infty))$ are in the required intervals. Since by simple geometric arguments we can show that the initialization satisfies this requirement with~probability at~least~$2 \cdot \left(\frac{\measuredangle(\bx_1,\bx_2)}{2\pi} \right)^2$, the theorem follows.

Indeed, suppose that~$\bw_1(0) \in \cs_1 \setminus \partial \cs_1$ and~$\bw_2(0) \in \cs_2 \setminus \partial \cs_2$. We argue that GF converges to a zero-loss~network and~$\norm{\bw_1(\infty)}, \norm{\bw_2(\infty)}, \measuredangle(\bw_1(\infty),\bw_2(\infty))$ are in the required intervals, as follows. 
By analyzing the dynamics of GF for such an initialization, we show that \hbox{for all $t$ and $i$} we have~$\frac{d}{dt} \bw_i(t) = C_i(t) \bx_i$ for some~$C_i(t) \geq 0$. Thus,~$\bw_i(t)$ moves only in the direction of~$\bx_i$, and~$\bw_i(t) \in \cs_i \setminus \partial \cs_i$ for~all~$t$.
Moreover, we are able to prove that these properties of the trajectories~$\bw_1(t)$ and~$\bw_2(t)$ imply that GF converges to a zero-loss~network~$N_{W(\infty),V(\infty)}$.
Then, by similar arguments to the proof of \thmref{T2} we have~$\bw_i(\infty) \in \cf_i$ for~all~$i \in \{1,2\}$, where~$\cf_i$ are the regions from \figref{fig:F_and_A_regions}, and it allows us to obtain the required bounds on~$\norm{\bw_1(\infty)}, \norm{\bw_2(\infty)}$, and~$\measuredangle(\bw_1(\infty),\bw_2(\infty))$.

\subsection{Theorems~\ref{thm:positive-square-loss} and~\ref{thm:positive-exp}}

The intuition for the proofs of both theorems can be roughly described as follows. If the dataset is realizable by a shallow~network where the Frobenius~norm of each layer is~$B$, then it is also realizable by a deep~network where the Frobenius~norm of each layer is~$B^*$, where~$B^*$ is much~smaller than~$B$. Moreover, if the network is sufficiently~deep then~$B^*$ is~not much~larger than~$1$. On the other hand, since for the input~$\bx_i$ with~$\norm{\bx_i} \leq 1$ the output of the network is of size at~least~$1$, then the average~spectral~norm of the layers is at~least~$1$. Hence, the average~ratio between the spectral and the Frobenius~norms cannot be too small.

We now describe the proof ideas in a bit more detail, starting with \thmref{thm:positive-square-loss}. 
We use the network~$N$ of width~$m$ and depth~$k$ to construct a network~$N'$ of width~$m' \geq m$ and depth~$k' > k$ as follows. The first~$k$~layers of~$N'$ are obtained by scaling the layers of~$N$ by a factor~$\alpha := \left( \frac{1}{B} \right)^{\frac{k'-k}{k'}}$. Since the output~dimension of~$N$ is~$1$, then the $k$-th hidden~layer of~$N'$ has width~$1$. Then, the network~$N'$ has~$k'-k$ additional layers of width~$1$, such that the weight in each of these layers is~$\beta := \left( \frac{1}{B} \right)^{-\frac{k}{k'}}$. Overall, given input~$\bx_i$, we have 
\[
    N'(\bx_i) = N(\bx_i) \cdot \alpha^k \cdot \beta^{k'-k} = N(\bx_i)~.
\]
We denote by $\btheta'$ the parameters of the network~$N'$.

Let~$\btheta^* = \left[W^*_1,\ldots,W^*_{k'} \right]$ be a global~optimum of Problem~\ref{eq:optimization problem square loss}. From the optimality of~$\btheta^*$ it is possible to show that the layers in~$\btheta^*$ must be balanced, namely,~$\norm{W^*_i}_F = \norm{W^*_j}_F$ for~all~$i,j \in [k']$. We denote by~$B^*$ the Frobenius~norm of the layers. From the global~optimality of~$\btheta^*$ we also have~$\norm{\btheta^*} \leq \norm{\btheta'}$. Hence, by a calculation we can obtain
\begin{equation*} \label{eq:bound B star}
	B^* \leq B^{\frac{k}{k'}}~.
\end{equation*}
Moreover, we show that since there~is~$i \in [n]$ with~$\norm{\bx_i} \leq 1$ and~$y_i \geq 1$, then
\[
	\frac{1}{k'} \sum_{i \in [k']} \norm{W^*_i}_\sigma \geq 1~.
\]
Combining the last two displayed equations we get
\[
	 \frac{1}{k'} \sum_{i \in [k']} \frac{\norm{W^*_i}_\sigma}{\norm{W^*_i}_F}
    	= \frac{1}{B^*} \cdot \frac{1}{k'} \sum_{i \in [k']} \norm{W^*_i}_\sigma
    	\geq \left( \frac{1}{B} \right)^{\frac{k}{k'}}~,
\]
as required.

Note that the arguments above do~not depend on the ranks of the layers in~$N$. Thus, even if the weight~matrices in~$N$ have high~ranks, once we consider deep~networks which are optimal~solutions to Problem~\ref{eq:optimization problem square loss}, the ratios between the spectral and the Frobenius~norms are close~to~$1$.

We now turn to \thmref{thm:positive-exp}. The proof follows a similar approach to the proof of \thmref{thm:positive-square-loss}. However, here the outputs of the network~$N$ can be either positive or negative. Hence, when constructing the network~$N'$ as above, we cannot have width~$1$ in layers~$k+1, \ldots, k'$, since the ReLU~activation will~not allow us to pass both positive and negative values. Still, we show that we can define a network~$N'$ such that the width in layers~$k+1, \ldots, k'$ is $2$ and we have~$N'(\bx_i)=N(\bx_i)$ for all $i \in [n]$. Then, the theorem follows by arguments similar to the proof of \thmref{thm:positive-square-loss}, with the required modifications.

\subsection*{Funding Acknowledgements}
This research is supported in part by European~Research~Council~(ERC) grant~754705.

\bibliography{bib}

\begin{thebibliography}{44}
\providecommand{\natexlab}[1]{#1}
\providecommand{\url}[1]{\texttt{#1}}
\expandafter\ifx\csname urlstyle\endcsname\relax
  \providecommand{\doi}[1]{doi: #1}\else
  \providecommand{\doi}{doi: \begingroup \urlstyle{rm}\Url}\fi

\bibitem[Alvarez and Salzmann(2017)]{alvarez2017compression}
J.~M. Alvarez and M.~Salzmann.
\newblock Compression-aware training of deep networks.
\newblock \emph{Advances in neural information processing systems}, pages
  856--867, 2017.

\bibitem[Arora et~al.(2018)Arora, Ge, Neyshabur, and Zhang]{arora2018stronger}
S.~Arora, R.~Ge, B.~Neyshabur, and Y.~Zhang.
\newblock Stronger generalization bounds for deep nets via a compression
  approach.
\newblock In \emph{International Conference on Machine Learning}, pages
  254--263. PMLR, 2018.

\bibitem[Arora et~al.(2019)Arora, Cohen, Hu, and Luo]{arora2019implicit}
S.~Arora, N.~Cohen, W.~Hu, and Y.~Luo.
\newblock Implicit regularization in deep matrix factorization.
\newblock In \emph{Advances in Neural Information Processing Systems}, pages
  7413--7424, 2019.

\bibitem[Azulay et~al.(2021)Azulay, Moroshko, Nacson, Woodworth, Srebro,
  Globerson, and Soudry]{azulay2021implicit}
S.~Azulay, E.~Moroshko, M.~S. Nacson, B.~Woodworth, N.~Srebro, A.~Globerson,
  and D.~Soudry.
\newblock On the implicit bias of initialization shape: Beyond infinitesimal
  mirror descent.
\newblock In \emph{International Conference on Machine Learning}, pages
  468--477, 2021.

\bibitem[Belabbas(2020)]{belabbas2020implicit}
M.~A. Belabbas.
\newblock On implicit regularization: Morse functions and applications to
  matrix factorization.
\newblock \emph{arXiv preprint arXiv:2001.04264}, 2020.

\bibitem[Chizat and Bach(2020)]{chizat2020implicit}
L.~Chizat and F.~Bach.
\newblock Implicit bias of gradient descent for wide two-layer neural networks
  trained with the logistic loss.
\newblock In \emph{Conference on Learning Theory}, pages 1305--1338. PMLR,
  2020.

\bibitem[Denton et~al.(2014)Denton, Zaremba, Bruna, LeCun, and
  Fergus]{denton2014exploiting}
E.~L. Denton, W.~Zaremba, J.~Bruna, Y.~LeCun, and R.~Fergus.
\newblock Exploiting linear structure within convolutional networks for
  efficient evaluation.
\newblock In \emph{Advances in neural information processing systems}, pages
  1269--1277, 2014.

\bibitem[Du et~al.(2018)Du, Hu, and Lee]{du2018algorithmic}
S.~S. Du, W.~Hu, and J.~D. Lee.
\newblock Algorithmic regularization in learning deep homogeneous models:
  Layers are automatically balanced.
\newblock \emph{Advances in Neural Information Processing Systems}, pages
  384--395, 2018.

\bibitem[Eftekhari and Zygalakis(2020)]{eftekhari2020implicit}
A.~Eftekhari and K.~Zygalakis.
\newblock Implicit regularization in matrix sensing: A geometric view leads to
  stronger results.
\newblock \emph{arXiv preprint arXiv:2008.12091}, 2020.

\bibitem[Gidel et~al.(2019)Gidel, Bach, and Lacoste-Julien]{gidel2019implicit}
G.~Gidel, F.~Bach, and S.~Lacoste-Julien.
\newblock Implicit regularization of discrete gradient dynamics in linear
  neural networks.
\newblock In \emph{Advances in Neural Information Processing Systems}, pages
  3202--3211, 2019.

\bibitem[Gunasekar et~al.(2018{\natexlab{a}})Gunasekar, Lee, Soudry, and
  Srebro]{gunasekar2018characterizing}
S.~Gunasekar, J.~Lee, D.~Soudry, and N.~Srebro.
\newblock Characterizing implicit bias in terms of optimization geometry.
\newblock In \emph{International Conference on Machine Learning}, pages
  1832--1841. PMLR, 2018{\natexlab{a}}.

\bibitem[Gunasekar et~al.(2018{\natexlab{b}})Gunasekar, Lee, Soudry, and
  Srebro]{gunasekar2018bimplicit}
S.~Gunasekar, J.~D. Lee, D.~Soudry, and N.~Srebro.
\newblock Implicit bias of gradient descent on linear convolutional networks.
\newblock In \emph{Advances in Neural Information Processing Systems}, pages
  9461--9471, 2018{\natexlab{b}}.

\bibitem[Gunasekar et~al.(2018{\natexlab{c}})Gunasekar, Woodworth,
  Bhojanapalli, Neyshabur, and Srebro]{gunasekar2018implicit}
S.~Gunasekar, B.~Woodworth, S.~Bhojanapalli, B.~Neyshabur, and N.~Srebro.
\newblock Implicit regularization in matrix factorization.
\newblock In \emph{2018 Information Theory and Applications Workshop (ITA)},
  pages 1--10. IEEE, 2018{\natexlab{c}}.

\bibitem[Ji and Telgarsky(2018{\natexlab{a}})]{ji2018gradient}
Z.~Ji and M.~Telgarsky.
\newblock Gradient descent aligns the layers of deep linear networks.
\newblock In \emph{International Conference on Learning Representations},
  2018{\natexlab{a}}.

\bibitem[Ji and Telgarsky(2018{\natexlab{b}})]{ji2018risk}
Z.~Ji and M.~Telgarsky.
\newblock Risk and parameter convergence of logistic regression.
\newblock \emph{arXiv preprint arXiv:1803.07300}, 2018{\natexlab{b}}.

\bibitem[Ji and Telgarsky(2020)]{ji2020directional}
Z.~Ji and M.~Telgarsky.
\newblock Directional convergence and alignment in deep learning.
\newblock \emph{Advances in Neural Information Processing Systems}, 2020.

\bibitem[Ji and Telgarsky(2021)]{ji2021characterizing}
Z.~Ji and M.~Telgarsky.
\newblock Characterizing the implicit bias via a primal-dual analysis.
\newblock In \emph{Algorithmic Learning Theory}, pages 772--804. PMLR, 2021.

\bibitem[Ji et~al.(2020)Ji, Dud{\'\i}k, Schapire, and
  Telgarsky]{ji2020gradient}
Z.~Ji, M.~Dud{\'\i}k, R.~E. Schapire, and M.~Telgarsky.
\newblock Gradient descent follows the regularization path for general losses.
\newblock In \emph{Conference on Learning Theory}, pages 2109--2136. PMLR,
  2020.

\bibitem[Jin and Mont{\'u}far(2020)]{jin2020implicit}
H.~Jin and G.~Mont{\'u}far.
\newblock Implicit bias of gradient descent for mean squared error regression
  with wide neural networks.
\newblock \emph{arXiv preprint arXiv:2006.07356}, 2020.

\bibitem[Li et~al.(2018)Li, Ma, and Zhang]{li2018algorithmic}
Y.~Li, T.~Ma, and H.~Zhang.
\newblock Algorithmic regularization in over-parameterized matrix sensing and
  neural networks with quadratic activations.
\newblock In \emph{Conference On Learning Theory}, pages 2--47. PMLR, 2018.

\bibitem[Li et~al.(2020)Li, Luo, and Lyu]{li2020towards}
Z.~Li, Y.~Luo, and K.~Lyu.
\newblock Towards resolving the implicit bias of gradient descent for matrix
  factorization: Greedy low-rank learning.
\newblock In \emph{International Conference on Learning Representations}, 2020.

\bibitem[Lyu and Li(2019)]{lyu2019gradient}
K.~Lyu and J.~Li.
\newblock Gradient descent maximizes the margin of homogeneous neural networks.
\newblock In \emph{International Conference on Learning Representations}, 2019.

\bibitem[Ma et~al.(2018)Ma, Wang, Chi, and Chen]{ma2018implicit}
C.~Ma, K.~Wang, Y.~Chi, and Y.~Chen.
\newblock Implicit regularization in nonconvex statistical estimation: Gradient
  descent converges linearly for phase retrieval and matrix completion.
\newblock In \emph{International Conference on Machine Learning}, pages
  3345--3354. PMLR, 2018.

\bibitem[Moroshko et~al.(2020)Moroshko, Woodworth, Gunasekar, Lee, Srebro, and
  Soudry]{moroshko2020implicit}
E.~Moroshko, B.~E. Woodworth, S.~Gunasekar, J.~D. Lee, N.~Srebro, and
  D.~Soudry.
\newblock Implicit bias in deep linear classification: Initialization scale vs
  training accuracy.
\newblock \emph{Advances in Neural Information Processing Systems}, 2020.

\bibitem[Nacson et~al.(2019{\natexlab{a}})Nacson, Gunasekar, Lee, Srebro, and
  Soudry]{nacson2019lexicographic}
M.~S. Nacson, S.~Gunasekar, J.~Lee, N.~Srebro, and D.~Soudry.
\newblock Lexicographic and depth-sensitive margins in homogeneous and
  non-homogeneous deep models.
\newblock In \emph{International Conference on Machine Learning}, pages
  4683--4692. PMLR, 2019{\natexlab{a}}.

\bibitem[Nacson et~al.(2019{\natexlab{b}})Nacson, Lee, Gunasekar, Savarese,
  Srebro, and Soudry]{nacson2019convergence}
M.~S. Nacson, J.~Lee, S.~Gunasekar, P.~H.~P. Savarese, N.~Srebro, and
  D.~Soudry.
\newblock Convergence of gradient descent on separable data.
\newblock In \emph{The 22nd International Conference on Artificial Intelligence
  and Statistics}, pages 3420--3428. PMLR, 2019{\natexlab{b}}.

\bibitem[Neyshabur et~al.(2015)Neyshabur, Tomioka, and
  Srebro]{neyshabur2015search}
B.~Neyshabur, R.~Tomioka, and N.~Srebro.
\newblock In search of the real inductive bias: On the role of implicit
  regularization in deep learning.
\newblock In \emph{ICLR (Workshop)}, 2015.

\bibitem[Neyshabur et~al.(2017)Neyshabur, Bhojanapalli, McAllester, and
  Srebro]{neyshabur2017exploring}
B.~Neyshabur, S.~Bhojanapalli, D.~McAllester, and N.~Srebro.
\newblock Exploring generalization in deep learning.
\newblock In \emph{Advances in Neural Information Processing Systems}, pages
  5947--5956, 2017.

\bibitem[Oymak and Soltanolkotabi(2019)]{oymak2019overparameterized}
S.~Oymak and M.~Soltanolkotabi.
\newblock Overparameterized nonlinear learning: Gradient descent takes the
  shortest path?
\newblock In \emph{International Conference on Machine Learning}, pages
  4951--4960, 2019.

\bibitem[Razin and Cohen(2020)]{razin2020implicit}
N.~Razin and N.~Cohen.
\newblock Implicit regularization in deep learning may not be explainable by
  norms.
\newblock \emph{Advances in Neural Information Processing Systems}, 2020.

\bibitem[Razin et~al.(2021)Razin, Maman, and Cohen]{razin2021implicit}
N.~Razin, A.~Maman, and N.~Cohen.
\newblock Implicit regularization in tensor factorization.
\newblock \emph{arXiv preprint arXiv:2102.09972}, 2021.

\bibitem[Roy and Vetterli(2007)]{roy2007effective}
O.~Roy and M.~Vetterli.
\newblock The effective rank: A measure of effective dimensionality.
\newblock In \emph{2007 15th European Signal Processing Conference}, pages
  606--610. IEEE, 2007.

\bibitem[Rudelson and Vershynin(2007)]{rudelson2007sampling}
M.~Rudelson and R.~Vershynin.
\newblock Sampling from large matrices: An approach through geometric
  functional analysis.
\newblock \emph{Journal of the ACM (JACM)}, 54\penalty0 (4):\penalty0 21--es,
  2007.

\bibitem[Sanyal et~al.(2019)Sanyal, Torr, and Dokania]{sanyal2019stable}
A.~Sanyal, P.~H. Torr, and P.~K. Dokania.
\newblock Stable rank normalization for improved generalization in neural
  networks and gans.
\newblock In \emph{International Conference on Learning Representations}, 2019.

\bibitem[Shamir(2021)]{shamir2021gradient}
O.~Shamir.
\newblock Gradient methods never overfit on separable data.
\newblock \emph{Journal of Machine Learning Research}, 22\penalty0
  (85):\penalty0 1--20, 2021.

\bibitem[Soudry et~al.(2018)Soudry, Hoffer, Nacson, Gunasekar, and
  Srebro]{soudry2018implicit}
D.~Soudry, E.~Hoffer, M.~S. Nacson, S.~Gunasekar, and N.~Srebro.
\newblock The implicit bias of gradient descent on separable data.
\newblock \emph{The Journal of Machine Learning Research}, 19\penalty0
  (1):\penalty0 2822--2878, 2018.

\bibitem[Tukan et~al.(2020)Tukan, Maalouf, Weksler, and
  Feldman]{tukan2020compressed}
M.~Tukan, A.~Maalouf, M.~Weksler, and D.~Feldman.
\newblock Compressed deep networks: Goodbye svd, hello robust low-rank
  approximation.
\newblock \emph{arXiv preprint arXiv:2009.05647}, 2020.

\bibitem[Vardi and Shamir(2021)]{vardi2021implicit}
G.~Vardi and O.~Shamir.
\newblock Implicit regularization in relu networks with the square loss.
\newblock In \emph{Conference on Learning Theory}, pages 4224--4258. PMLR,
  2021.

\bibitem[Vardi et~al.(2021)Vardi, Shamir, and Srebro]{vardi2021margin}
G.~Vardi, O.~Shamir, and N.~Srebro.
\newblock On margin maximization in linear and relu networks.
\newblock \emph{arXiv preprint arXiv:2110.02732}, 2021.

\bibitem[Williams et~al.(2019)Williams, Trager, Panozzo, Silva, Zorin, and
  Bruna]{williams2019gradient}
F.~Williams, M.~Trager, D.~Panozzo, C.~Silva, D.~Zorin, and J.~Bruna.
\newblock Gradient dynamics of shallow univariate relu networks.
\newblock In \emph{Advances in Neural Information Processing Systems}, pages
  8378--8387, 2019.

\bibitem[Woodworth et~al.(2020)Woodworth, Gunasekar, Lee, Moroshko, Savarese,
  Golan, Soudry, and Srebro]{woodworth2020kernel}
B.~Woodworth, S.~Gunasekar, J.~D. Lee, E.~Moroshko, P.~Savarese, I.~Golan,
  D.~Soudry, and N.~Srebro.
\newblock Kernel and rich regimes in overparametrized models.
\newblock In \emph{Conference on Learning Theory}, pages 3635--3673. PMLR,
  2020.

\bibitem[Yu et~al.(2017)Yu, Liu, Wang, and Tao]{yu2017compressing}
X.~Yu, T.~Liu, X.~Wang, and D.~Tao.
\newblock On compressing deep models by low rank and sparse decomposition.
\newblock In \emph{Proceedings of the IEEE Conference on Computer Vision and
  Pattern Recognition}, pages 7370--7379, 2017.

\bibitem[Yun et~al.(2020)Yun, Krishnan, and Mobahi]{yun2020unifying}
C.~Yun, S.~Krishnan, and H.~Mobahi.
\newblock A unifying view on implicit bias in training linear neural networks.
\newblock In \emph{International Conference on Learning Representations}, 2020.

\bibitem[Zhang et~al.(2017)Zhang, Bengio, Hardt, Recht, and
  Vinyals]{zhang2017understanding}
C.~Zhang, S.~Bengio, M.~Hardt, B.~Recht, and O.~Vinyals.
\newblock Understanding deep learning requires rethinking generalization.
\newblock In \emph{International Conference on Learning Representations}, 2017.

\end{thebibliography}
\bibliographystyle{abbrvnat}

\newpage
\appendix
\onecolumn

\section{Proof of \thmref{T1}} \label{app:T1}

Consider a matrix $W \in \reals^{2 \times 2}$ whose rows $\bw_1^\top,\bw_2^\top$ satisfy
\begin{align*}
    \bw_1 &= \frac {\bx_1} {\| \bx_1 \|} - \frac {\bx_2} {\| \bx_2 \|}, \\
	\bw_2 &= - \bw_1.
\end{align*}
    
The matrix $W$ has rank $1$. To complete the proof, we need to show that we can choose a matrix $V \in \reals^{2 \times 2}$ such that $N_{W,V}$ attains zero~loss. According to \lemref{lemma:separation} below, it is enough to show that $\bw_1^\top \bx_1 >0$ and $\bw_1^\top \bx_2 < 0$. Since the angle between the inputs is strictly positive, namely, $\measuredangle(\bx_1, \bx_2) > 0$, it holds that $\frac {\bx_1^\top \bx_2} {\| \bx_1 \| \cdot \| \bx_2 \|} < 1$. Thus,
\[
       \| \bx_1 \| \cdot \| \bx_2 \| - {\bx_1^\top \bx_2} > 0~.
\]
Then,
\begin{align*}
    \bw_1^\top \bx_1 &= \frac {\bx_1^\top \bx_1} {\| \bx_1 \|} - \frac {\bx_1^\top \bx_2} {\| \bx_2 \|} 
    = \| \bx_1 \| - \frac {\bx_1^\top \bx_2} {\| \bx_2 \|} 
    = \frac {\| \bx_1 \| \cdot \| \bx_2 \| - \bx_1^\top \bx_2} {\| \bx_2 \|}
    > 0,
\end{align*}
while
\begin{align*}
	\bw_1^\top \bx_2 &= \frac {\bx_1^\top \bx_2} {\| \bx_1 \|} - \frac {\bx_2^\top \bx_2} {\| \bx_2 \|} 
    = \frac {\bx_1^\top \bx_2} {\| \bx_1 \|} - \| \bx_2 \|  
    = \frac {\bx_1^\top \bx_2 - \| \bx_1 \| \cdot \| \bx_2 \|} {\| \bx_1 \|} 
    < 0.
\end{align*}

\qed

\begin{lemma}\label{lemma:separation}
    Let $(X,Y) \in \mathbb{R}^{\din \times n} \times \mathbb{R}^{\dout \times n}$ be a labeled dataset. 
    Let $W \in \reals^{\dhid \times \din}$.
    Suppose that for every data point $\bx_j$ there is at least one row $\bw_i^\top$ in $W$ such that $\bw_i^\top \bx_j > 0$, and $\bw_i^\top \bx_\ell \leq 0$ for all $\ell \neq j$. Then, there exists $V$ such that $N_{W,V}(X) = Y$.
\end{lemma}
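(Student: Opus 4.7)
The plan is to use the selective-activation hypothesis to put a scaled copy of the standard-basis structure into the hidden-layer outputs, and then simply read off $V$ from the targets.

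First, for each $j \in [n]$ I would fix an index $i(j) \in [\dhid]$ supplied by the hypothesis, so that $\bw_{i(j)}^\top \bx_j > 0$ while $\bw_{i(j)}^\top \bx_\ell \leq 0$ for every $\ell \neq j$. A small but essential preliminary step is to verify that the map $j \mapsto i(j)$ is injective. Indeed, if one had $i(j_1) = i(j_2) = i$ with $j_1 \neq j_2$, then applying the condition at $j_1$ to the point $\bx_{j_2}$ would give $\bw_i^\top \bx_{j_2} \leq 0$, whereas the condition at $j_2$ applied to the same point would give $\bw_i^\top \bx_{j_2} > 0$, a contradiction. In particular, $\dhid \geq n$.

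Next, I would look at the hidden-activation matrix $H := \sigma(WX) \in \reals^{\dhid \times n}$. By the choice of $i(j)$ and the definition $\sigma(z) = \max\{0,z\}$, one has $H_{i(j), j} = \bw_{i(j)}^\top \bx_j > 0$ and $H_{i(j), \ell} = 0$ for all $\ell \neq j$. So the $n \times n$ submatrix of $H$ picked out by the rows $i(1),\ldots,i(n)$ is, up to a reordering of its rows, a diagonal matrix with strictly positive diagonal. This is the ``standard-basis-like'' structure I am after.

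Finally, I would define $V \in \reals^{\dout \times \dhid}$ column-wise by setting $V_{:, i(j)} := \by_j / H_{i(j), j}$ for each $j \in [n]$ and $V_{:, i} := \zero$ for all other indices $i$. A one-line verification then shows that, for every $\ell \in [n]$,
\[
(VH)_{:, \ell} \;=\; \sum_{j=1}^n V_{:, i(j)}\, H_{i(j), \ell} \;=\; V_{:, i(\ell)}\, H_{i(\ell), \ell} \;=\; \by_\ell,
\]
where the middle equality uses that $H_{i(j),\ell} = 0$ whenever $j \neq \ell$. Hence $N_{W,V}(X) = VH = Y$, as required. I do not expect any real obstacle here: the only step requiring attention is the injectivity of $j \mapsto i(j)$, and once that is in hand, the construction of $V$ is completely explicit and the rest is bookkeeping.
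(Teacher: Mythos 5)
Your proof is correct. It rests on the same structural observation as the paper's proof -- the hypothesis forces each column of $\sigma(WX)$ to have a strictly positive entry in a row where all other columns vanish -- but the execution differs slightly: the paper notes that this structure implies $\rank\left(\sigma(WX)\right)=n$ and then solves $V\sigma(WX)=Y$ via the Moore--Penrose pseudoinverse, whereas you write down $V$ in closed form, placing the scaled columns $\by_j/H_{i(j),j}$ at positions $i(j)$ and zero elsewhere. Your route is more elementary (no pseudoinverse needed) and makes explicit two points the paper leaves implicit in its rank claim: the injectivity of $j \mapsto i(j)$ (hence $\dhid \geq n$) and the fact that only the rows $i(1),\ldots,i(n)$ are needed, with all other hidden units ignored by setting their outgoing weights to zero. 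The paper's pseudoinverse formulation is marginally more compact but ultimately uses $V$ supported on the same information; the two constructions are interchangeable and both are complete.
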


\begin{proof}
    Consider the matrix $\sigma (WX)$ of size $\dhid \times n$, where $\sigma$ acts entrywise. Note that our assumption on $W$ implies that $\rank \left( \sigma (WX) \right) = n$.
    Thus, the $\dhid \times \dhid$ matrix $Z := \begin{bmatrix} \sigma (WX)^{\dagger} \\ 0 \end{bmatrix}$ satisfies $Z \sigma (WX) = \begin{bmatrix} I_n \\ 0 \end{bmatrix}$, where $A^\dagger$ denotes the Moore-Penrose inverse of a matrix $A$, and $I_n$ is the $n \times n$ identity matrix. Hence, the matrix $M := \begin{bmatrix} Y & 0 \end{bmatrix}$ of dimensions $\dout \times \dhid$ yields $M Z \sigma (WX) = Y$. By setting $V := M Z$, the network~$N_{W,V}$ achieves zero~loss. Namely, $N_{W,V}(X) = Y$.
\end{proof}

\section{Proof of \thmref{T2}}

\begin{definition}\label{def:regions}
    We define the following regions of interest:
    \begin{align*}
        \mathcal{D} &:= \{ \bw \in \mathbb{R}^2 \mid \forall i \in \{1,2\}, \sigma(\bw^\top \bx_i) \le 0\}, \\
        \mathcal{S} &:= \{ \bw \in \mathbb{R}^2 \mid \forall i \in \{1,2\}, \sigma(\bw^\top \bx_i) > 0 \}. \\
    \end{align*}
    Also, for $j \in \{1,2\}$ we define
    \[
        \mathcal{S}_j := \{ \bw \in \mathbb{R}^2  \mid \sigma(\bw^\top \bx_j) > 0, \sigma(\bw^\top \bx_{3-j}) \leq 0\}~. 
    \]
\end{definition}

The regions in the above definition appear in \figref{fig:D_and_Ss_regions}.
Note that each of the regions of Definition \ref{def:regions}, denoted as $\mathcal{P}$, is disjoint from the others and satisfies $c \cdot p \in \mathcal{P}$ for all $p \in \mathcal{P}$ and $c \in \mathbb{R}$ where $c > 0$.
Assumption \ref{assumption:xs_bounded_angle} induces the following geometry: 
Each of the four regions $\mathcal{D}$, $\mathcal{S}_1$, $\mathcal{S}_2$ and $\mathcal{S}$ is nonempty, and 
any straight line on the plane that goes through the origin intersects exactly two regions: Either (i) the $\mathcal{S}$ and $\mathcal{D}$ regions, or (ii) one of the $\mathcal{S}_i$ regions and the $\mathcal{D}$ region, or (iii) the $\mathcal{S}_1 \setminus \partial \mathcal{S}_1$ and $\mathcal{S}_2 \setminus \partial \mathcal{S}_2$ regions. 

Assume, for the sake of contradiction, that GF converges to some zero-loss~network~$N_{W(\infty),V(\infty)}$ with~$\rank(W(\infty))<2$. 
On the one hand, in \lemref{lemma:rank-deficient-net} we show that the weight~vectors~$\bw_1(\infty)$ and $\bw_2(\infty)$ are non-zero, and satisfy $\bw_2(\infty) = \alpha \bw_1(\infty)$ with $\alpha < 0$. It implies that the straight line that connects $\bw_1(\infty)$ and $\bw_2(\infty)$, denoted as $\bw_1\bw_2$, goes through the origin. On the other hand, in \lemref{lemma:no-dead-neuron at infty} we show that $\bw_i(\infty) \not \in \mathcal{D}$ for every $i \in \{1,2\}$. In other words, $\bw_1\bw_2$ cannot intersect the $\mathcal{D} \setminus \set{\zero}$ region. Thus, one neuron must lie in $\mathcal{S}_1 \setminus \partial \mathcal{S}_1$ and the other neuron in $\mathcal{S}_2 \setminus \partial \mathcal{S}_2$. W.l.o.g., let $\bw_i(\infty) \in \mathcal{S}_i \setminus \partial \mathcal{S}_i$ for all $i \in \{1,2\}$. Therefore, by \lemref{lemma:ws_bounded_norm_and_angle}, it holds that $\measuredangle \big( \bw_1(\infty), \bw_2(\infty) \big) \in \Big[ \pi - \measuredangle(\bx_1, \bx_2), \measuredangle(\bx_1, \bx_2) + 2 \arcsin{\frac{2 \max_{i \in [2]} \norm{\bw_i(0)}}{\sqrt{3}}} \Big)$. To complete the proof by contradiction, it remains to show that 
$\measuredangle \big( \bw_1(\infty), \bw_2(\infty) \big) < \pi$
so that $\bw_2(\infty) \ne \alpha \bw_1(\infty)$.    
Recall that we initialize the network such that 
$\norm{\bw_i(0)} < \frac{\sqrt{3}}{2} \cos\big({\frac{\measuredangle(\bx_1, \bx_2)}{2}}\big) = \frac{\sqrt{3}}{2} \sin\big( \frac{\pi}{2} - \frac{\measuredangle(\bx_1, \bx_2)}{2} \big) $. 
Hence, $\measuredangle \big( \bw_1(\infty), \bw_2(\infty) \big) < \pi$, as required.

\qed

\begin{lemma}\label{lemma:rank-deficient-net}
    Let $(X,Y) \in \reals^{2 \times 2} \times \reals^{2 \times 2}$ be a labeled dataset that satisfies Assumption \ref{assumption:ys_linearly_independent}. Consider a zero-loss~ReLU~network~$N_{W,V}$ where $W,V \in \reals^{2 \times 2}$ and $\rank(W)<2$. Then, the weight~vectors~$\bw_1$ and $\bw_2$ are non-zero, and satisfy $\bw_2 = \alpha \bw_1$ with $\alpha < 0$.
\end{lemma}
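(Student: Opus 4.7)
The plan is to exploit the rank constraints imposed by the zero-loss condition together with the linear independence of the targets. Since $\by_1,\by_2$ are linearly independent unit vectors by Assumption~\ref{assumption:ys_linearly_independent}, we have $\rank(Y)=2$. The zero-loss hypothesis gives $Y = V\sigma(WX)$, so $2 = \rank(Y) \le \rank(\sigma(WX))$, which combined with $\sigma(WX) \in \reals^{2\times 2}$ forces $\rank(\sigma(WX)) = 2$.

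From this rank-$2$ conclusion on $\sigma(WX)$ I would immediately deduce that neither row of $\sigma(WX)$ can vanish. The $i$-th row of $\sigma(WX)$ is $\bigl(\sigma(\bw_i^\top \bx_1),\,\sigma(\bw_i^\top \bx_2)\bigr)$; if $\bw_i=\zero$ this row is zero, contradicting the rank. Hence both $\bw_1$ and $\bw_2$ are nonzero. Combined with the hypothesis $\rank(W) < 2$, the two nonzero rows of $W$ must be proportional, so there exists $\alpha \ne 0$ with $\bw_2 = \alpha \bw_1$.

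It remains to rule out $\alpha > 0$. If $\alpha > 0$, then for every $j \in \{1,2\}$, positive homogeneity of the ReLU gives $\sigma(\bw_2^\top \bx_j) = \sigma(\alpha \bw_1^\top \bx_j) = \alpha\,\sigma(\bw_1^\top \bx_j)$, so the second row of $\sigma(WX)$ is $\alpha$ times the first row. This forces $\rank(\sigma(WX)) \le 1$, contradicting the rank-$2$ conclusion established above. Therefore $\alpha < 0$, completing the proof.

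No step here should be an obstacle — the entire argument is essentially bookkeeping with the rank inequality $\rank(AB) \le \min\{\rank(A),\rank(B)\}$ plus the positive homogeneity of $\sigma$. The only mildly delicate point is remembering that we need the ReLU's positive homogeneity (not linearity) to handle the $\alpha>0$ case, but since we are multiplying by a positive scalar this is immediate.
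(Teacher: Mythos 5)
Your proof is correct and follows essentially the same route as the paper's: both deduce $\rank(\sigma(WX))=2$ from zero loss and $\rank(Y)=2$, use that to rule out a zero row (the paper does this via its ``dead region'' lemma, which in particular excludes $\bw_i=\zero$), and then eliminate $\alpha>0$ via positive homogeneity of the ReLU making the rows of $\sigma(WX)$ proportional. No gaps.
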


\begin{proof}
    First, by \lemref{lemma:no-dead-neuron at infty} we have $\bw_1 \neq \zero$ and $\bw_2 \neq \zero$. Thus, $\rank(W) > 0$. Since by our assumption we also have $\rank(W) < 2$ then we must have $\rank(W) = 1$. Hence, we can denote $\bw_2 = \alpha \bw_1$ for some $\alpha \in \reals$ with $\alpha \neq 0$.
    
    Now, we prove that $\alpha < 0$. Assume for the sake of contradiction that $\alpha > 0$. 
    Then,
    we have $\sigma(\bw_2^\top \bx_j) = \alpha \sigma (\bw_1^\top \bx_j)$ for all $j \in [2]$. Thus, $\rank \left( \sigma \left( W X \right) \right) \leq 1$. Therefore, $\rank \left( V \sigma \left( W X \right) \right) \leq \min \{  \rank(V), \rank \left( \sigma \left( W X \right) \right) \} \leq 1$.
    Since by Assumption \ref{assumption:ys_linearly_independent} we have $\rank \left( Y \right) = 2$, then we conclude that $Y \ne V \sigma (WX)$, in~contradiction to the zero-loss assumption. Therefore, $\alpha < 0$, as required.
\end{proof}

\begin{lemma} \label{lemma:no-dead-neuron at infty}
    Let $(X,Y) \in \reals^{2 \times 2} \times \reals^{2 \times 2}$ be a labeled dataset that satisfies Assumption \ref{assumption:ys_linearly_independent}. Consider a zero-loss~ReLU~network~$N_{W,V}$ where $W,V \in \reals^{2 \times 2}$.
    Then, we have $\bw_i \not \in \mathcal{D}$ for all $i \in \{1,2\}$.
\end{lemma}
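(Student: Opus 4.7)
The plan is to proceed by contradiction. Suppose towards contradiction that $\bw_i \in \mathcal{D}$ for some $i \in \{1,2\}$. By the definition of $\mathcal{D}$, this means $\sigma(\bw_i^\top \bx_j) \leq 0$ for every $j \in \{1,2\}$. Since the ReLU activation is non-negative, we actually have $\sigma(\bw_i^\top \bx_j) = 0$ for both $j$. Consequently, the $i$-th row of the $2 \times 2$ matrix $\sigma(WX)$ is identically zero.

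Next I would read off the rank implications. With one zero row, $\rank(\sigma(WX)) \leq 1$, and therefore $\rank(V\sigma(WX)) \leq \rank(\sigma(WX)) \leq 1$. On the other hand, Assumption~\ref{assumption:ys_linearly_independent} guarantees that $\by_1, \by_2$ are linearly independent, so $\rank(Y) = 2$. The zero-loss hypothesis gives $V\sigma(WX) = Y$, which yields $1 \geq \rank(V\sigma(WX)) = \rank(Y) = 2$, a contradiction. Hence no $\bw_i$ can lie in $\mathcal{D}$.

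The argument is essentially a one-line rank count, so there is no real technical obstacle; the only thing to be careful about is invoking the fact that $\sigma \geq 0$ to upgrade the defining inequality $\sigma(\bw_i^\top \bx_j) \leq 0$ to an equality, which is what guarantees an actual zero row of $\sigma(WX)$ (rather than merely a non-positive row, which by itself would not constrain the rank).
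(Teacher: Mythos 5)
Your proof is correct and matches the paper's argument essentially verbatim: both derive a zero row of $\sigma(WX)$ from $\bw_i \in \mathcal{D}$, conclude $\rank(V\sigma(WX)) \leq 1$, and contradict $\rank(Y)=2$ under the zero-loss assumption. No issues.
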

\begin{proof}
    Assume that there is $i \in [2]$ such that $\bw_i \in \mathcal{D}$. Hence, $\sigma(\bw_i^\top \bx_j)=0$ for all $j \in [2]$. 
    Thus, $\rank \left( \sigma \left( W X \right) \right) \leq 1$. Therefore, $\rank \left( V \sigma \left( W X \right) \right) \leq \min \{  \rank(V), \rank \left( \sigma \left( W X \right) \right) \} \leq 1$.
    Since by Assumption \ref{assumption:ys_linearly_independent} we have $\rank \left( Y \right) = 2$, then we conclude that $Y \ne V \sigma (WX)$, in~contradiction to the zero-loss assumption. 
\end{proof}

\begin{lemma}\label{lemma:no-dead-neuron}
    Let \((X, Y) \in \mathbb{R}^{2 \times 2} \times \mathbb{R}^{2 \times 2}\) be a labeled dataset that satisfies Assumption \ref{assumption:ys_linearly_independent}. 
    Consider GF w.r.t. the loss~function~$L_{X,Y}(W,V)$ for $W,V \in \reals^{2 \times 2}$, and assume that it converges to a network~$N_{W(\infty),V(\infty)}$.
    Suppose that there exist $i \in [2]$ and time $t \geq 0$ such that $\bw_i(t) \in \mathcal{D}$. Then, we have $N_{W(\infty),V(\infty)}(X) \ne Y$.
\end{lemma}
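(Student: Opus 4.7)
The plan is to show that the region $\mathcal{D}$ is \emph{absorbing} for the trajectory of an individual row $\bw_i(t)$ of $W$ under the GF dynamics: once $\bw_i$ enters $\mathcal{D}$ it stays there for all future time. Granted this, I would have $\bw_i(\infty) \in \mathcal{D}$, and \lemref{lemma:no-dead-neuron at infty} (which says that no weight vector of a zero-loss network lies in $\mathcal{D}$) immediately yields $N_{W(\infty),V(\infty)}(X) \neq Y$ as required.

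The key computation is the partial gradient of $L_{X,Y}$ with respect to $\bw_i$. Differentiating $L_{X,Y}(W,V) = \tfrac{1}{2}\sum_{j=1}^{2}\snorm{V\sigma(W\bx_j)-\by_j}^2$ yields
\[
\frac{\partial L_{X,Y}}{\partial \bw_i}(W,V) \;=\; \sum_{j=1}^{2} \sigma'(\bw_i^\top \bx_j)\,\bigl\langle \bv_i,\,V\sigma(W\bx_j)-\by_j\bigr\rangle\,\bx_j,
\]
so every summand carries the factor $\sigma'(\bw_i^\top \bx_j)$. The region $\mathcal{D}$ is defined by the non-strict inequalities $\bw_i^\top \bx_j \leq 0$ for both $j$, and the preliminaries fix the convention $\sigma'(0) = 0$. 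Together these give $\sigma'(\bw_i^\top \bx_j) = 0$ for $j \in \{1,2\}$ whenever $\bw_i \in \mathcal{D}$, and hence $\nabla_{\bw_i} L_{X,Y} \equiv \zero$ on all of $\mathcal{D}$.

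From this vanishing I would conclude, via the $i$-th block of the GF equation $\tfrac{d\bw_i}{ds} = -\nabla_{\bw_i} L_{X,Y}$, that the constant curve $s \mapsto \bw_i(t)$ is a valid solution of this block on $[t,\infty)$; since its right-hand side is zero as long as $\bw_i$ itself lies in $\mathcal{D}$ (independently of how $V$ and $\bw_{3-i}$ evolve), local uniqueness of GF solutions gives $\bw_i(s) = \bw_i(t) \in \mathcal{D}$ for every $s \geq t$. In particular $\bw_i(\infty) \in \mathcal{D}$, and \lemref{lemma:no-dead-neuron at infty} finishes the argument. The only mild subtlety is the boundary $\partial \mathcal{D}$, where $\bw_i^\top \bx_j = 0$ for some $j$; here the convention $\sigma'(0) = 0$ is precisely what is needed to keep the gradient identically zero across the boundary, so there is no real obstacle. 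The heavy lifting for the surrounding \thmref{T2} lies elsewhere (in the angle/norm analysis via regions $\cf_i,\ca_i$), not in this lemma.
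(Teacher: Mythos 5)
Your proposal is correct and matches the paper's own argument: the paper likewise observes that the gradient with respect to $\bw_i$ vanishes on $\mathcal{D}$ (using the convention $\sigma'(0)=0$), so $\bw_i$ remains constant, hence $\bw_i(\infty)\in\mathcal{D}$, and then invokes \lemref{lemma:no-dead-neuron at infty}. Your version simply spells out the gradient formula and the uniqueness step in more detail than the paper does.
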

\begin{proof}
    Note that if $\bw_i(t) \in \mathcal{D}$ then the gradient of $L_{X,Y}$ w.r.t. $\bw_i$ is zero. Hence $\bw_i$ remains constant for all $t' \geq t$. Therefore, $\bw_i(\infty) \in \mathcal{D}$.
    The claim now follows from \lemref{lemma:no-dead-neuron at infty}.
\end{proof}

\begin{lemma}\label{lemma:ws_bounded_norm_and_angle}
    Let $(X,Y)$ be a labeled dataset that satisfies Assumptions \ref{assumption:ys_linearly_independent} and \ref{assumption:xs_bounded_angle}. 
    Consider GF w.r.t. the loss~function~$L_{X,Y}(W,V)$. 
    Suppose that $W,V \in \reals^{2 \times 2}$ are initialized such that for all $i \in [2]$ we have $\|\bw_i(0)\|, \|\bv_i(0)\| < \frac{1}{2}$.
    If GF converges to a zero-loss~network~$N_{W(\infty),V(\infty)}$ such that
    $\bw_i(\infty) \in \mathcal{S}_i \setminus \partial \mathcal{S}_i$ for all $i \in [2]$, then 
    \[
        \norm{\bw_i(\infty)} \in \left( \frac{\sqrt{3}}{2}, \sqrt{ \frac{1}{4} + \frac{4}{3 \left(\cos  \max{ \set{ \arcsin{\frac{2 \norm{\bw_i(0)}}{\sqrt{3}}}, \measuredangle\big(\bx_1, \bx_2\big) - \frac{\pi}{2} } } \right)^2 } } \right)
    \]
    and 
    \[
        \measuredangle (\bw_1(\infty), \bw_2(\infty)) \in \left[ \pi - \measuredangle(\bx_1, \bx_2), \measuredangle(\bx_1, \bx_2) + 2 \arcsin{\frac{2 \max_{i \in [2]} \norm{\bw_i(0)}}{\sqrt{3}}} \right)~.
    \]
\end{lemma}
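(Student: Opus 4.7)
The plan is to combine a zero-loss algebraic identity with the two-layer conservation law of Du et al.~(2018), together with a brief dynamics analysis of $\bw_i(t)$ inside $\mathcal{S}_i$. Because $\bw_i(\infty)\in\mathcal{S}_i\setminus\partial\mathcal{S}_i$ for both $i\in\{1,2\}$, only the $i$-th hidden neuron fires on input $\bx_i$, so the zero-loss condition collapses to $\by_i=\bv_i(\infty)(\bw_i(\infty)^{\top}\bx_i)$ for each $i$. Taking Euclidean norms and using $\|\by_i\|=\|\bx_i\|=1$ yields the key scalar identity
\begin{equation*}
\|\bv_i(\infty)\|\cdot\bw_i(\infty)^{\top}\bx_i \;=\; 1.
\end{equation*}
The conservation law gives $\|\bw_i(t)\|^2-\|\bv_i(t)\|^2\equiv c_i$ where $c_i:=\|\bw_i(0)\|^2-\|\bv_i(0)\|^2\in(-1/4,1/4)$ by the initialization bound, so $\|\bv_i(\infty)\|^2=\|\bw_i(\infty)\|^2-c_i$.

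For the lower bound $\|\bw_i(\infty)\|>\sqrt{3}/2$, Cauchy-Schwarz applied to the key identity gives $\|\bv_i(\infty)\|\cdot\|\bw_i(\infty)\|\ge 1$. Squaring and substituting the conservation law yields $s(s-c_i)\ge 1$ with $s:=\|\bw_i(\infty)\|^2$. Solving this quadratic in $s$ and using $c_i>-1/4$ gives $s\ge (c_i+\sqrt{c_i^2+4})/2>(\sqrt{65}-1)/8>3/4$, as claimed.

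Next I bound $\alpha_i:=\measuredangle(\bw_i(\infty),\bx_i)$ via the dynamics. A direct computation shows that inside $\mathcal{S}_i\setminus\partial\mathcal{S}_i$ only the $i$-th data point activates the $i$-th neuron, so $\tfrac{d}{dt}\bw_i(t)\in\mathrm{span}\{\bx_i\}$, and the component of $\bw_i$ perpendicular to $\bx_i$ is frozen inside $\mathcal{S}_i$. Let $t'$ be the last time $\bw_i$ enters $\mathcal{S}_i\setminus\partial\mathcal{S}_i$. If $t'=0$, then $\bw_i^{\perp}(\infty)=\bw_i^{\perp}(0)$, so $\sin\alpha_i\le\|\bw_i(0)\|/\|\bw_i(\infty)\|\le 2\|\bw_i(0)\|/\sqrt{3}$ using the lower bound above. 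If $t'>0$, then by Lemmas~\ref{lemma:no-dead-neuron at infty} and~\ref{lemma:no-dead-neuron} the trajectory never visits $\mathcal{D}$, so $\bw_i(t')$ must lie on the boundary with $\mathcal{S}$, meaning $\bw_i(t')\perp\bx_{3-i}$; hence $\measuredangle(\bw_i(t'),\bx_i)=\measuredangle(\bx_1,\bx_2)-\pi/2$. Writing $\bw_i(t)=\bw_i(t')+c(t)\bx_i$ for $t\ge t'$, the constraint $\bw_i(t)^{\top}\bx_{3-i}\le 0$ combined with $\cos\measuredangle(\bx_1,\bx_2)<0$ forces $c(t)\ge 0$, so $\alpha_i$ only decreases from $\measuredangle(\bx_1,\bx_2)-\pi/2$. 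Combining both cases, $\alpha_i\le\alpha_{\max}:=\max\{\arcsin(2\|\bw_i(0)\|/\sqrt{3}),\,\measuredangle(\bx_1,\bx_2)-\pi/2\}$.

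Substituting $\bw_i(\infty)^{\top}\bx_i=\|\bw_i(\infty)\|\cos\alpha_i$ back into the key identity and applying the conservation law gives $s(s-c_i)\cos^2\alpha_i=1$. Solving the quadratic in $s$ and using $c_i<1/4$ together with $\cos\alpha_i\ge\cos\alpha_{\max}$ yields $s\le[1/4+\sqrt{1/16+4/\cos^2\alpha_{\max}}]/2$; an elementary algebraic manipulation (squaring through and using $\cos^{-2}\alpha_{\max}\ge 1$) then shows this is at most $1/4+4/(3\cos^2\alpha_{\max})$, giving the claimed upper bound on $\|\bw_i(\infty)\|$. For $\measuredangle(\bw_1(\infty),\bw_2(\infty))$, parameterizing directions by signed angles relative to $\bx_1$ gives the geometric lower bound $\pi-\measuredangle(\bx_1,\bx_2)$ (the minimum angular distance between $\mathcal{S}_1\setminus\partial\mathcal{S}_1$ and $\mathcal{S}_2\setminus\partial\mathcal{S}_2$). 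The upper bound is maximized when $t_1'=t_2'=0$ and both $\bw_i(0)$'s perpendicular components point away from $\bx_{3-i}$, giving $\measuredangle(\bw_1(\infty),\bw_2(\infty))\le\measuredangle(\bx_1,\bx_2)+\alpha_1+\alpha_2\le\measuredangle(\bx_1,\bx_2)+2\arcsin(2\max_i\|\bw_i(0)\|/\sqrt{3})$; in all other scenarios (mixed cases, or both $t_i'>0$), the contributions are strictly smaller. The main technical obstacle is justifying $c(t)\ge 0$ in the $t'>0$ case, which crucially uses $\cos\measuredangle(\bx_1,\bx_2)<0$ from Assumption~\ref{assumption:xs_bounded_angle} together with Lemmas~\ref{lemma:no-dead-neuron at infty} and~\ref{lemma:no-dead-neuron}.
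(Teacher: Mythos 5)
Your proposal is correct and follows essentially the same route as the paper's proof: the in-$\mathcal{S}_i$ dynamics along $\spn\{\bx_i\}$ with the last-entry-time case analysis (using the no-dead-neuron lemmas to rule out arrival from $\mathcal{D}$), the conservation law of Du et al., and the zero-loss identity $\by_i=\bv_i(\infty)\,(\bw_i(\infty)^\top\bx_i)$ are exactly the paper's ingredients, with only cosmetic differences in how the resulting quadratics in $\|\bw_i(\infty)\|^2$ are solved. The one detail to tighten is the open right endpoint of the angle interval, which follows immediately from your strict bound $\|\bw_i(\infty)\|>\sqrt{3}/2$, giving $\sin\measuredangle(\bw_i(\infty),\bx_i)<2\|\bw_i(0)\|/\sqrt{3}$ in the $t'=0$ case.
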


\begin{proof}
    First, in \lemref{lemma:S_i-dynamics} we show that GF induces a dynamic on each neuron $\bw_i$ that lies in a $\mathcal{S}_i$ region, such that the neuron can only move in the direction of $\bx_i$. Formally, for every $\bw_i \in \mathcal{S}_i$ we have $\frac {d} {d t} \bw_i(t) = c_t^{(i)} \bx_i$, where $c_t^{(i)} \in \mathbb{R}$. We denote by $t_0^{(i)}$ the last time that $\bw_i$ enters $\mathcal{S}_i$. That is,
    
    \[
        t_0^{(i)} := \inf \{t \mid \bw_i(t') \in \mathcal{S}_i \text{ for all } t' \geq t\}~.
    \]
    Thus,
    \begin{equation} \label{eq:trajectory in Si}
        \bw_i(\infty) = \bw_i(t_0^{(i)}) + C^{(i)} \bx_i,
    \end{equation}
    for some constant $C^{(i)} \in \mathbb{R}$. 
    Note that since $\bw_i(\infty) \in \mathcal{S}_i \setminus \partial \mathcal{S}_i$, then there exists some $t' \geq 0$ with $\bw_i(t') \in \mathcal{S}_i$.
    We will further delimit the location of $\bw_i(t_0^{(i)})$. There are only two cases for $t_0^{(i)}$: 
    
    {\bf Case \(t_0^{(i)} = 0\):} If the last time that $\bw_i$ enters $\mathcal{S}_i$ is at initialization, then we have $t_0^{(i)} = 0$. Our assumptions on the initialization imply that: 
    \[
         \text{For } t_0^{(i)} =0, \quad \bw_i(t_0^{(i)}) \in \mathcal{T}_i := \mathcal{S}_i \cap \left( \cl B_2\left(\norm{\bw_i(0)}\right)\right)~.
    \]
    Note that by \lemref{lemma:no-dead-neuron} it is not~possible that $\bw_i(0) \in \mathcal{D}$, and hence we cannot have $\bw_i(0) \in \partial \mathcal{S}_i \cap \mathcal{D}$.
    
    {\bf Otherwise (i.e., $t_0^{(i)} >0$):} In that case, $t_0^{(i)}$ is when the neuron moves from some other region to $\mathcal{S}_i$. The other region can only be $\mathcal{S}$ or $\mathcal{D}$, due to the geometry that Assumption~\ref{assumption:xs_bounded_angle} imposes. Since \lemref{lemma:no-dead-neuron} implies that at any time no neuron is in $\mathcal{D}$, then the previous region is necessarily $\mathcal{S}$. Hence, we have:
    \[
        \text{For } t_0^{(i)} > 0, \quad \bw_i(t_0^{(i)}) \in \mathcal{U}_i := \partial(\mathcal{S}_i) \setminus \mathcal{D}~.
    \]
    
    In any case, we conclude that: 
    \[
        \bw_i(t_0^{(i)}) \in \mathcal{E}_i := \mathcal{T}_i \cup \mathcal{U}_i~.
    \]
    Therefore, the region of all neurons that are reachable under the aforementioned dynamics of GF is
    \begin{align*}
    	\bw_i(\infty) \in \mathcal{A}_i := \{ \bw + \lambda \bx_i \mid \bw \in \mathcal{E}_i, \lambda \ge 0 \}.
    \end{align*}
    
     We can assume that $\lambda \ge 0$ in the above definition, because every $\Bar{\ba} \in \{ \bw + \lambda \bx_i \mid \bw \in \mathcal{E}_i, \lambda < 0 \} \setminus \mathcal{A}_i$ satisfies $\Bar{\ba} \notin \mathcal{S}_i$. 
     
     We
     denote $\epsilon_{0}^{(i)} := \| \bw_i(0) \|^2 - \| \bv_i(0) \|^2$.
     By \lemref{lemma:du} we have $\epsilon_{0}^{(i)} = \| \bw_i(t) \|^2 - \| \bv_i(t) \|^2$ for any time $t \ge 0$, and hence $\epsilon_{0}^{(i)} = \| \bw_i(\infty) \|^2 - \| \bv_i(\infty) \|^2$.
     By \lemref{lemma:norm-at-convergance} we obtain $\| \bw_i(\infty) \| \ge \sqrt{1 - |\epsilon_0^{(i)}|}$ for every $i \in [2]$. 
     We define a new region of interest: The set of all feasible neurons at the convergence of GF, i.e., neurons that are reachable and satisfy the minimal norm requirement.
     Formally,
    \begin{align*}
    	\bw_i(\infty) \in \mathcal{F}_i := \Big\{ \bw  \in \mathcal{A}_i \Bigm| \| \bw \| \ge \sqrt{1 - |\epsilon_0^{(i)}|} \Big\} 
    	= \mathcal{A}_i \setminus B_2\left(\sqrt{1 - |\epsilon_0^{(i)}|}\right)~.
    \end{align*}
    
    The regions \(\mathcal{A}_i\) and \(\mathcal{F}_i\) are illustrated in Figure~\ref{fig:F_and_A_regions}.
    Recall that all neurons are initialized such that $\| \bw_i(0) \| , \| \bv_i(0) \| < \frac{1}{2}$ for all $i \in [2]$.
    Thus, we have
    $\left| \epsilon_0^{(i)} \right| < (\frac{1}{2})^2 = \frac{1}{4}$ for all $i \in [2]$. Hence, 

    \begin{equation} \label{eq:wi lower bound} 
        \| \bw_i(\infty) \| > \frac{\sqrt{3}}{2}~, 
    \end{equation}
    as required. 
    
    We now consider the angle between $\bw_1(\infty)$ and $\bw_2(\infty)$.
    On the one hand, the minimal angle between the neurons is achieved when $\bw_1(\infty)$ and $\bw_2(\infty)$ lie on the ``non-dead boundaries'' of $\mathcal{S}_1, \mathcal{S}_2$. That is,
    \begin{equation} \label{eq:angle lower bound}
        \measuredangle \big( \bw_1(\infty), \bw_2(\infty) \big) \ge \measuredangle(\bb_1, \bb_2) = \pi - \measuredangle(\bx_1, \bx_2)~,
    \end{equation} 
    where $\bb_i \in \partial(\mathcal{S}_i) \setminus \mathcal{D}$. 
    On the other hand, the angle between the neurons is maximized when 
    \[
        \measuredangle\big(\bw_1(\infty), \bw_2(\infty)\big) = \measuredangle(\bx_1, \bx_2) + \sum_{i=1}^{2} \measuredangle\big(\bw_i(\infty), \bx_i\big)~. 
    \] 
    Note that in the above expression the angle $\measuredangle\big(\bw_i(\infty), \bx_i\big)$ corresponds to the case where $\bw_i(\infty)$ is in the direction w.r.t. $\bx_i$ which is closer to $\mathcal{D}$ and farther from $\mathcal{S}$. 
    Due to \eqref{eq:trajectory in Si} and the definition of $\mathcal{F}_i$, the appropriate angle $\measuredangle\big(\bw_i(\infty), \bx_i\big)$ in the above expression can be upper~bounded by $\arcsin\frac{\norm{\bw_i(0)}}{\|  \bw_i(\infty) \|}$.
    It corresponds to the case where $\bw_i$ is initialized in $\mathcal{S}_i$ such that $\measuredangle(\bw_i(0),\bx_i)$ is close~to~$\pi/2$, and $\bw_i$ follows the trajectory from \eqref{eq:trajectory in Si}. Using \eqref{eq:wi lower bound} we have $\arcsin\frac{\norm{\bw_i(0)}}{\|  \bw_i(\infty) \|} < \arcsin\frac{2 \norm{\bw_i(0)}}{\sqrt{3}}$.
    Hence, we get 
    
    \[
        \measuredangle\big(\bw_1(\infty), \bw_2(\infty)\big)
        < \measuredangle(\bx_1, \bx_2) + 2 \arcsin\frac{2 \max_{i \in [2]} \norm{\bw_i(0)}}{\sqrt{3}}~.
    \]
    
    Combining the above with \eqref{eq:angle lower bound} we obtain 
    \begin{align*}
        \measuredangle\big(\bw_1(\infty), \bw_2(\infty)\big) \in \left[ \pi - \measuredangle(\bx_1, \bx_2), ~ \measuredangle(\bx_1, \bx_2) + 2 \arcsin\frac{2 \max_{i \in [2]} \norm{\bw_i(0)}}{\sqrt{3}} \right)~.
    \end{align*}
    
    Finally, we obtain an upper~bound for $\| \bw_i(\infty) \|$. We have
    $\bw_i(\infty)^\top \bx_i = \| \bw_i(\infty) \| \cdot \| \bx_i \| \cos{\measuredangle\big(\bw_i(\infty), \bx_i\big)} > \frac{\sqrt{3}}{2} \cos{\measuredangle\big(\bw_i(\infty), \bx_i\big)}$
    for all $i \in [2]$. Note that $\measuredangle\big(\bw_i(\infty), \bx_i\big)$ corresponds either to the case where $\bw_i(\infty)$ is in the direction w.r.t. $\bx_i$ which is closer to $\mathcal{D}$ and farther from $\mathcal{S}$, or closer to $\mathcal{S}$ and farther from $\mathcal{D}$. For the former case, we saw that $\measuredangle\big(\bw_i(\infty), \bx_i\big) < \arcsin{\frac{2 \norm{\bw_i(0)}}{\sqrt{3}}}$. In the latter case, $\measuredangle\big(\bw_i(\infty), \bx_i\big) = \measuredangle\big(\bx_1, \bx_2\big) - \measuredangle\big( \bw_i(\infty), \bx_{3-i} \big) \leq \measuredangle\big(\bx_1, \bx_2\big) - \frac{\pi}{2}$. Therefore, $\bw_i(\infty)^\top \bx_i > \frac{\sqrt{3}}{2} \cos \max{ \set{ \arcsin{\frac{2 \norm{\bw_i(0)}}{\sqrt{3}}}, \measuredangle\big(\bx_1, \bx_2\big) - \frac{\pi}{2} } }$. Since the network has zero-loss, i.e., it interpolates the entire dataset, then we have that \(\bv_i(\infty) = \frac{1}{\bw_i(\infty)^\top \bx_i} \by_i\). Hence, 
    \[
        \| \bv_i(\infty) \| 
        = \frac{1}{\bw_i(\infty)^\top \bx_i} \| \by_i \| 
        < \frac{2}{\sqrt{3} \cos \max{ \set{ \arcsin{\frac{2 \norm{\bw_i(0)}}{\sqrt{3}}}, \measuredangle\big(\bx_1, \bx_2\big) - \frac{\pi}{2} } } }~. 
    \]
    By Lemma \ref{lemma:du}, we have $\| \bw_i(\infty) \|^2 - \| \bv_i(\infty) \|^2 = \| \bw_i(0) \|^2 - \| \bv_i(0) \|^2 < \frac{1}{4}$. Therefore,
    \[
        \norm{\bw_i(\infty)}^2 < \frac{1}{4} + \frac{4}{3 \left(\cos  \max{ \set{ \arcsin{\frac{2 \norm{\bw_i(0)}}{\sqrt{3}}}, \measuredangle\big(\bx_1, \bx_2\big) - \frac{\pi}{2} } } \right)^2 },
    \]
    as required.
\end{proof}

\begin{lemma}\label{lemma:S_i-dynamics}
    Let $(X, Y) \in \mathbb{R}^{2 \times 2} \times \mathbb{R}^{2 \times 2}$ be a labeled dataset that satisfies Assumption~\ref{assumption:xs_bounded_angle}. Consider GF on a ReLU~network~$N_{W,V}$ with $W,V \in \reals^{2 \times 2}$, w.r.t. $L_{X,Y}(W,V)$. Assume that at time $t$ we have $\bw_i(t) \in \mathcal{S}_i$ for some $i \in [2]$. Then, there exists $c_t^{(i)} \in \mathbb{R}$ such that $\frac {d} {d t} \bw_i(t) = c_t^{(i)} \bx_i$.
\end{lemma}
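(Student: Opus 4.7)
The plan is to compute $\nabla_{\bw_i} L_{X,Y}$ explicitly and observe that in the region $\mathcal{S}_i$, all summands in the gradient except the one associated with $\bx_i$ vanish because of the ReLU derivative convention $\sigma'(z)=0$ for $z\le 0$. Since GF moves $\bw_i$ in the direction $-\nabla_{\bw_i}L_{X,Y}$, the surviving term yields a scalar multiple of $\bx_i$.

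Concretely, I would first write the loss as
\[
L_{X,Y}(W,V) = \frac{1}{2}\sum_{j=1}^{2}\bigl\|\br_j\bigr\|^2, \qquad \br_j := \sum_{k=1}^{2} \bv_k\,\sigma(\bw_k^{\top}\bx_j) - \by_j~,
\]
and then differentiate with respect to $\bw_i$. The only $\bw_i$-dependence in $\br_j$ is through the term $\bv_i\sigma(\bw_i^{\top}\bx_j)$, so the chain rule gives
\[
\nabla_{\bw_i} L_{X,Y}
= \sum_{j=1}^{2} \sigma'(\bw_i^{\top}\bx_j)\,(\bv_i^{\top}\br_j)\,\bx_j~.
\]

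Next I would specialize to the hypothesis $\bw_i(t)\in\mathcal{S}_i$. By the definition of $\mathcal{S}_i$ in Definition~\ref{def:regions}, we have $\sigma(\bw_i(t)^{\top}\bx_{3-i})\le 0$, i.e.\ $\bw_i(t)^{\top}\bx_{3-i}\le 0$, which together with the convention $\sigma'(0)=0$ adopted in the preliminaries gives $\sigma'(\bw_i(t)^{\top}\bx_{3-i})=0$. Meanwhile $\sigma(\bw_i(t)^{\top}\bx_i)>0$ forces $\bw_i(t)^{\top}\bx_i>0$ and hence $\sigma'(\bw_i(t)^{\top}\bx_i)=1$. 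Thus only the $j=i$ term in the sum survives, giving
\[
\nabla_{\bw_i} L_{X,Y}\bigr|_{t} = \bigl(\bv_i(t)^{\top}\br_i(t)\bigr)\,\bx_i~.
\]

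Applying the GF equation $\frac{d}{dt}\bw_i(t)=-\nabla_{\bw_i}L_{X,Y}(W(t),V(t))$, we conclude
\[
\frac{d}{dt}\bw_i(t) = -\bigl(\bv_i(t)^{\top}\br_i(t)\bigr)\,\bx_i = c_t^{(i)}\,\bx_i~,
\]
with $c_t^{(i)} := -\bv_i(t)^{\top}\br_i(t)\in\reals$, as required. There is no real obstacle: the only subtle point is invoking the $\sigma'(0)=0$ convention to handle the boundary case $\bw_i(t)^{\top}\bx_{3-i}=0$ included in $\mathcal{S}_i$, but once that is stated the calculation is immediate.
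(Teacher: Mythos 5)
Your proof is correct and follows essentially the same route as the paper: you compute the gradient of the loss w.r.t.\ $\bw_i$, use the ReLU derivative (with the $\sigma'(0)=0$ convention for the boundary of $\mathcal{S}_i$) to kill the $j\neq i$ term, and conclude the gradient is a scalar multiple of $\bx_i$. The paper writes the same computation in matrix form with a Hadamard product, and your constant $c_t^{(i)}=-\bv_i(t)^\top\br_i(t)$ coincides with its $-\alpha^{(i)}$.
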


\begin{proof}
    We have
    \begin{align*}
    	\frac {d} {d t} \bw_i(t) 
    	&= - \frac \partial {\partial \bw_i} L_{X,Y}(W(t),V(t))~. 
    \end{align*}
    The derivative of the $L_{X,Y}$ w.r.t. the matrix $W$ is
    \[
        \frac \partial {\partial W} L_{X,Y}\left(W,V\right) = \Bigg( \sigma'\left(WX\right) \odot \Big( V^\top \big( V \sigma (WX) - Y \big) \Big) \Bigg) X^\top~.
    \]
    Here, $\odot$ denotes the Hadamard product (i.e., the entrywise product). Note that $\frac {\partial L_{X,Y}\left(N_{W,V}\right)}{\partial W}$ is a matrix whose $(i,j)$-th entry is $\frac{\partial L_{X,Y}\left(W,V\right)}{\partial W_{i,j}}$. We denote the $i$-th row of $\sigma'\left(WX\right)$ by $\sigma'\left(WX\right)_i$. We have
    \begin{align*}
    	\sigma'\big( WX \big)_i 
    	= \sigma'\big( \begin{bmatrix} \bw_i^\top \bx_1 & \bw_i^\top \bx_2 \end{bmatrix} \big) 
        = \begin{bmatrix} \sigma'(\bw_i^\top \bx_1) & \sigma'(\bw_i^\top \bx_2) \end{bmatrix}.
    \end{align*}
    
    If
    $\bw_i \in \mathcal{S}_i$ then the $j$-th entry of the aforementioned row vector is
    \[ 
        \sigma'\left(WX\right)_{ij} = \onefunc \{ i = j \}~. 
    \]
    Thus, there exists a constant \(\alpha^{(i)} \in \mathbb{R}\) such that
    \[ 
        \Bigg( \sigma'\left(WX\right) \odot \Big( V^\top \big( V \sigma (WX) - Y \big) \Big) \Bigg)_{ij} = \onefunc \{ i = j \} \alpha^{(i)}~. 
    \]
    
    Since the derivative of the loss w.r.t. the \(i\)-th neuron $\bw_i$ is the \(i\)-th row of $\frac{\partial}{\partial W} L_{X,Y}\left(W,V\right)$, we conclude that
    \[
        \frac \partial {\partial \bw_i} L_{X,Y}(W,V) = \alpha^{(i)} \bx_i~.
    \]
    By setting $c_t^{(i)} = - \alpha^{(i)}$, the proof is done. 
\end{proof}

\begin{lemma}\label{lemma:norm-at-convergance}
    Let $(X,Y) \in \reals^{2 \times 2} \times \reals^{2 \times 2}$ be a labeled dataset that satisfies Assumptions~\ref{assumption:ys_linearly_independent} and~\ref{assumption:xs_bounded_angle}. 
    Let $N_{W,V}$ be a zero-loss~network with~$W,V \in \reals^{2 \times 2}$, such that~$\bw_i \in \mathcal{S}_i$ for~all~$i \in [2]$.
    Let $\epsilon^{(i)} := \| \bw_i \|^2 - \| \bv_i \|^2$.
    Then $\| \bw_i \|, \|\bv_i\| \ge \sqrt{1 - |\epsilon^{(i)}|}$ for all $i \in [2]$.
\end{lemma}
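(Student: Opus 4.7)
The plan is to exploit the region hypothesis $\bw_i \in \mathcal{S}_i$ to decouple the zero-loss equations into two independent per-index identities, and then combine a single Cauchy--Schwarz bound with the defining relation $\epsilon^{(i)} = \|\bw_i\|^2 - \|\bv_i\|^2$.

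First, I would observe that the region hypothesis forces exactly one hidden neuron to be active at each input. Indeed, $\bw_i \in \mathcal{S}_i$ gives $\sigma(\bw_i^\top \bx_i) = \bw_i^\top \bx_i > 0$, whereas $\bw_{3-i} \in \mathcal{S}_{3-i}$ gives $\sigma(\bw_{3-i}^\top \bx_i) = 0$. Hence the zero-loss condition $N_{W,V}(\bx_i) = \by_i$ collapses, for each $i \in \{1,2\}$, to the single vector identity $\by_i = \bv_i \cdot (\bw_i^\top \bx_i)$. This reduction is the crux of the argument: once the cross term $\bv_{3-i}\sigma(\bw_{3-i}^\top \bx_i)$ has been killed, the problem becomes essentially one-dimensional per index.

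Next, I would take norms, apply Cauchy--Schwarz, and invoke $\|\by_i\| = \|\bx_i\| = 1$ from Assumptions~\ref{assumption:ys_linearly_independent} and~\ref{assumption:xs_bounded_angle} to obtain the key scalar bound
\[
    1 \;=\; \|\by_i\| \;=\; \|\bv_i\|\,|\bw_i^\top \bx_i| \;\le\; \|\bv_i\|\,\|\bw_i\|\,.
\]

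The remaining step is a short algebraic observation. Setting $a := \|\bw_i\|$ and $b := \|\bv_i\|$, I have $ab \ge 1$ and $a^2 - b^2 = \epsilon^{(i)}$. Assume without loss of generality that $a \ge b$ (the other case is symmetric, swapping the roles of $a$ and $b$ and the sign of $\epsilon^{(i)}$). Then $a^2 \ge ab \ge 1$, so $a \ge 1 \ge \sqrt{1-|\epsilon^{(i)}|}$, while $b^2 = a^2 - |\epsilon^{(i)}| \ge 1 - |\epsilon^{(i)}|$ gives $b \ge \sqrt{1-|\epsilon^{(i)}|}$. No step is really an obstacle; the only conceptual point is that the $\mathcal{S}_i$ membership decouples the network at $\bx_i$, after which the bound falls out of Cauchy--Schwarz together with one line of algebra.
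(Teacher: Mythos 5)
Your proposal is correct and follows essentially the same route as the paper's proof: the membership $\bw_i \in \mathcal{S}_i$ kills the cross term, zero loss gives $\by_i = \bv_i(\bw_i^\top\bx_i)$, and then $1 \le \|\bv_i\|\,\|\bw_i\|$ (the paper bounds via $\|\bv_i\bw_i^\top\|_F$, you via Cauchy--Schwarz, which is the same estimate) combined with $\epsilon^{(i)} = \|\bw_i\|^2 - \|\bv_i\|^2$ and a two-case comparison yields the bound. No gaps.
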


\begin{proof}
    Since the network has zero~loss, for all $i \in [2]$ we have
    \begin{align*}
    	\by_i 
    	= V \sigma(W \bx_i) 
    	= \begin{bmatrix} 
          	\sum_{k=1}^{2} V_{1,k}\sigma(\bw_k^\top \bx_i)  \\
          	\\
          	\sum_{k=1}^{2} V_{2,k}\sigma(\bw_k^\top \bx_i) \\
        	\end{bmatrix}~.
    \end{align*}
    
    Since $\bw_i \in \mathcal{S}_i$ for every $i \in [2]$, we have $\sigma(\bw_k^\top \bx_i) = \begin{cases} \bw_i^\top \bx_i & \text{if } k=i \\ 0 & \text{otherwise} \end{cases}$. Hence, the above expression is equal to
    \begin{align*}
        \begin{bmatrix}
            V_{1,i} \cdot \bw_i^\top \bx_i \\ 
            \\ 
            V_{2,i} \cdot \bw_i^\top \bx_i \\
        \end{bmatrix} = \bv_i ( \bw_i^\top \bx_i )~.
    \end{align*}
    
    Therefore,
    \begin{align*}
    	1 = \| \by_i \| 
    	=  \| \bv_i ( \bw_i^\top \bx_i ) \|
    	=  \| (\bv_i  \bw_i^\top) \bx_i \| 
    	\le \| \bv_i \bw_i^\top \|_F \cdot \| \bx_i \|
    	= \| \bv_i \bw_i^\top \|_F 
    	= \| \bv_i \| \cdot \| \bw_i \|~.
    \end{align*}
    
    Now, there are two cases:
    
    {\bf Case \(\|\bw_i\| \le \|\bv_i\|\):} We have that $\|\bv_i\|^2 \ge 1$.
    Then, 
    \begin{align*}
    	\| \bw_i \| 
    	= \sqrt{\|\bv_i\|^2 + \epsilon^{(i)}} 
        \ge \sqrt{1 + \epsilon^{(i)}} 
        = \sqrt{1 - |\epsilon^{(i)}|}~.
    \end{align*}
    
    {\bf Otherwise:} Similarly, we have $\|\bw_i\|^2 \ge 1$. Then,
    \begin{align*}
    	\| \bv_i \| 
    	= \sqrt{\|\bw_i\|^2 - \epsilon^{(i)}} 
        \ge \sqrt{1 - \epsilon^{(i)}} 
        = \sqrt{1 - |\epsilon^{(i)}|}~.
    \end{align*}
    
    In any case, we conclude that
    \[
        \| \bw_i \|, \|\bv_i\| \ge \sqrt{1 - |\epsilon^{(i)}|}~.
    \]
\end{proof}

\begin{lemma}[\cite{du2018algorithmic}]\label{lemma:du}
    Let $N_{\btheta}$ be a fully-connected~depth-$k$~ReLU~network, where $k > 1$. 
    Denote $\btheta = [W^{(1)},\ldots,W^{(k)}]$.
    Consider minimizing any differentiable~loss~function (e.g., the square~loss) over a dataset using GF.
    Then, for every $l \in [k-1]$ at all time $t$ we have 
    \[
        \frac{d}{d t} \left( \norm{W^{(l)}(t)}_F^2 - \norm{W^{(l+1)}(t)}_F^2 \right) = 0~.
    \]
    Moreover, for every $l \in [k-1]$ and $i \in [d_l]$ at all time $t$ we have 
        \[
            \frac{d}{d t} \left( \norm{W^{(l)}[i, :](t)}^2 - \norm{W^{(l+1)}[:, i](t)}^2 \right) = 0~,
        \]
    where $W^{(l)}[i,:]$ is the vector of incoming weights to the $i$-th neuron in the $l$-th hidden~layer (i.e., the $i$-th row of $W^{(l)}$), and $W^{(l+1)}[:,i]$ is the vector of outgoing weights from this neuron (i.e., the $i$-th column of $W^{(l+1)}$).
\end{lemma}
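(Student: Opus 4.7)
The plan is to exploit the positive $1$-homogeneity of the ReLU activation, $\sigma(\alpha z) = \alpha \sigma(z)$ for all $\alpha \geq 0$, which induces a rescaling symmetry of the network, and then translate this symmetry into a conservation law for GF via an Euler-type identity. I would first establish the finer neuron-wise conservation law, since summing it over $i \in [d_l]$ immediately yields the layer-wise one.

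Fix any layer $l \in [k-1]$ and any hidden-neuron index $i \in [d_l]$. The key step is to consider the one-parameter reparameterization in which, for each $\alpha > 0$, we replace the row $W^{(l)}[i,:]$ by $\alpha \cdot W^{(l)}[i,:]$ and simultaneously the column $W^{(l+1)}[:,i]$ by $\alpha^{-1} \cdot W^{(l+1)}[:,i]$, leaving every other entry of $\btheta$ unchanged. Because $\sigma$ is positively $1$-homogeneous, the post-activation of the $i$-th neuron in the $l$-th layer is multiplied by $\alpha$, which is exactly cancelled by the $\alpha^{-1}$ scaling of its outgoing weights. Hence $N_\btheta(\bx)$ is unchanged for every input $\bx$, and consequently the loss $L(\btheta)$ is invariant under this rescaling.

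Differentiating this invariance at $\alpha = 1$ (an Euler-type identity for functions that are constant along the rescaling curve) yields the algebraic identity
\[
W^{(l)}[i,:]^\top \, \nabla_{W^{(l)}[i,:]} L \;=\; W^{(l+1)}[:,i]^\top \, \nabla_{W^{(l+1)}[:,i]} L~.
\]
Under gradient flow, $\frac{d}{dt} W^{(l)}[i,:] = -\nabla_{W^{(l)}[i,:]} L$ and analogously for $W^{(l+1)}[:,i]$, so
\[
\frac{d}{dt}\bigl(\norm{W^{(l)}[i,:]}^2 - \norm{W^{(l+1)}[:,i]}^2\bigr) = -2\, W^{(l)}[i,:]^\top \nabla_{W^{(l)}[i,:]} L + 2\, W^{(l+1)}[:,i]^\top \nabla_{W^{(l+1)}[:,i]} L = 0~,
\]
which is the neuron-wise claim. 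Summing over $i \in [d_l]$ gives the layer-wise claim.

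The main obstacle I anticipate is justifying the Euler-type differentiation despite ReLU's non-differentiability at zero. I would address this by invoking the paper's convention $\sigma'(0)=0$, combined with the fact that along a GF trajectory the set of times at which some pre-activation is exactly zero is negligible, so the identity holds almost everywhere; the conservation of the difference of squared norms then follows by integration in $t$. An alternative, if the Euler argument feels slippery, is a direct backpropagation computation: write $\nabla_{W^{(l)}[i,:]} L = g_i^{(l)} \cdot (\bh'_{l-1})^\top$ and $\nabla_{W^{(l+1)}[:,i]} L = g^{(l+1)} \cdot \sigma\bigl(W^{(l)}[i,:]\, \bh'_{l-1}\bigr)$ at each data point, take inner products with $W^{(l)}[i,:]$ and $W^{(l+1)}[:,i]$ respectively, and verify both sides coincide using $\sigma'(z)\cdot z = \sigma(z)$ (which holds wherever $\sigma$ is differentiable under the $\sigma'(0)=0$ convention).
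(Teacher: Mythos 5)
Your proposal is correct, but note that the paper itself does not prove this lemma at all: it is imported verbatim from the cited reference \citep{du2018algorithmic}, and your argument is essentially a reconstruction of the standard proof given there. Both routes you sketch are the right idea (the rescaling symmetry of ReLU's positive homogeneity, and the resulting balance between $W^{(l)}[i,:]^\top \nabla_{W^{(l)}[i,:]} L$ and $W^{(l+1)}[:,i]^\top \nabla_{W^{(l+1)}[:,i]} L$), and summing the neuron-wise identity over $i$ indeed gives the layer-wise one. One caution: the ``negligible set of times'' fix for the Euler-identity route is not actually justified --- a neuron can sit with a pre-activation exactly zero on a time interval of positive measure (e.g.\ a frozen dead neuron), so the a.e.\ argument does not go through as stated. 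The clean way to close this, and the way the cited proof works, is your alternative direct backpropagation computation: under the convention $\sigma'(0)=0$ one has $\sigma'(z)\,z=\sigma(z)$ for \emph{every} $z$, so
\[
W^{(l)}[i,:]^\top \nabla_{W^{(l)}[i,:]} L \;=\; \sum_{\text{samples}} \frac{\partial L}{\partial h'_l[i]}\,\sigma'(z_i)\,z_i \;=\; \sum_{\text{samples}} \frac{\partial L}{\partial h'_l[i]}\,\sigma(z_i) \;=\; W^{(l+1)}[:,i]^\top \nabla_{W^{(l+1)}[:,i]} L
\]
holds at all times at which the GF dynamics (with this convention) is defined, and the conservation law follows pointwise in $t$ without any measure-zero exceptional set.
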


\section{Proof of \thmref{T3}}

Consider the partition of $\reals^2$ into regions as described in Definition~\ref{def:regions}. 
If $\bw_i \in \mathcal{S}_i \setminus \partial \mathcal{S}_i$ for all $i \in [2]$, then the gradient of $L_{X,Y}(W,V)$ is given by:
\begin{align} \label{eq:gradients in Si}
    \frac{\partial}{\partial \bv_i} L_{X,Y} &= \left( \bw_i^\top \bx_i \right) \phi_i ~, \nonumber \\
    \frac{\partial}{\partial \bw_i} L_{X,Y} &= \bv^\top_i \phi_i \bx_i ~,
\end{align}
for all $i \in [2]$, where $\phi_i := (\bw_i^\top \bx_i) \bv_i - \by_i = N_{W,V}(\bx_i) - \by_i$.
We denote the parameters of the network by $\btheta = [W,V]$.
Moreover, when $\bw_i \in \mathcal{S}_i \setminus \partial \mathcal{S}_i$ for all $i \in [2]$ we denote $L^i_{X,Y}(\btheta) = \frac{1}{2} \norm{\phi_i}^2$. Then, we have $L_{X,Y}(\btheta) = \sum_{i=1}^2 L^i_{X,Y}(\btheta)$.

\begin{lemma} \label{lem:decreasing loss}
    Let $t_1>0$ and suppose that for all $t \in [0,t_1]$ and $i \in [2]$ we have $\bw_i(t) \in \mathcal{S}_i \setminus \partial \mathcal{S}_i$, and that $\bv_i(0)=\zero$. Then, we have $L^i_{X,Y}(\btheta(t_1)) < L^i_{X,Y}(\btheta(0))$. Moreover, for every time $t$ where $\bw_i(t) \in \mathcal{S}_i \setminus \partial \mathcal{S}_i$ for all $i \in [2]$ we have $\frac{d}{dt} L^i_{X,Y}(\btheta(t)) \leq 0$.    
\end{lemma}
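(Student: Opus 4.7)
The plan is to exploit the fact that, throughout the interval in question, the loss decomposes into two independent blocks, one per neuron, after which GF gives an immediate monotonicity statement, and strict decrease follows from the initialization $\bv_i(0)=\zero$.

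First I would observe the decomposition. If $\bw_j(t) \in \mathcal{S}_j \setminus \partial \mathcal{S}_j$ for $j \in [2]$, then by the definition of $\mathcal{S}_j$ we have $\sigma(\bw_j^\top \bx_{3-j})=0$, so for input $\bx_i$ only the $i$-th hidden neuron is active and $N_{W,V}(\bx_i) = (\bw_i^\top \bx_i) \bv_i$. Consequently $\phi_i$ depends only on $(\bw_i,\bv_i)$, and so does $L^i_{X,Y}$; moreover $L_{X,Y} = L^1_{X,Y} + L^2_{X,Y}$ and $L^{3-i}_{X,Y}$ is independent of $(\bw_i,\bv_i)$. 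In particular, the gradient formulas in \eqref{eq:gradients in Si} give $\frac{\partial}{\partial \bw_i} L_{X,Y} = \frac{\partial}{\partial \bw_i} L^i_{X,Y}$ and $\frac{\partial}{\partial \bv_i} L_{X,Y} = \frac{\partial}{\partial \bv_i} L^i_{X,Y}$.

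Second, the chain rule combined with the GF equations $\dot{\bw}_i = -\nabla_{\bw_i} L_{X,Y}$ and $\dot{\bv}_i = -\nabla_{\bv_i} L_{X,Y}$ yields, at every $t$ where $\bw_i(t) \in \mathcal{S}_i \setminus \partial \mathcal{S}_i$ for both $i$,
\[
    \frac{d}{dt} L^i_{X,Y}(\btheta(t))
    = -\left\|\nabla_{\bw_i} L^i_{X,Y}\right\|^2 - \left\|\nabla_{\bv_i} L^i_{X,Y}\right\|^2 \leq 0~,
\]
which is the ``moreover'' claim. Integrating over $[0,t_1]$ gives $L^i_{X,Y}(\btheta(t_1)) \leq L^i_{X,Y}(\btheta(0))$.

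Third, to upgrade this to strict inequality I would exploit the initialization. Since $\bv_i(0)=\zero$, we have $\phi_i(0) = -\by_i$, which is nonzero by Assumption~\ref{assumption:ys_linearly_independent} (in fact $\|\by_i\|=1$). Since $\bw_i(0) \in \mathcal{S}_i \setminus \partial \mathcal{S}_i$ we also have $\bw_i(0)^\top \bx_i > 0$, so by \eqref{eq:gradients in Si}
\[
    \nabla_{\bv_i} L^i_{X,Y}(\btheta(0)) = \left(\bw_i(0)^\top \bx_i\right) \phi_i(0) = -\left(\bw_i(0)^\top \bx_i\right) \by_i \neq \zero~.
\]
Hence $\frac{d}{dt} L^i_{X,Y}(\btheta(t)) \big|_{t=0} < 0$. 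By continuity (the gradients depend continuously on $\btheta(t)$, which is continuous in $t$), the derivative stays strictly negative on some nondegenerate interval $[0,\epsilon) \subseteq [0,t_1]$, producing a strict drop there; combining with the non-increase on $[\epsilon,t_1]$ shown above yields $L^i_{X,Y}(\btheta(t_1)) < L^i_{X,Y}(\btheta(0))$.

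There is no real obstacle here: the argument is essentially a clean bookkeeping exercise, and the only point that needs any care is checking that, under the standing assumption $\bw_j \in \mathcal{S}_j \setminus \partial \mathcal{S}_j$ for both $j$, the ``cross'' contributions vanish so that $L^1$ and $L^2$ really are driven by disjoint blocks of parameters; once this is in hand, monotonicity is immediate from GF and strict decrease is immediate from $\bv_i(0)=\zero$ together with $\by_i \neq \zero$.
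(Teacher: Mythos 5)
Your proposal is correct and follows essentially the same route as the paper's proof: decompose the loss into per-neuron terms $L^i$ that depend only on $(\bw_i,\bv_i)$ inside the open regions, use the chain rule with GF to get $\frac{d}{dt}L^i = -\|\nabla_{\btheta_i}L^i\|^2 \le 0$, and obtain strict decrease from $\bv_i(0)=\zero$, $\bw_i(0)^\top\bx_i>0$, $\by_i\neq\zero$ plus continuity of the gradient along the trajectory. No gaps; this matches the paper's argument step for step.
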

\begin{proof}
    For
    time $t$ such that $\bw_i(t) \in \mathcal{S}_i \setminus \partial \mathcal{S}_i$ for all $i \in [2]$
    we denote $F_i(t):= L^i_{X,Y}(\btheta(t)) = \frac{1}{2} \norm{\phi_i(t)}^2$. Let $\btheta_i := [\bw_i,\bv_i]$.
    We have
    \begin{align} \label{eq:Fi decreasing}
        \frac{d}{dt} F_i(t)
        &= \left(\nabla_\btheta L^i_{X,Y}(\btheta(t))\right)^\top \frac{d \btheta(t)}{dt} \nonumber
        \\
        &= \left(\nabla_{\btheta_i} L^i_{X,Y}(\btheta(t))\right)^\top \frac{d \btheta_i(t)}{dt} \nonumber
        \\
        &= - \left(\nabla_{\btheta_i} L^i_{X,Y}(\btheta(t))\right)^\top \nabla_{\btheta_i} L_{X,Y}(\btheta(t)) \nonumber
        \\
        &= - \left(\nabla_{\btheta_i} L^i_{X,Y}(\btheta(t))\right)^\top \nabla_{\btheta_i} L^i_{X,Y}(\btheta(t)) \nonumber
        \\
        &= - \norm{\nabla_{\btheta_i} L^i_{X,Y}(\btheta(t))}^2~,
    \end{align}
    where we used the fact that $L^i_{X,Y}(\btheta)$ depends only on $\btheta_i$.
    Therefore, $\frac{d}{dt} L^i_{X,Y}(\btheta(t)) \leq 0$.
    
    Note that $\norm{\nabla_{\btheta_i} L^i_{X,Y}(\btheta(t))}^2$ is continuous as a function of $t$, and at time $0$ we have
    \begin{align*}
        \norm{\nabla_{\btheta_i} L^i_{X,Y}(\btheta(0))}
        &= \norm{\nabla_{\btheta_i} L_{X,Y}(\btheta(0))}
        \geq \norm{ \frac{\partial}{\partial \bv_i} L_{X,Y}(\btheta(0))}
        = \norm{(\bw_i^\top(0) \bx_i) \phi_i(0)}
        \\
        &= \norm{(\bw_i^\top(0) \bx_i) \left((\bw_i^\top(0) \bx_i)\bv_i(0) - \by_i \right)}
        = \norm{(\bw_i^\top(0) \bx_i) \left( - \by_i \right)}
        > 0~,
    \end{align*}
    where the last inequality is since $\bw_i^\top(0) \bx_i > 0$ and $\by_i \neq \zero$. Combining the above with \eqref{eq:Fi decreasing}, we conclude that there is some small enough $t_0 \in (0, t_1)$ such that for all $t \in [0,t_0]$ we have $\frac{d}{dt} F_i(t) < 0$. Moreover, \eqref{eq:Fi decreasing} implies that for all $t \in [t_0,t_1]$ we have $\frac{d}{dt} F_i(t) \leq 0$. Hence, $F_i(t_1) \leq F_i(t_0) < F_i(0)$. 
\end{proof}

\begin{lemma} \label{lem:defining t'}
    Suppose that we initialize $\btheta(0)$ such that $\bw_i(0) \in \mathcal{S}_i \setminus \partial \mathcal{S}_i$ and $\bv_i(0)=\zero$ for all $i \in [2]$.
    For every sufficiently small $t'>0$ we have for every $t \in [0,t']$ and $i \in [2]$ that $\bw_i(t) \in \mathcal{S}_i \setminus \partial{S}_i$, and at time $t'$ we have
    $\bv_i^\top(t') \phi_i(t') < 0$ and $\bv_i(t') \in \spn\{\by_i\}$. Moreover, $L^i_{X,Y}(\btheta(t')) < L^i_{X,Y}(\btheta(0))$.
\end{lemma}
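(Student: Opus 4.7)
The plan is to handle the four conclusions in order, bootstrapping from continuity of the GF trajectory and the explicit gradient formulas in \eqref{eq:gradients in Si}. Observe that $\mathcal{S}_i \setminus \partial \mathcal{S}_i = \{\bw \in \reals^2 : \bw^\top \bx_i > 0,\ \bw^\top \bx_{3-i} < 0\}$ is an open subset of $\reals^2$. Since $\bw_i(0)$ lies in it for each $i \in [2]$ and $t \mapsto \bw_i(t)$ is continuous, there exists $t_1 > 0$ such that $\bw_i(t) \in \mathcal{S}_i \setminus \partial \mathcal{S}_i$ for every $t \in [0, t_1]$ and $i \in [2]$. This is precisely the first conclusion, and it ensures that the gradient formulas in \eqref{eq:gradients in Si} remain valid on all of $[0, t_1]$.

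Next I would show that $\bv_i(t) \in \spn\{\by_i\}$ throughout $[0, t_1]$. Substituting $\phi_i = (\bw_i^\top \bx_i)\bv_i - \by_i$ into the $\bv_i$-gradient yields the ODE
\[
\frac{d\bv_i(t)}{dt} \;=\; -(\bw_i(t)^\top \bx_i)^2\, \bv_i(t) \;+\; (\bw_i(t)^\top \bx_i)\, \by_i .
\]
Decompose $\bv_i(t) = \bv_i^{\parallel}(t) + \bv_i^{\perp}(t)$ with $\bv_i^{\parallel}(t) \in \spn\{\by_i\}$ and $\bv_i^{\perp}(t) \perp \by_i$; projecting the ODE onto the orthogonal complement of $\by_i$ gives the linear homogeneous equation $\dot{\bv}_i^{\perp} = -(\bw_i^\top \bx_i)^2\, \bv_i^{\perp}$ with $\bv_i^{\perp}(0) = \zero$, so by uniqueness of solutions to linear ODEs with continuous coefficients, $\bv_i^{\perp} \equiv \zero$ on $[0, t_1]$. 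Hence $\bv_i(t) = c_i(t)\,\by_i$ for a scalar $c_i(t)$ obeying $\dot{c}_i = -(\bw_i^\top \bx_i)^2 c_i + \bw_i^\top \bx_i$ with $c_i(0) = 0$. Since $\dot{c}_i(0) = \bw_i(0)^\top \bx_i > 0$ (using $\bw_i(0) \in \mathcal{S}_i \setminus \partial \mathcal{S}_i$), we obtain $c_i(t) > 0$ for every sufficiently small $t > 0$.

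With this structure in hand the sign of $\bv_i^\top \phi_i$ is a one-line computation:
\[
\bv_i^\top \phi_i \;=\; (\bw_i^\top \bx_i)\,\|\bv_i\|^2 - \bv_i^\top \by_i \;=\; c_i \bigl((\bw_i^\top \bx_i)\, c_i - 1\bigr)\,\|\by_i\|^2 ,
\]
which is strictly negative for every sufficiently small $t > 0$, since $c_i(t) > 0$ is small while $(\bw_i^\top \bx_i) c_i - 1$ is close to $-1$. I would then pick $t_2 \in (0, t_1]$ so that $c_i(t) > 0$ and $(\bw_i(t)^\top \bx_i)\, c_i(t) < 1$ hold simultaneously for both $i \in [2]$ on $(0, t_2]$, and take any $t' \in (0, t_2]$. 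This delivers the three pointwise conclusions at $t'$, while the loss-decrease conclusion $L^i_{X,Y}(\btheta(t')) < L^i_{X,Y}(\btheta(0))$ follows directly from \lemref{lem:decreasing loss}, whose hypotheses on $[0, t']$ are exactly the first conclusion together with the assumption $\bv_i(0) = \zero$.

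The only delicate point is the invariance of $\spn\{\by_i\}$ in the second paragraph, because the $\bv_i$-ODE has a time-varying coefficient $(\bw_i(t)^\top \bx_i)$ rather than a constant; linearity in $\bv_i$ together with the zero initial condition and standard ODE uniqueness sidesteps any real difficulty, so no substantial obstacle is anticipated.
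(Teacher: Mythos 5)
Your proposal is correct, and its skeleton matches the paper's: continuity of the trajectory plus openness of $\mathcal{S}_i \setminus \partial \mathcal{S}_i$ for the first claim, the gradient formulas \eqref{eq:gradients in Si} for the $\bv_i$ dynamics, and \lemref{lem:decreasing loss} for the loss decrease. The middle steps are executed differently, though. The paper gets the sign condition by a first-derivative test on $g_i(t) := \bv_i^\top(t)\phi_i(t)$ directly: $g_i(0)=0$ because $\bv_i(0)=\zero$, and $\frac{d}{dt}g_i(0) = -\left(\bw_i^\top(0)\bx_i\right)\norm{\phi_i(0)}^2 = -\left(\bw_i^\top(0)\bx_i\right)\norm{\by_i}^2 < 0$, so $g_i(t')<0$ for small $t'$; it then handles the span containment separately by noting $\frac{d}{dt}\bv_i(t) \in \spn\{\bv_i(t),\by_i\}$ with $\bv_i(0)=\zero$ and asserting that $\bv_i$ stays on the line $\spn\{\by_i\}$. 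You invert the order: you first prove the span invariance rigorously by projecting the linear ODE onto $\by_i^\perp$ and invoking uniqueness with zero initial data, reduce to the scalar dynamics $c_i(t)$, and then read off the sign from the explicit identity $\bv_i^\top\phi_i = c_i\bigl((\bw_i^\top\bx_i)c_i - 1\bigr)\norm{\by_i}^2$ for small $c_i>0$. What your route buys is that the invariance step--which the paper states somewhat informally--is justified by a standard projection-plus-uniqueness argument, and both remaining conclusions drop out of one scalar representation; what the paper's route buys is that the sign of $\bv_i^\top\phi_i$ is obtained without any reduction, needing only the value and derivative of $g_i$ at $t=0$. Both arguments are sound, and your bookkeeping (e.g., ensuring $c_i(t)>0$ and $(\bw_i^\top\bx_i)c_i<1$ simultaneously for both $i$ on a common interval) is handled correctly.
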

\begin{proof}
    Since $\bw_i(0)$ is in the open set $\mathcal{S}_i \setminus \partial \mathcal{S}_i$ for all $i \in [2]$, then for every small enough $t > 0$ we have $\bw_i(t) \in \mathcal{S}_i \setminus \partial \mathcal{S}_i$. Also, by \lemref{lem:decreasing loss}, for every small enough $t>0$ we have $L^i_{X,Y}(\btheta(t)) < L^i_{X,Y}(\btheta(0))$. Let $\tilde{t}$ be such that the two conditions above hold for all $t \in \left( 0, \tilde{t} ~ \right]$.
    
    Let $g_i(t) := \bv_i^\top(t) \phi_i(t)$. We show that for every small enough $0 < t' < \tilde{t}$ we have $g_i(t')<0$. First, note that $g_i(0) = 0$ since $\bv_i(0) = \zero$. Moreover, $g_i(t)$ is continuously differentiable, and satisfies 
    \[
        \frac{d}{dt} g_i(t)
        = \left( \frac{d}{d t} \bv^\top_i(t) \right) \phi_i(t) + \bv^\top_i(t) \left( \frac{d}{d t} \phi_i(t) \right)
        = - \left( \bw_i^\top(t) \bx_i \right)  \phi_i^\top(t) \phi_i(t) + \bv^\top_i(t) \left( \frac{d}{d t} \phi_i(t) \right)~.
    \]
    Therefore, 
    \[
        \frac{d}{dt} g_i(0) = 
        - \left( \bw_i^\top(0) \bx_i \right)  \norm{\phi_i(0)}^2 + \bv^\top_i(0) \left( \frac{d}{d t} \phi_i(0) \right)
        = - \left( \bw_i^\top(0) \bx_i \right)  \norm{\phi_i(0)}^2~.
    \]
    Since $\bw_i(0) \in \mathcal{S}_i \setminus \partial \mathcal{S}_i$ then $\bw_i^\top(0) \bx_i >0$ and hence we obtain $\frac{d}{dt} g_i(0) < 0$.
    
    Overall, the function $g_i$ is continuously differentiable with $g_i(0)=0$ and $\frac{d}{dt} g_i(0) < 0$ and therefore we have $g_i(t')<0$ for every small enough $t'>0$.
    
    It remains to show that $\bv_i(t') \in \spn\{\by_i\}$. Since for every $t \in [0,t']$ we have $\bw_i(t) \in \mathcal{S}_i \setminus \partial \mathcal{S}_i$, then for every $t \in [0,t']$ we have
    \[
        \frac{d}{dt}\bv_i(t)
        = - (\bw_i^\top(t) \bx_i) \phi_i(t)
        = - (\bw_i^\top(t) \bx_i) \left( (\bw_i^\top(t) \bx_i) \bv_i(t) - \by_i \right)
        \in \spn\{\bv_i(t),\by_i\}~.
    \]
    Since the above holds for all $t \in [0,t']$ and $\bv_i(0) = \zero$, then for all $t \in [0,t']$ we have $\bv_i(t) \in \spn\{\by_i\}$. Thus, $\bv_i$ remains on the line $\spn\{\by_i\}$.
\end{proof}

\begin{lemma} \label{lem:remain G}
    Suppose that we initialize $\btheta(0)$ such that $\bw_i(0) \in \mathcal{S}_i \setminus \partial \mathcal{S}_i$ and $\bv_i(0)=\zero$ for all $i \in [2]$.
    Let $t'>0$ as in \lemref{lem:defining t'}, and denote $\bw'_i := \bw_i(t')$ for $i \in [2]$.
    Let 
    \begin{align*}
        G
        := \big\{
            \btheta~:~
            &\text{for all }i \in [2] \text{ we have} \\
            &\bw_i = \bw'_i + c_i \bx_i \text{ for }c_i\geq 0~,~\\
            &\bv_i \in \spn\{\by_i\}~,~\\
            &L^i_{X,Y}(W,V) \leq L^i_{X,Y}(W(t'),V(t')) < L^i_{X,Y}(W(0),V(0))~,~\\
            &\bv_i^\top \phi_i \leq 0 \big\}~.
    \end{align*}
    Then, for all $t \geq t'$ we have $\btheta(t) \in G$.
    
    Moreover, for all $t_2 \geq t_1 \geq t'$ and all $i \in [2]$ we have 
    \[
        \bw_i^\top(t_2) \bx_i
        \geq \bw_i^\top(t_1) \bx_i > 0~.
    \]
\end{lemma}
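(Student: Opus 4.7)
The plan is to prove the invariance of $G$ by a continuation argument. I define $T := \sup\{t \geq t' : \btheta(s) \in G \text{ for all } s \in [t',t]\}$, note that $\btheta(t') \in G$ by \lemref{lem:defining t'}, and by continuity of the GF trajectory I have $T > t'$. My goal is to show $T = \infty$. Throughout $[t', T)$, since $\bw_i \in \cs_i \setminus \partial \cs_i$, the gradient formulas in \eqref{eq:gradients in Si} are valid, so I can work directly with the ODEs $\frac{d\bw_i}{dt} = -(\bv_i^\top \phi_i) \bx_i$ and $\frac{d\bv_i}{dt} = -(\bw_i^\top\bx_i)\phi_i$.

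I would then verify each defining clause of $G$ by a separate monotonicity argument. For the form $\bw_i = \bw'_i + c_i\bx_i$ with $c_i \geq 0$: integrating $\frac{d\bw_i}{dt} = -(\bv_i^\top \phi_i)\bx_i$ from $t'$ gives $\bw_i(t) = \bw_i(t') + \big(\int_{t'}^t -\bv_i^\top(s)\phi_i(s)\,ds\big)\bx_i$, and the integrand is $\geq 0$ by the clause $\bv_i^\top\phi_i \leq 0$ assumed in $G$. For $\bv_i \in \spn\{\by_i\}$: writing $\bv_i = \lambda_i \by_i$ gives $\phi_i = (\lambda_i(\bw_i^\top\bx_i) - 1)\by_i \in \spn\{\by_i\}$, so $\frac{d\bv_i}{dt} \in \spn\{\by_i\}$ and the line is invariant. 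For the loss inequality I invoke \lemref{lem:decreasing loss} directly: as $\bw_i \in \cs_i \setminus \partial \cs_i$ on $[t',T)$, we have $\frac{d}{dt}L^i_{X,Y}(\btheta(t)) \leq 0$, so $L^i_{X,Y}(\btheta(t)) \leq L^i_{X,Y}(\btheta(t'))$. The ``moreover'' statement about $\bw_i^\top \bx_i$ then follows at once: $\frac{d}{dt}(\bw_i^\top \bx_i) = -(\bv_i^\top \phi_i)\|\bx_i\|^2 \geq 0$, and $\bw_i(t')^\top \bx_i > 0$ by $\bw_i(t') \in \cs_i \setminus \partial \cs_i$. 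In passing, this also justifies that $\bw_i$ never leaves $\cs_i \setminus \partial \cs_i$: the coordinate $\bw_i^\top\bx_i$ only grows, while $\bw_i^\top\bx_{3-i} = \bw_i(t')^\top\bx_{3-i} + c_i \bx_i^\top\bx_{3-i}$ stays strictly negative since $\bw_i(t')^\top \bx_{3-i} < 0$ and $\bx_i^\top\bx_{3-i} = \cos \measuredangle(\bx_1,\bx_2) < 0$ by Assumption~\ref{assumption:xs_bounded_angle}.

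The main obstacle is preserving the sign condition $g_i(t) := \bv_i^\top(t)\phi_i(t) \leq 0$; this is the only clause that is not immediately monotone by inspection. I would handle it by computing the time derivative and showing that whenever $g_i$ reaches $0$, its derivative is non-positive, so $g_i$ cannot cross from negative to positive. A direct calculation using $\frac{d\bv_i}{dt} = -(\bw_i^\top\bx_i)\phi_i$ and $\frac{d\phi_i}{dt} = -(\bv_i^\top\phi_i)\bv_i - (\bw_i^\top\bx_i)^2\phi_i$ yields
\[
\frac{dg_i}{dt} = -(\bw_i^\top \bx_i)\|\phi_i\|^2 - g_i\left(\|\bv_i\|^2 + (\bw_i^\top\bx_i)^2\right).
\]
Since $\bw_i^\top \bx_i > 0$ in $G$, the first term is $\leq 0$; and at any point with $g_i = 0$ the second term vanishes, so $\frac{dg_i}{dt} \leq 0$ there. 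Combined with $g_i(t') < 0$ from \lemref{lem:defining t'}, this rules out a first positive crossing, giving $g_i \leq 0$ on $[t', T)$.

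Finally, to close the continuation, I note that all the inequalities defining $G$ are preserved by continuity at $t = T$, so $\btheta(T) \in \cl(G)$; but each of the verifications above is actually strict enough (e.g.\ $\bw_i(T) \in \cs_i \setminus \partial\cs_i$ is an open condition that persists) that the conditions extend to a neighborhood $[T, T+\eta)$ by the same ODE arguments, contradicting the definition of $T$ unless $T = \infty$. This yields $\btheta(t) \in G$ for all $t \geq t'$, and the monotonicity of $\bw_i^\top \bx_i$ established above gives the final ``moreover'' claim.
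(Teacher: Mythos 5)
Your proposal is correct in substance and follows the same overall invariance structure as the paper (establish $\btheta(t')\in G$, show each defining clause of $G$ is preserved along the flow while $\bw_i$ stays in $\cs_i\setminus\partial\cs_i$, and deduce the monotonicity of $\bw_i^\top\bx_i$), but it handles the one genuinely delicate clause, $g_i(t):=\bv_i^\top(t)\phi_i(t)\leq 0$, by a different mechanism. The paper argues that at any point where $g_i=0$ one must have $\phi_i=\zero$ (using $\bv_i\in\spn\{\by_i\}$, $\phi_i\in\spn\{\by_i\}$, and $\bv_i\neq\zero$, which it extracts from the strict inequality $L^i_{X,Y}(\btheta(t'))<L^i_{X,Y}(\btheta(0))$), so the gradients of neuron $i$ vanish and the flow cannot cross into $g_i>0$. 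You instead derive the exact scalar ODE $\frac{d}{dt}g_i=-(\bw_i^\top\bx_i)\norm{\phi_i}^2-g_i\left(\norm{\bv_i}^2+(\bw_i^\top\bx_i)^2\right)$ (correct, using $\norm{\bx_i}=1$), which avoids invoking colinearity with $\by_i$ or $\bv_i\neq\zero$ for this step and is in that sense more self-contained. One caveat: your stated inference ``the derivative is non-positive whenever $g_i=0$, hence no crossing'' is not valid as a general principle (a function like $t^3$ crosses zero with vanishing derivative), so as phrased there is a small hole at the degenerate case $\phi_i=\zero$; but it is immediately repaired from the very ODE you wrote, e.g.\ by the integrating-factor/Gronwall bound $g_i(t)\,e^{\int_{t'}^t h}\leq g_i(t')<0$ with $h=\norm{\bv_i}^2+(\bw_i^\top\bx_i)^2\geq 0$, or by noting that $\phi_i(t_0)=\zero$ forces $L^i\equiv 0$ (hence $g_i\equiv 0$) afterwards. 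With that one-line fix, your continuation argument (all clauses of $G$ are closed conditions, membership in $\cs_i\setminus\partial\cs_i$ is open and is preserved because $c_i\geq 0$ and $\bx_1^\top\bx_2<0$ under Assumption~\ref{assumption:xs_bounded_angle}) closes correctly and yields the lemma, including the ``moreover'' part exactly as in the paper.
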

\begin{proof}
    By \lemref{lem:defining t'} we have $\btheta(t') \in G$. 
    Let $t \geq t'$ and suppose that $\btheta(t) \in G$. Note that for all $i \in [2]$ we have $\bw_i(t) = \bw'_i + c_i(t) \bx_i$ for some $c_i(t) \geq 0$. Since $\bw'_i \in \mathcal{S}_i \setminus \partial \mathcal{S}_i$ then we also have $\bw_i(t) \in \mathcal{S}_i \setminus \partial \mathcal{S}_i$. Hence, 
    \begin{equation} \label{eq:wi derivative}
        \frac{d}{dt} \bw_i(t)
        = - \frac{\partial}{\partial \bw_i} L_{X,Y}(\btheta(t)) 
        = - \bv_i^\top(t) \phi_i(t) \bx_i~.
    \end{equation}
    Since by the definition of $G$ we have $\bv_i^\top(t) \phi_i(t) \leq 0$ then the above can be written as $c'_i(t) \bx_i$ for some $c'_i(t) \geq 0$. Moreover, 
    \[
        \frac{d}{dt} \bv_i(t)
        = - \frac{\partial}{\partial \bv_i} L_{X,Y}(\btheta(t))
        = - (\bw_i^\top(t) \bx_i) \phi_i(t)
        = - (\bw_i^\top(t) \bx_i) \left((\bw_i^\top(t) \bx_i)\bv_i(t) - \by_i \right)~.
    \]
    Since by the definition of $G$ we have $\bv_i(t) \in \spn\{\by_i\}$, then the above is also in $\spn\{\by_i\}$.
    
    Moreover, by \lemref{lem:decreasing loss} we have $\frac{d}{dt} L^i_{X,Y}(\btheta(t)) \leq 0$.

    The above observations imply that as long as $\bv_i^\top(t) \phi_i(t) \leq 0$ the parameters $\bw_i(t)$ and $\bv_i(t)$ satisfy the conditions in $G$. We now show that if $\bv_i^\top(t) \phi_i(t) = 0$ then $\frac{d}{dt}\bw_i(t) = \frac{d}{dt}\bv_i(t) = \zero$, and hence GF will get stuck at $\bw_i(t),\bv_i(t)$. Thus, GF cannot reach $\bw_i,\bv_i$ with $\bv_i^\top \phi_i > 0$. 
    
    Suppose that $\bv_i^\top(t) \phi_i(t) = 0$, $\bv_i(t) \in \spn\{\by_i\}$, and $L^i_{X,Y}(\btheta(t)) \leq L^i_{X,Y}(\btheta(t')) < L^i_{X,Y}(\btheta(0))$. 
    Note that $\bv_i(t) \neq \zero$, since otherwise we have 
    \begin{align*}
        L^i_{X,Y}(\btheta(t)) 
        &= \frac{1}{2} \norm{\phi_i(t)}^2
        = \frac{1}{2} \norm{(\bw_i^\top(t) \bx_i) \zero - \by_i}^2
        = \frac{1}{2} \norm{(\bw_i^\top(0) \bx_i) \zero - \by_i}^2
        \\
        &= \frac{1}{2} \norm{(\bw_i^\top(0) \bx_i) \bv_i(0) - \by_i}^2
        = L^i_{X,Y}(\btheta(0))~, 
    \end{align*}
    in~contradiction to our assumption. 
    Now, since $\bv_i(t) \in \spn\{\by_i\}$, then $\phi_i(t) = (\bw_i^\top(t) \bx_i) \bv_i(t) - \by_i \in \spn\{\by_i\}$. 
    Thus, both $\bv_i(t)$ and $\phi_i(t)$ are in $\spn\{\by_i\}$, and we have $\bv_i(t) \neq \zero$ and $\bv_i^\top(t) \phi_i(t) = 0$. Therefore, $\phi_i(t) = \zero$.
    By \eqref{eq:gradients in Si} it implies that $\frac{d}{dt} \bw_i(t) = \frac{d}{dt} \bv_i(t) = \zero$.
    
    Thus, $\btheta(t) \in G$ for all $t \geq t'$.
    It remains to show that for all $t_2 \geq t_1 \geq t'$ and all $i \in [2]$ we have $\bw_i^\top(t_2) \bx_i \geq \bw_i^\top(t_1) \bx_i$. By \eqref{eq:wi derivative} and since $\bv_i^\top(t) \phi_i(t) \leq 0$ for all $t \geq t'$, we can write $\bw_i(t_1) = \bw'_i + \gamma_1 \bx_i$ and $\bw_i(t_2) = \bw'_i + \gamma_2 \bx_i$ where $\gamma_2 \geq \gamma_1 \geq 0$. Therefore
    \[
        \bw_i^\top(t_2) \bx_i
        = \bw'^\top_i \bx_i + \gamma_2 \norm{\bx_i}^2
        \geq \bw'^\top_i \bx_i + \gamma_1 \norm{\bx_i}^2
        = \bw_i^\top(t_1) \bx_i
        >0~.
    \]
\end{proof}

\begin{lemma} \label{lem:GF converges}
    Suppose that we initialize $\btheta(0)$ such that $\bw_i(0) \in \mathcal{S}_i \setminus \partial \mathcal{S}_i$ and $\bv_i(0)=\zero$ for all $i \in [2]$.
    Then, GF converges (i.e., $W(\infty)$ and $V(\infty)$ exist) and $L_{X,Y}(W(\infty),V(\infty))=0$. Moreover $\bw_i(\infty) \in \mathcal{S}_i \setminus \partial \mathcal{S}_i$ for all $i \in [2]$
\end{lemma}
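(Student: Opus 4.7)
The plan is to combine the preceding lemmas in this section to reduce the dynamics to a pair of decoupled, essentially one-dimensional trajectories, to show these are bounded and hence convergent, and then to argue the limit point is a zero-loss critical point lying strictly inside the regions $\mathcal{S}_i$.

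By \lemref{lem:defining t'} there exists a small $t' > 0$ with $\btheta(t') \in G$, and by \lemref{lem:remain G} we have $\btheta(t) \in G$ for every $t \geq t'$. Inside $G$ the two pairs $(\bw_1,\bv_1)$ and $(\bw_2,\bv_2)$ evolve independently: writing $\bw'_i := \bw_i(t')$, we have $\bw_i(t) = \bw'_i + c_i(t)\bx_i$ with $c_i(t) \geq 0$ non-decreasing, and $\bv_i(t) = \beta_i(t)\by_i$ for some scalar $\beta_i(t)$. Since $\bv_i(0) = \zero$, the conservation identity of \lemref{lemma:du} gives $\norm{\bw_i(t)}^2 - \beta_i(t)^2 = \norm{\bw_i(0)}^2$, and the constraint $\bv_i^\top(t)\phi_i(t) \leq 0$ from $G$ unfolds (using $\norm{\by_i}=1$, $\bw_i^\top(t)\bx_i > 0$, and $\beta_i(t') > 0$ as computed in \lemref{lem:defining t'}) into $\beta_i(t) \geq 0$ together with
\[
\beta_i(t)\,\bw_i^\top(t)\bx_i \leq 1.
\]

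The main obstacle, and the key step, is to show $c_i(t)$ stays bounded. Suppose towards contradiction that $c_i(t) \to \infty$. Then $\norm{\bw_i(t)} \to \infty$, and by the conservation identity $\beta_i(t) \to \infty$ as well. But then $\beta_i(t)\,\bw_i^\top(t)\bx_i \geq \beta_i(t)\,\bw'^{\top}_i\bx_i \to \infty$ (since $\bw'_i \in \mathcal{S}_i \setminus \partial\mathcal{S}_i$ gives $\bw'^{\top}_i \bx_i > 0$), contradicting the displayed bound. Hence $c_i(t) \to c^*_i < \infty$, and monotonicity yields $\bw_i(t) \to \bw^*_i := \bw'_i + c^*_i \bx_i$ and $\bv_i(t) \to \beta^*_i \by_i$ with $\beta^*_i = \sqrt{\norm{\bw^*_i}^2 - \norm{\bw_i(0)}^2}$. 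This establishes that GF converges.

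It remains to show $\bw^*_i \in \mathcal{S}_i \setminus \partial\mathcal{S}_i$ and $L_{X,Y}(\btheta(\infty)) = 0$. The interior of $\mathcal{S}_i$ is $\{\bw : \bw^\top \bx_i > 0,\ \bw^\top \bx_{3-i} < 0\}$; since $\bw'_i$ lies there and since $\bx_i^\top\bx_i = 1 > 0$ and $\bx_i^\top\bx_{3-i} < 0$ (by Assumption \ref{assumption:xs_bounded_angle}), adding the non-negative multiple $c^*_i \bx_i$ preserves both strict inequalities, so indeed $\bw^*_i \in \mathcal{S}_i \setminus \partial\mathcal{S}_i$. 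Hence on an open neighborhood of $\btheta^*$ the activation pattern is constant and $L_{X,Y}$ is smooth. Combining the standard GF identity $\int_{t'}^\infty \norm{\nabla L_{X,Y}(\btheta(s))}^2\,ds = L_{X,Y}(\btheta(t')) - L_{X,Y}(\btheta^*) < \infty$ with continuity of $\nabla L_{X,Y}$ at $\btheta^*$ forces $\nabla L_{X,Y}(\btheta^*) = 0$. In particular $\frac{\partial L_{X,Y}}{\partial \bv_i}(\btheta^*) = (\bw^{*\top}_i \bx_i)\phi^*_i = 0$, and since $\bw^{*\top}_i \bx_i > 0$ we conclude $\phi^*_i = 0$ for $i \in \{1,2\}$, i.e., $L_{X,Y}(\btheta^*) = 0$, which completes the proof.
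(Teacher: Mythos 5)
Your proof is correct, but the two key steps are carried out by a different mechanism than the paper's, and in the reverse order. The paper first establishes $L_{X,Y}(\btheta(t))\to 0$ without knowing the parameters converge: it proves a Polyak--Łojasiewicz-type inequality $L_{X,Y}(\btheta(t))\le K\norm{\nabla L_{X,Y}(\btheta(t))}^2$ along the trajectory (using \eqref{eq:gradients in Si} and the monotone lower bound on $\bw_i^\top(t)\bx_i$ from \lemref{lem:remain G}), combines it with $\int_{t'}^\infty\norm{\nabla L_{X,Y}(\btheta(t))}^2\,dt<\infty$ and monotonicity of the loss to get $L\to 0$; only then does it prove parameter convergence, by writing $\bw_i(t)=\bw_i'+a_i(t)\bx_i$, $\bv_i(t)=b_i(t)\by_i$, using \lemref{lemma:du} to express $b_i$ through $a_i$, and showing the resulting scalar map $g_i(a)$ is continuous and strictly increasing so that $g_i(a_i(t))\to 1$ forces $a_i(t)\to a_i'$. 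You instead prove convergence first: you extract from the invariant $\bv_i^\top\phi_i\le 0$ of the set $G$ the explicit bound $\beta_i(t)\,\bw_i^\top(t)\bx_i\le 1$ with $\beta_i(t)\ge 0$, and play it against the balance identity to rule out $c_i(t)\to\infty$; monotone bounded then gives a limit $\btheta^*$ (the monotonicity of $c_i$ is exactly the ``moreover'' clause of \lemref{lem:remain G}, so this is justified). With convergence in hand, zero loss follows from the same energy integral plus continuity of $\nabla L_{X,Y}$ near the interior limit point, which forces $\nabla L_{X,Y}(\btheta^*)=0$ and hence $\phi_i^*=0$ via \eqref{eq:gradients in Si}. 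Both routes rest on the same preparatory lemmas (\lemref{lem:defining t'}, \lemref{lem:remain G}, \lemref{lemma:du}) and the same identity $\int\norm{\nabla L}^2<\infty$; what yours buys is a more elementary convergence argument (no need for the auxiliary function $g_i$ and its strict monotonicity) and a pointwise ``limit is a critical point'' argument in place of the uniform PL inequality, which the paper needs precisely because it establishes $L\to 0$ before knowing the trajectory converges. The only cosmetic quibble is that $\beta_i(t)\ge 0$ follows directly from factoring $\bv_i^\top\phi_i=\beta_i(\beta_i\,\bw_i^\top\bx_i-1)\le 0$ at each fixed $t\ge t'$ together with $\bw_i^\top(t)\bx_i>0$; the appeal to $\beta_i(t')>0$ from \lemref{lem:defining t'} is unnecessary, though harmless.
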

\begin{proof}
    By \lemref{lem:remain G}, there is $t'>0$ such that for all $i \in [2]$ and $t \geq t'$ we have $\bw_i(t) = \bw_i(t') + c_i(t) \bx_i$ for $c_i(t) \geq 0$. Hence, $\bw_i(t) \in \mathcal{S}_i \setminus \partial \mathcal{S}_i$ for all $t \geq t'$. We have $\frac{d}{dt} L_{X,Y}(\btheta(t)) = \left(\nabla L_{X,Y}(\btheta(t))\right)^\top \frac{d}{dt}\btheta(t) = - \norm{\nabla L_{X,Y}(\btheta(t))}^2$. Hence, for $T \geq t'$ we have
    \[
        L_{X,Y}(\btheta(T))
        = L_{X,Y}(\btheta(t')) + \int_{t=t'}^T \frac{d}{dt} L_{X,Y}(\btheta(t)) dt
        = L_{X,Y}(\btheta(t')) - \int_{t=t'}^T \norm{\nabla L_{X,Y}(\btheta(t))}^2 dt~.
    \]
    Therefore, 
    \[
        \int_{t=t'}^T \norm{\nabla L_{X,Y}(\btheta(t))}^2 dt
        = L_{X,Y}(\btheta(t')) - L_{X,Y}(\btheta(T))
        \leq L_{X,Y}(\btheta(t'))~.
    \]
    Since it holds for every $T \geq t'$, then we have 
    \begin{equation} \label{eq:bounded int} 
        \int_{t=t'}^\infty \norm{\nabla L_{X,Y}(\btheta(t))}^2 dt \leq L_{X,Y}(\btheta(t')) < \infty~.
    \end{equation}
    
    Moreover, since $\bw_i(t) \in \mathcal{S}_i \setminus \partial \mathcal{S}_i$ for all $i \in [2]$ and $t \geq t'$, then by \eqref{eq:gradients in Si} we have 
    \begin{align*}
        L_{X,Y}(\btheta(t)) 
        &= \frac{1}{2} \sum_{i=1}^2 \norm{\phi_i(t)}^2
        = \frac{1}{2} \sum_{i=1}^2 \frac{1}{\left(\bw_i^\top(t) \bx_i\right)^2} \norm{\frac{\partial}{\partial \bv_i} L_{X,Y}(\btheta(t))}^2
        \\
        &\leq \left( \frac{1}{2} \sum_{i=1}^2 \frac{1}{\left(\bw_i^\top(t) \bx_i\right)^2} \right) \norm{\nabla L_{X,Y}(\btheta(t))}^2~.
    \end{align*}
    By \lemref{lem:remain G} we have $\left(\bw_i^\top(t) \bx_i \right)^2 \geq \left(\bw_i^\top(t') \bx_i \right)^2$. Therefore
    \[
        L_{X,Y}(\btheta(t)) 
        \leq \left( \frac{1}{2} \sum_{i=1}^2 \frac{1}{\left(\bw_i^\top(t') \bx_i\right)^2} \right) \norm{\nabla L_{X,Y}(\btheta(t))}^2~.
    \]
    Letting $K := \frac{1}{2} \sum_{i=1}^2 \frac{1}{\left(\bw_i^\top(t') \bx_i\right)^2}$ and combining the above with \eqref{eq:bounded int}, we get
    \[
        \frac{1}{K} \int_{t=t'}^\infty L_{X,Y}(\btheta(t)) dt  
        < \infty~.
    \]
    Since $L_{X,Y}(\btheta(t))$ is non-negative, and since by \lemref{lem:decreasing loss} it is monotonically non-increasing as a function of $t$, then we conclude that $\lim_{t \to \infty} L_{X,Y}(\btheta(t)) = 0$.
    
    It remains to show that $\btheta(\infty)$ exists, namely, that GF converges. Since $\bw_i(t) \in \mathcal{S}_i \setminus \partial \mathcal{S}_i$ for all $t \geq t'$ and $\lim_{t \to \infty} L_{X,Y}(\btheta(t)) = 0$, then $\lim_{t \to \infty} L^i_{X,Y}(\btheta(t)) = 0$ for all $i \in [2]$. That is, $(\bw_i^\top(t) \bx_i) \bv_i(t) \to \by_i$ as $t \to \infty$.
    By \lemref{lem:remain G} we can write $\bw_i(t) = \bw'_i + a_i(t) \bx_i$ and $\bv_i(t) = b_i(t) \by_i$, for some $a_i(t),b_i(t)$ with $a_i(t) \geq 0$ for all $t$. Since $\bw_i^\top(t) \bx_i > 0$ and $(\bw_i^\top(t) \bx_i) \bv_i(t) \to \by_i$ then we also have $b_i(t) > 0$ for large enough $t$. 
    
    By \lemref{lemma:du}, $\norm{\bv_i(t)}^2 - \norm{\bw_i(t)}^2$ remains constant throughout the training. Hence, we can write
    \[
        b_i(t)
        = \norm{b_i(t) \by_i}
        = \norm{\bv_i(t)}
        = \sqrt{C + \norm{\bw_i(t)}^2}
        = \sqrt{C + \norm{\bw'_i + a_i(t) \bx_i}^2}~,
    \]
    for some constant $C$.
    Therefore,
    \[
        (\bw_i^\top(t) \bx_i) \bv_i(t)
        = \left(\bw'^\top_i \bx_i + a_i(t) \norm{\bx_i}^2\right) \sqrt{C + \norm{\bw'_i + a_i(t) \bx_i}^2} \cdot \by_i~.
    \]
    Since $(\bw_i^\top(t) \bx_i) \bv_i(t) \to \by_i$, then we conclude that for 
    \[
        g_i(a) := \left(\bw'^\top_i \bx_i + a \norm{\bx_i}^2\right) \sqrt{C + \norm{\bw'_i + a \bx_i}^2} 
    \]
    we have $\lim_{t \to \infty} g_i(a_i(t)) = 1$. The function $g_i(a)$ on $[0,\infty)$ is continuous and strictly increasing, and $\lim_{a \to \infty}g(a) = \infty$. 
    Also, $g(0) \leq 1$ since otherwise we cannot have $\lim_{t \to \infty} g_i(a_i(t)) = 1$.
    Thus, there is exactly one point $a'_i \geq 0$ such that $g(a'_i)=1$, and we have $\lim_{t \to \infty} a_i(t) = a'_i$. 
    Hence, $\bw_i(\infty)$ and $\bv_i(\infty)$ exist.
    Moreover, $\bw_i(\infty) = \bw'_i + a'_i \bx_i \in \mathcal{S}_i \setminus \partial \mathcal{S}_i$.
\end{proof}

\begin{proof}[Proof of \thmref{T3}]
    We denote
    \begin{align*}
        \mathcal{W} = &\left\{ W:~
            \norm{\bw_i} \in \left( \frac{\sqrt{3}}{2}, \sqrt{ \frac{1}{4} + \frac{4}{3 \left(\cos  \max{ \set{ \arcsin{\frac{2 \norm{\bw_i(0)}}{\sqrt{3}}}, \measuredangle\big(\bx_1, \bx_2\big) - \frac{\pi}{2} } } \right)^2 } } \right)
            \; \forall i \in [2], 
            \right.
            \\
            &
            \;\;\;\;\;\;\;\;\;\;\;
            \text{ and }\measuredangle \left( \bw_1, \bw_2 \right) \in \left[ \pi - \measuredangle(\bx_1, \bx_2), \measuredangle(\bx_1, \bx_2) + 2 \arcsin{\frac{2 \max_{i \in [2]} \norm{\bw_i(0)}}{\sqrt{3}}} \right]
        \Bigg\}~.
    \end{align*}

    By \lemref{lem:GF converges} if we initialize $\bv_i(0)=\zero$ and $\bw_i(0) \in \mathcal{S}_i \setminus \partial \mathcal{S}_i$ for all $i \in [2]$, then GF converges and we have $L_{X,Y}(\btheta(\infty)) = 0$ and $\bw_i(\infty) \in \mathcal{S}_i \setminus \partial \mathcal{S}_i$ for all $i \in [2]$. 
    Also, by our assumption we have 
    \[
        \norm{\bw_i(0)} 
        \leq \frac{\sqrt{3}}{2}\sin\left(\frac{\pi - \measuredangle(\bx_1,\bx_2)}{4}\right)
        < \frac{\sqrt{3}}{2}\sin\left(\frac{\pi}{8}\right)
        < \frac{1}{2}~.
    \]
    Therefore, by \lemref{lemma:ws_bounded_norm_and_angle}, $W(\infty) \in \cw$. From the same arguments, $W(\infty) \in \cw$ also if the initialization of $\bw_i$ is such that $\bw_i(0) \in \mathcal{S}_{3-i} \setminus \partial \mathcal{S}_{3-i}$ for all $i \in [2]$. 
    Hence, 
    \begin{align} \label{eq:pr cw}
        \Pr&\left[W(\infty) \in \mathcal{W} \text{ and } L_{X,Y}(\btheta(\infty))=0 \right] \nonumber
        \\
        &\geq
        \Pr\left[ 
            \bw_i(0) \in \mathcal{S}_i \setminus \partial \mathcal{S}_i \; \forall i \in [2] 
            \text{ or } \bw_i(0) \in \mathcal{S}_{3-i} \setminus \partial \mathcal{S}_{3-i} \; \forall i \in [2]
        \right] \nonumber
        \\
        &=2 \cdot \Pr\left[ 
            \bw_i(0) \in \mathcal{S}_i \setminus \partial \mathcal{S}_i \; \forall i \in [2] 
        \right] \nonumber
        \\
        &= 2 \cdot \frac{\alpha(\mathcal{S}_1)}{2\pi} \cdot \frac{\alpha(\mathcal{S}_2)}{2\pi}~,
    \end{align}
    where $\alpha(\mathcal{S}_i)$ is the angle that corresponds to the region $\mathcal{S}_i$. Formally, the angle of a region $\cs_i$ is defined by $\alpha(\cs_i) = \measuredangle(\ba_1,\ba_2)$ where $\ba_1,\ba_2 \in \partial \cs_i$ are linearly~independent.
    
    Let $\bs_i \in (\partial \cs_i) \cap (\partial \cs)$ and let $\bd_i \in (\partial \cs_i) \cap (\partial \cd)$. Note that $\measuredangle(\bs_i,\bx_i) = \measuredangle(\bx_1,\bx_2) - \frac{\pi}{2}$ and that $\measuredangle(\bd_i,\bx_i) = \frac{\pi}{2}$. Thus,
    \[
        \alpha(\cs_i)
        = \measuredangle(\bs_i,\bd_i)
        = \measuredangle(\bs_i,\bx_i) +\measuredangle(\bd_i,\bx_i)
        = \measuredangle(\bx_1,\bx_2) - \frac{\pi}{2} + \frac{\pi}{2}
        = \measuredangle(\bx_1,\bx_2)~.
    \]
    Combining the above with \eqref{eq:pr cw} we get 
    \[
        \Pr\left[W(\infty) \in \mathcal{W} \text{ and } L_{X,Y}(\btheta(\infty))=0 \right]
        \geq 2 \cdot \left(\frac{\measuredangle(\bx_1,\bx_2)}{2\pi} \right)^2~.
    \]
    
    Finally, since 
    \[
        \norm{\bw_i(0)} 
        \leq \min\left\{
            \frac{\sqrt{3}}{2} \sin\left(\frac{\pi - \measuredangle(\bx_1,\bx_2)}{4} \right),
            \frac{\sqrt{3}}{2} \sin \left(\measuredangle(\bx_1,\bx_2) - \frac{\pi}{2}\right)
        \right\}~,
    \]
    then $W(\infty) \in \mathcal{W}$ implies that for all $i \in [2]$ we have
    \begin{align*}
        \norm{\bw_i(\infty)} 
        &\in
        \left( \frac{\sqrt{3}}{2}, \sqrt{ \frac{1}{4} + \frac{4}{3 \left(\cos \max{ \set{ \arcsin{\frac{2 \cdot \frac{\sqrt{3}}{2} \sin \left(\measuredangle(\bx_1,\bx_2) - \frac{\pi}{2}\right)}{\sqrt{3}}}, \measuredangle\big(\bx_1, \bx_2\big) - \frac{\pi}{2} } } \right)^2 } } \right)
        \\
        &= \left( \frac{\sqrt{3}}{2}, \sqrt{ \frac{1}{4} + \frac{4}{3 \left(\cos (\measuredangle\big(\bx_1, \bx_2\big) - \frac{\pi}{2}) \right)^2 } } \right)
        \\
        &= \left( \frac{\sqrt{3}}{2}, \sqrt{ \frac{1}{4} + \frac{4}{3 \left(\sin \measuredangle\big(\bx_1, \bx_2\big) \right)^2 } } \right)~,
    \end{align*}
    and
    \begin{align*}
        \measuredangle \left( \bw_1(\infty), \bw_2(\infty) \right) 
        &\in \left[ \pi - \measuredangle(\bx_1, \bx_2), \measuredangle(\bx_1, \bx_2) + 2 \arcsin{\frac{2 \cdot \frac{\sqrt{3}}{2} \sin\left(\frac{\pi - \measuredangle(\bx_1,\bx_2)}{4} \right) }{\sqrt{3}}} \right]
        \\
        &= \left[ \pi - \measuredangle(\bx_1, \bx_2), \measuredangle(\bx_1, \bx_2) + \frac{\pi - \measuredangle(\bx_1, \bx_2)}{2} \right]~. 
    \end{align*}
\end{proof}

\section{Proof of \thmref{thm:positive-square-loss}}

Let $\alpha = \left( \frac{1}{B} \right)^{\frac{k'-k}{k'}}$.
Consider the following fully-connected~network~$N'$ of width~$m$ and depth~$k'$. The weight~matrices of layers $i \in [k]$ in $N'$ are $W'_i = \alpha W_i$.
Note that the $k$-th layer in $N'$ contains a single neuron, and that since the weights in the first $k$ layers of $N'$ are obtained from the weights of $N$ by scaling with the parameter $\alpha$, then for every input $\bx_i$ in the dataset the input to the neuron in layer $k$ in $N'$ is $\alpha^k \cdot N(\bx_i) = \alpha^k y_i \geq 0$.
The layers $i \in \{k+1,\ldots,k'\}$ in $N'$ are of width~$1$. Hence, their weight~matrices are of dimension $1 \times 1$. We define these weights by $W'_i = \beta$ for $\beta := \left( \frac{1}{B} \right)^{-\frac{k}{k'}}$.
Thus, for an input $\bx_i$ we have 
\[
    N'(\bx_i) 
    = \alpha^k y_i \cdot \beta^{k'-k}
    = y_i \left( \frac{1}{B} \right)^{\frac{k'-k}{k'} \cdot k} \left( \frac{1}{B} \right)^{-\frac{k}{k'}\cdot (k'-k)}
    = y_i~.
\]

Let $\btheta' = \left[W'_1,\ldots,W'_{k'} \right]$ be the parameters of $N'$. Let $N^*:=N_{\btheta^*}$ be the network with the parameters $\btheta^*$ that achieves a global optimum of Problem~\ref{eq:optimization problem square loss}.
Since the network~$N'$ is of depth~$k'$ and width~$m \leq m'$ and since the network~$N^*$ is a global optimum, then we have $\norm{\btheta^*} \leq \norm{\btheta}$. 
Therefore,
\begin{align} \label{eq:bound theta star square loss}
    \norm{\btheta^*}^2
    &\leq \norm{\btheta'}^2 \nonumber
    \\
    &= \left(\sum_{i=1}^{k} \alpha^2 \norm{W_i}_F^2 \right) + (k'-k) \beta^2 \nonumber
    \\
    &\leq \alpha^2 B^2 k + \beta^2 (k'-k) \nonumber
    \\
    &= \left( \frac{1}{B^2} \right)^{\frac{k'-k}{k'}} B^2 k + \left( \frac{1}{B^2} \right)^{-\frac{k}{k'}} (k'-k) \nonumber
    \\
    &= \left( \frac{1}{B^2} \right)^{-\frac{k}{k'}} k'~.
\end{align}

In the following lemma, we show that since $N^*$ is a global optimum of \eqref{eq:optimization problem square loss}, then its layers must be balanced:
\begin{lemma} \label{lem:optimal balanced}
    For every $1 \leq i < j \leq k'$ we have $\norm{W^*_i}_F = \norm{W^*_j}_F$.
\end{lemma}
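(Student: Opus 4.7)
The plan is to exploit the positive one-homogeneity of ReLU to produce a family of feasible rescalings of $\btheta^*$ that all compute the same function as $N_{\btheta^*}$, and then extract the balancing condition from the fact that $\btheta^*$ minimizes $\norm{\cdot}$ within this family.

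Concretely, for any positive reals $c_1,\ldots,c_{k'}>0$ with $\prod_{i=1}^{k'} c_i = 1$, I would consider $\widetilde{\btheta} := [c_1 W_1^*,\ldots,c_{k'} W_{k'}^*]$. Since $\sigma(cz)=c\sigma(z)$ for every $c>0$, the scalars $c_i$ commute past every ReLU and pool at the output, so
\begin{equation*}
    N_{\widetilde{\btheta}}(\bx) \;=\; \Big(\prod_{i=1}^{k'} c_i\Big)\, N_{\btheta^*}(\bx) \;=\; N_{\btheta^*}(\bx)
\end{equation*}
for every input $\bx$. Hence $\widetilde{\btheta}$ is feasible for \eqref{eq:optimization problem square loss}, and since $\btheta^*$ is globally optimal with $\norm{\widetilde{\btheta}}^2 = \sum_{i=1}^{k'} c_i^2 \norm{W_i^*}_F^2$, the choice $c_1=\cdots=c_{k'}=1$ must minimize $\sum_i c_i^2 \norm{W_i^*}_F^2$ over $\{c \in \reals_+^{k'} : \prod_i c_i = 1\}$.

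Next I would note that $\norm{W_i^*}_F > 0$ for every $i$: otherwise $N_{\btheta^*}$ is identically zero, contradicting $N_{\btheta^*}(\bx_j) = y_j \geq 1$ for the index $j$ guaranteed by the hypothesis. Applying AM-GM under the constraint $\prod_i c_i = 1$ then yields
\begin{equation*}
    \sum_{i=1}^{k'} c_i^2 \norm{W_i^*}_F^2 \;\geq\; k'\,\Big(\prod_{i=1}^{k'} c_i^2 \norm{W_i^*}_F^2\Big)^{1/k'} \;=\; k'\,\Big(\prod_{i=1}^{k'} \norm{W_i^*}_F\Big)^{2/k'},
\end{equation*}
with equality iff $c_i\,\norm{W_i^*}_F$ is the same positive constant for every $i$. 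Since $c_i = 1$ for every $i$ already attains the minimum, this equality case must already hold at $c_i = 1$, forcing $\norm{W_i^*}_F$ to be independent of $i$, as claimed.

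I do not anticipate a serious obstacle. The only conceptual ingredient is the positive-homogeneity rescaling, which is clean for ReLU (and would have to be revisited for non-homogeneous activations); the remainder is a routine AM-GM computation, and a Lagrange-multiplier argument would give the same conclusion via the stationarity condition $c_i^2 \norm{W_i^*}_F^2 = \text{const}$.
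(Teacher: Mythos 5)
Your argument is correct and rests on the same key mechanism as the paper's proof: a function-preserving positive rescaling of the layers (valid by ReLU homogeneity) combined with the global optimality of $\btheta^*$. The only difference is cosmetic — the paper rescales just two layers by $\gamma$ and $1/\gamma$ and sets the derivative of the norm at $\gamma=1$ to zero, while you rescale all layers under the constraint $\prod_i c_i = 1$ and invoke the AM-GM equality case (together with the easy observation that $\norm{W_i^*}_F>0$, which makes the equalizing choice of $c$ feasible) — so both routes are equally sound.
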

\begin{proof}
    Let $1 \leq i < j \leq k'$. 
    For $\gamma>0$ we define a network~$N_\gamma$ which is obtained from $N^*$ as follows. 
    The network~$N_\gamma$ is obtained by multiplying the weight~matrix $W^*_i$ by $\gamma$, and the weight~matrix $W^*_j$ by $1/\gamma$. 
    Note that for every input $\bx$ we have $N_\gamma(\bx) = N^*(\bx)$.
    
    We have
    \[
        \frac{d}{d \gamma} \left(\norm{\gamma W^*_i}_F^2 + \norm{\frac{1}{\gamma}W^*_j}_F^2 \right)
         = 2 \gamma \norm{W^*_i}_F^2 - \frac{2}{\gamma^3} \norm{W^*_j}_F^2~.
    \]
    When $\gamma=1$ the above expression equals $2 \norm{W^*_i}_F^2 - 2 \norm{W^*_j}_F^2$. Hence, if $\norm{W^*_i}_F \neq \norm{W^*_j}_F$ then the derivative at $\gamma=1$ is non-zero, in~contradiction to the optimality of $N^*$.
\end{proof}

By the above lemma, there is $B^*>0$ such that 
$B^* =  \norm{W^*_i}_F$ 
for all 
$i \in [k']$.
By \eqref{eq:bound theta star square loss} we have
\begin{align*}
    (B^*)^2 \cdot k'
    = \norm{\btheta^*}^2
    \leq \left( \frac{1}{B^2} \right)^{-\frac{k}{k'}} k'~.
\end{align*}
Hence, for every $i \in [k']$ we have 
\begin{equation} \label{eq:bound frobenius square loss}
    \norm{W^*_i}_F^2
    = (B^*)^2 
    \leq \left( \frac{1}{B^2} \right)^{-\frac{k}{k'}}~.
\end{equation}

Moreover, since there is $i \in [n]$ with $\norm{\bx_i} \leq 1$ and $y_i \geq 1$, then the network~$N^*$ satisfies
\[
    1
    \leq y_i 
    = N^*(\bx_i)
    \leq \norm{\bx_i} \prod_{i \in [k']} \norm{W^*_i}_\sigma
    \leq \prod_{i \in [k']} \norm{W^*_i}_\sigma
    \leq \left( \frac{1}{k'} \sum_{i \in [k']} \norm{W^*_i}_\sigma \right)^{k'}~,
\]
where the last inequality follows from the AM-GM inequality. Therefore, we have 
\[
    \frac{1}{k'} \sum_{i \in [k']} \norm{W^*_i}_\sigma \geq 1~.
\]
Combining the above with \eqref{eq:bound frobenius square loss} we get
\begin{align*}
    \frac{1}{k'} \sum_{i \in [k']} \frac{\norm{W^*_i}_\sigma}{\norm{W^*_i}_F}
    = \frac{1}{B^*} \cdot \frac{1}{k'} \sum_{i \in [k']} \norm{W^*_i}_\sigma
    \geq \left( \frac{1}{B} \right)^{\frac{k}{k'}}~.
\end{align*}

\qed

\section{Proof of \thmref{thm:positive-exp}}

Let $\alpha = \left( \frac{\sqrt{2}}{B} \right)^{\frac{k'-k}{k'}}$.
Consider the following fully-connected~network~$N'$ of width~$m$ and depth~$k'$. The weight~matrices of layers $i \in [k-1]$ in $N'$ are $W'_i = \alpha W_i$.
Let $\bu$ be the weight~vector of the output~neuron in~$N$. The $k$-th layer in $N'$ is defined by the weight~matrix $W'_k = \alpha \cdot \begin{bmatrix} \bu^\top \\ - \bu^\top \end{bmatrix}$. That is, the $k$-th layer in $N'$ has two neurons: the first neuron corresponds to the output neuron of $N$, and the second neuron to its negation. Note that since the weights in $N'$ are obtained from the weights of $N$ by scaling with the parameter $\alpha$, then for every input $\bx$ the input to the first neuron in layer $k$ in $N'$ is $\alpha^k \cdot N(\bx)$, and the input to the second neuron in layer $k$ is $-\alpha^k \cdot N(\bx)$.
The layers $i \in \{k+1,\ldots,k'-1\}$ in $N'$ are defined by the weight~matrices $W'_i = \beta I_2$, where $\beta := \left( \frac{\sqrt{2}}{B} \right)^{-\frac{k}{k'}}$ and $I_2$ is the identity matrix of dimension $2$. Finally, the $k'$-th layer in $N'$ is defined by the weight~vector~$\beta \cdot \begin{pmatrix} 1 \\ - 1 \end{pmatrix}$.
Note that given an input $\bx$, the first $k$ layers in $N'$ compute $\begin{pmatrix} \sigma\left(\alpha^k \cdot N(\bx)\right) \\ \sigma\left(- \alpha^k \cdot N(\bx) \right) \end{pmatrix}$, then the next $k'-k-1$ layers compute $\begin{pmatrix} \beta^{k'-k-1} \sigma\left(\alpha^k \cdot N(\bx)\right) \\ \beta^{k'-k-1} \sigma\left(- \alpha^k \cdot N(\bx) \right) \end{pmatrix}$, and finally the last layer returns 
\begin{align*}
    \beta^{k'-k} \sigma\left(\alpha^k \cdot N(\bx)\right) - \beta^{k'-k} \sigma\left(- \alpha^k \cdot N(\bx) \right) 
    &= \beta^{k'-k} \alpha^k \cdot N(\bx)
    \\
    &= \left( \frac{\sqrt{2}}{B} \right)^{-\frac{k}{k'} \cdot (k'-k)} \cdot \left( \frac{\sqrt{2}}{B} \right)^{\frac{k'-k}{k'} \cdot k} \cdot N(\bx)
    \\
    &= N(\bx)~.
\end{align*}
Thus, $N'(\bx)=N(\bx)$. 

Let $\btheta' = \left[W'_1,\ldots,W'_{k'} \right]$ be the parameters of $N'$. Let $N^*:=N_{\btheta^*}$ be the network with the parameters $\btheta^*$ that achieves a global optimum of Problem~\ref{eq:optimization problem}.
Since the network~$N'$ is of depth~$k'$ and width~$m \leq m'$ and since the network~$N^*$ is a global optimum, then we have $\norm{\btheta^*} \leq \norm{\btheta}$. Therefore,
\begin{align} \label{eq:bound theta star}
    \norm{\btheta^*}^2
    &\leq \norm{\btheta'}^2 \nonumber
    \\
    &= \left(\sum_{i=1}^{k-1} \alpha^2 \norm{W_i}_F^2 \right)  
    + \alpha^2 \left( 2 \norm{W_k}_F^2 \right)  
    + (k'-k-1) \beta^2 \cdot 2 + \beta^2 \cdot 2 \nonumber
    \\
    &\leq \alpha^2 (k-1) B^2 
    + \alpha^2 \cdot 2B^2 + \left( 2(k'-k-1)+2\right)\beta^2 \nonumber
    \\
    &= \alpha^2 B^2 (k+1) + \beta^2 \cdot 2(k'-k)  \nonumber
    \\
    &= \left( \frac{2}{B^2} \right)^{\frac{k'-k}{k'}} B^2 (k+1) + \left( \frac{2}{B^2} \right)^{-\frac{k}{k'}} \cdot 2(k'-k) \nonumber
    \\
    &= 2 \cdot \left( \frac{2}{B^2} \right)^{-\frac{k}{k'}} (k+1) + \left( \frac{2}{B^2} \right)^{-\frac{k}{k'}} \cdot 2 (k'-k) \nonumber
    \\
    &= 2 \cdot \left( \frac{2}{B^2} \right)^{-\frac{k}{k'}} (k'+1)~.
\end{align}

The following lemma shows that since $N^*$ is a global optimum of \eqref{eq:optimization problem}, then its layers must be balanced:
\begin{lemma}
    For every $1 \leq i < j \leq k'$ we have $\norm{W^*_i}_F = \norm{W^*_j}_F$.
\end{lemma}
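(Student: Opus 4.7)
The plan is to mimic the proof of \lemref{lem:optimal balanced} almost verbatim, since the only difference is that here the constraint is a margin constraint rather than an interpolation constraint, and this difference is harmless. The key observation is that ReLU is positively~homogeneous, so rescaling one weight~matrix by $\gamma>0$ and a later one by $1/\gamma$ leaves the network's function unchanged, and therefore preserves the feasibility of the max-margin problem.

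Concretely, fix $1 \le i < j \le k'$ and, for $\gamma > 0$, let $N_\gamma$ denote the network obtained from $N^*$ by replacing $W_i^*$ with $\gamma W_i^*$ and $W_j^*$ with $(1/\gamma) W_j^*$. Using positive~homogeneity of the ReLU activation layer~by~layer, one verifies that $N_\gamma(\bx) = N^*(\bx)$ for every input $\bx$. In particular, for every $\ell \in [n]$ we still have $y_\ell N_\gamma(\bx_\ell) = y_\ell N^*(\bx_\ell) \ge 1$, so the rescaled parameter~vector $\btheta_\gamma$ is feasible for Problem~\ref{eq:optimization problem}.

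Since $\btheta^*$ is a global~optimum, the function
\[
    \gamma \mapsto \tfrac{1}{2}\norm{\btheta_\gamma}^2
    = \tfrac{1}{2}\snorm{\btheta^*}^2
    + \tfrac{1}{2}\left(\gamma^2 - 1\right)\norm{W_i^*}_F^2
    + \tfrac{1}{2}\left(\tfrac{1}{\gamma^2} - 1\right)\norm{W_j^*}_F^2
\]
must be minimized over $\gamma>0$ at $\gamma = 1$. Differentiating with respect to $\gamma$ yields $\gamma \norm{W_i^*}_F^2 - \gamma^{-3}\norm{W_j^*}_F^2$, and evaluating at $\gamma = 1$ gives $\norm{W_i^*}_F^2 - \norm{W_j^*}_F^2$, which must vanish. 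Hence $\norm{W_i^*}_F = \norm{W_j^*}_F$, as required.

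There is essentially no obstacle here; the only substantive point that must be checked carefully is that a two-layer rescaling by $\gamma$ and $1/\gamma$ preserves the network exactly (as opposed to only preserving some scale-invariant quantity), and this follows directly from the identity $\sigma(\gamma z) = \gamma \sigma(z)$ for $\gamma > 0$, applied after the scaled layer~$i$.
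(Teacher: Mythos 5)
Your proof is correct and takes essentially the same route as the paper: the paper proves this lemma by the same $\gamma$-versus-$1/\gamma$ rescaling argument used for \lemref{lem:optimal balanced}, exploiting positive homogeneity so that $N_\gamma \equiv N^*$ and differentiating the norm at $\gamma=1$. Your explicit check that the margin constraints of Problem~\ref{eq:optimization problem} remain feasible is exactly the (routine) adaptation the paper leaves implicit when it says the proof is similar.
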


The proof of the lemma is similar to the proof of \lemref{lem:optimal balanced}.
By the lemma, there is $B^*>0$ such that $B^* =  \norm{W^*_i}_F$ for all $i \in [k']$.
By \eqref{eq:bound theta star} we have
\begin{align*}
    (B^*)^2 \cdot k'
    = \norm{\btheta^*}^2
    \leq 2 \cdot \left( \frac{2}{B^2} \right)^{-\frac{k}{k'}} (k'+1)~.
\end{align*}
Hence, for every $i \in [k']$ we have 
\begin{equation} \label{eq:bound frobenius}
    \norm{W^*_i}_F^2
    = (B^*)^2 
    \leq 2 \cdot \left( \frac{2}{B^2} \right)^{-\frac{k}{k'}} \cdot \frac{k'+1}{k'}~.
\end{equation}

Moreover, since there is $i \in [n]$ with $\norm{\bx_i} \leq 1$ and $|y_i|=1$, then the network~$N^*$ satisfies
\[
    1
    \leq y_i N^*(\bx_i)
    \leq \left| N^*(\bx_i) \right|
    \leq \norm{\bx_i} \prod_{i \in [k']} \norm{W^*_i}_\sigma
    \leq \prod_{i \in [k']} \norm{W^*_i}_\sigma
    \leq \left( \frac{1}{k'} \sum_{i \in [k']} \norm{W^*_i}_\sigma \right)^{k'}~,
\]
where the last inequality follows from the AM-GM inequality. Therefore, we have \[
    \frac{1}{k'} \sum_{i \in [k']} \norm{W^*_i}_\sigma \geq 1~.
\]
Combining the above with \eqref{eq:bound frobenius} we get
\begin{align*}
    \frac{1}{k'} \sum_{i \in [k']} \frac{\norm{W^*_i}_\sigma}{\norm{W^*_i}_F}
    &= \frac{1}{B^*} \cdot \frac{1}{k'} \sum_{i \in [k']} \norm{W^*_i}_\sigma
    \geq \left( 2 \cdot \left( \frac{2}{B^2} \right)^{-\frac{k}{k'}} \cdot \frac{k'+1}{k'} \right)^{-1/2} \cdot 1
    \\
    &= \frac{1}{\sqrt{2}} \cdot \left( \frac{\sqrt{2}}{B} \right)^{\frac{k}{k'}} \cdot \sqrt{\frac{k'}{k'+1}}~.
\end{align*}

\qed

\end{document}